\newcommand{\commentout}[1]{}
\newtheorem{thm}{Theorem}[section]
\newtheorem{prop}[thm]{Proposition}
\newcommand{\nwc}{\newcommand*}
\nwc{\ben}{\begin{equation*}}
\nwc{\bea}{\begin{eqnarray}}
\nwc{\beq}{\begin{eqnarray}}
\nwc{\bean}{\begin{eqnarray*}}
\nwc{\beqn}{\begin{eqnarray*}}
\nwc{\beqast}{\begin{eqnarray*}}
\nwc{\eal}{\end{align}}
\nwc{\een}{\end{equation*}}
\nwc{\eea}{\end{eqnarray}}
\nwc{\eeq}{\end{eqnarray}}
\nwc{\eean}{\end{eqnarray*}}
\nwc{\eeqn}{\end{eqnarray*}}
\theoremstyle{remark}
\nwc{\nn}{\nonumber}
\nwc{\mb}{\mathbf}
\nwc{\ml}{\mathcal}
\newcommand{\lt}{\left}
\newcommand{\rt}{\right}
\nwc{\vep}{\varepsilon}
\nwc{\ep}{\epsilon}
\nwc{\vrho}{\varrho}
\nwc{\orho}{\bar\varrho}
\nwc{\vpsi}{\varpsi}
\nwc{\lamb}{\lambda}
\nwc{\om}{\omega}
\nwc{\Om}{\Omega}
\nwc{\al}{\alpha}
\nwc{\sgn}{\mbox{\rm sgn}}
\nwc{\IA}{\mathbb{A}} %algebraic
\nwc{\bi}{\mathbf{i}}
\nwc{\ba}{\mathbf{a}}
\nwc{\bmb}{\mathbf{b}}
\nwc{\bo}{\mathbf{o}}
\nwc{\IB}{\mathbb{B}}
\nwc{\IC}{\mathbb{C}} %complex
\nwc{\ID}{\mathbb{D}} %Dedekind
\nwc{\IM}{\mathbb{M}} %Dedekind
\nwc{\IP}{\mathbb{P}} %Dedekind
\nwc{\II}{\mathbb{I}} %Dedekind
\nwc{\IE}{\mathbb{E}} %Euklides
\nwc{\IF}{\mathbb{F}} %finite field
\nwc{\IG}{\mathbb{G}} %Gauss
\nwc{\IN}{\mathbb{N}} %natural
\nwc{\IQ}{\mathbb{Q}} %rational
\nwc{\IR}{\mathbb{R}} %real
\nwc{\IT}{\mathbb{T}} %torus
\nwc{\IZ}{\mathbb{Z}} %integers
\nwc{\cE}{{\ml E}}
\nwc{\cP}{{\ml P}}
\nwc{\cQ}{{\ml Q}}
\nwc{\cL}{{\ml L}}
\nwc{\cX}{{\ml X}}
\nwc{\cW}{{\ml W}}
\nwc{\cZ}{{\ml Z}}
\nwc{\cR}{{\ml R}}
\nwc{\cV}{{\ml V}}
\nwc{\cT}{{\ml T}}
\nwc{\crV}{{\ml L}_{(\delta,\rho)}}
\nwc{\cC}{{\ml C}}
\nwc{\cO}{{\ml O}}
\nwc{\cA}{{\ml A}}
\nwc{\cK}{{\ml K}}
\nwc{\cB}{{\ml B}}
\nwc{\cD}{{\ml D}}
\nwc{\cF}{{\ml F}}
\nwc{\cS}{{\ml S}}
\nwc{\cM}{{\ml M}}
\nwc{\cG}{{\ml G}}
\nwc{\cH}{{\ml H}}
\nwc{\bk}{{\mb k}}
\nwc{\bn}{{\mb n}}
\nwc{\bp}{{\mb p}}
\nwc{\bz}{\mb z}
\nwc{\bl}{{\mb l}}
\nwc{\bj}{{\mb j}}
\nwc{\bs}{{\mb s}}
\nwc{\by}{\mathbf{h}}
\nwc{\bZ}{\mathbf{Z}}
\nwc{\bF}{\mathbf{F}}
\nwc{\bE}{\mathbf{E}}
\nwc{\bV}{\mathbf{V}}
\nwc{\bY}{\mathbf Y}
\nwc{\br}{\mb r}
\nwc{\pft}{\cF^{-1}_2}
\nwc{\bU}{{\mb U}}
\nwc{\bG}{{\mb G}}
\nwc{\bg}{\mathbf{g}}
\nwc{\mbf}{\mathbf{f}}
\nwc{\mbe}{\mathbf{e}}
\nwc{\be}{\mathbf{e}}
\nwc{\ind}{\operatorname{I}}
\nwc{\mbx}{\mathbf{f}}
\nwc{\bb}{\mathbf{g}}
\nwc{\xmax}{f_{\rm max}}
\nwc{\xmin}{f_{\rm min}}
\nwc{\suppx}{\hbox{\rm supp} (\mbf)}
\nwc{\cI}{\IZ^2_N}
\nwc{\chis}{{\chi^{\rm s}}}
\nwc{\chii}{{\chi^{\rm i}}}
\nwc{\pdfi}{{f^{\rm i}}}
\nwc{\pdfs}{{f^{\rm s}}}
\nwc{\pdfii}{{f_1^{\rm i}}}
\nwc{\pdfsi}{{f_1^{\rm s}}}
\nwc{\thetatil}{{\tilde\theta}}
\nwc{\red}{\color{red}}
\nwc{\blue}{\color{blue}}
\nwc{\prox}{\hbox{prox}}
\nwc{\diag}{\hbox{\rm diag}}
\nwc{\supp}{{\hbox{\rm supp}}}
\nwc{\sloc}{J_{\rm f}}
\nwc{\bu}{{\mb u}}
\nwc{\bv}{{\mb v}}
\nwc{\cU}{\mathcal{U}}
\nwc{\cN}{\mathcal{N}}
\nwc{\bN}{\mathbf{N}}
\nwc{\mbm}{\mathbf{m}}
\nwc{\bw}{\mathbf{w}}
\nwc{\bom}{\mathbf{w}}
\nwc{\bt}{\mathbf{t}}
\nwc{\z}{y}
\nwc{\cY}{\mathcal{Y}}
\nwc{\bM}{\mathbf{M}}
\nwc{\half}{{1\over 2}}
\nwc{\Sf}{S_{\rm f}}
\nwc{\Jf}{J_{\rm f}}
\nwc{\nul}{\hbox{\rm null}_\IR}
\nwc{\spanR}{\hbox{\rm span}_\IR}
\nwc{\Arg}{\hbox{\rm Arg~}}
\nwc{\fdr}{S_{\rm f}}
\nwc{\phase}[1]{\exp\lt[i\measured #1\rt]}
\nwc{\im}{{\rm i}}
\nwc{\cle}{\preccurlyeq}
\nwc{\lb}{\llbracket}
\nwc{\rb}{\rrbracket}
\nwc{\modpi}{{{\rm mod}\,2\pi}}
\begin{document}
 \title{
Blind Ptychography by Douglas-Rachford Splitting
}
 
\author{Albert Fannjiang
 \address{
Department of Mathematics, University of California, Davis, California  95616, USA. Email:  {\tt fannjiang@math.ucdavis.edu}
} \and Zheqing Zhang
\address{ Department of Mathematics, University of California, Davis, California  95616, USA. } 
}

\begin{abstract}

 Blind ptychography is the scanning version of coherent diffractive imaging 
which seeks to recover both the object and the probe simultaneously. 

Based on alternating minimization by Douglas-Rachford splitting, AMDRS is a blind ptychographic algorithm  informed by the uniqueness theory, the Poisson noise model and the stability analysis. 
Enhanced by the initialization method and the use of a randomly phased mask,  AMDRS converges globally and geometrically. 

Three boundary conditions are considered in the simulations: periodic, dark-field and bright-field boundary conditions.
The dark-field boundary condition is suited for isolated objects while the bright-field boundary condition is for non-isolated objects.
The periodic boundary condition is a mathematically convenient reference point. Depending on the availability of the boundary prior
the dark-field and 
the bright-field boundary conditions may or may not be enforced in the reconstruction.  Not surprisingly, enforcing the boundary condition 
improves the rate of convergence, sometimes in a significant way. Enforcing the bright-field condition in the reconstruction
can also remove
the linear phase ambiguity. 

\end{abstract}

\maketitle 
\section{Introduction}
Ptychography uses a localized coherent probe to illuminate different parts of a unknown extended object 
and collect multiple diffraction patterns 
 as measurement data  (Fig. \ref{fig0}). The redundant information in the overlap between adjacent illuminated spots is then exploited  to improve phase retrieval methods  \cite{Pfeiffer,Nugent}. An important feature of ptychography   \cite{DM08,probe09} is that the probe needs not be known precisely beforehand and can
be  recovered along with the unknown object.

 Recently ptychography  has been extended to the Fourier domain \cite{FPM13, Yang14}.  In Fourier ptychography,  illumination angles are scanned sequentially with a programmable array source with the diffraction pattern measured at each angle  \cite{Tian15,FPM15}. Tilted illumination samples different regions of Fourier space, as in synthetic-aperture and structured-illumination imaging.

\commentout{%09/04/2018
Ptychography is the scanning version of coherent diffractive imaging (CDI) \cite{CDI} that 
acquires  multiple diffraction patterns through the scan of a localized illumination on an extended object (Fig. \ref{fig0}). The redundant information in the overlap between adjacent illuminated spots is then exploited  to improve phase retrieval methods  \cite{Pfeiffer,Nugent}. Ptychography originated in electron microscopy \cite{Hoppe1, Hoppe2, EM0, EM1,EM2,EM-ptych1,EM-ptych2} and has been successfully  implemented with X-ray, optical  and terahertz waves  \cite{PIE104,PIE204,DM08, probe09, Rod08, ptycho10,terahz,FROG}.

Recently ptychography  has been extended to the Fourier domain \cite{FPM13, Yang14}.  In Fourier ptychography,  illumination angles are scanned sequentially with a programmable array source with the diffraction pattern measured at each angle  \cite{Tian15,FPM15}. Tilted illumination samples different regions of Fourier space, as in synthetic-aperture and structured-illumination imaging.

An important feature of ptychography developed in Thibault {\em et al.}  \cite{DM08,probe09} is simultaneous recovery of the object and the probe. This capability allows application of ptychography as a
beam characterization and wave front aberration sensing technique. 

}

\commentout{
 For the standard phase retrieval with a plain (uncoded) diffraction pattern, Bates \cite{Bates} and Hayes \cite{Hayes} (see also \cite{BS}) have proven that the phase problem is uniquely solvable in more than one dimension. But this theory  \cite{Bates,Hayes,BS}  comes with several caveats that degrade numerical reconstruction in practice. 
First, the uniqueness theory concerns   {\em generic} objects, i.e.
almost all objects from a random ensemble. In real life, most objects are arguably not ``generic".  Second, the theory allows the following ambiguities besides the global, constant phase factor:  translation within the field of view and  the so-called twin image, related to the true solution via complex conjugation and rotation by $180^\circ$.  

The translation and twin-image ambiguities can be removed and the uniqueness result extended from generic to every non-line object
if we add a set of coded diffraction pattern with a randomly phased  probe \cite{pum}. Moreover, one only needs a slight prior knowledge
about the range of the probe phase.  In other words, one can perform essentially {\em blind} phase retrieval with a plain and a coded
diffraction pattern to recover both the unknown object and the roughly known probe. 
}

Yet, despite significant progress that allows for reliable practical implementation, some of the technique's  fundamentals remain poorly understood \cite{ptych-theory}. 
For example, precise conditions for uniqueness of blind-ptychographic solution are not known until recently. 
Roughly speaking  the uniqueness theory \cite{blind-ptych} says  that  with the use of a randomly phased probe
and  
 for a general class of irregular scan schemes (e.g. \eqref{rank1} and \eqref{rank2}), called the mixing schemes, 
 the only ambiguities are a scaling factor and  an affine phase ramp, both of which
 are intrinsic to blind ptychographic reconstruction. 
 In contrast,  in standard, non-ptychographic phase retrieval the exit wave (the multiplication of the probe and the object) alone suffers  
 ambiguities such as a global phase factor, a global spatial translation and the twin image. Further the exit wave in standard phase retrieval can not be  unambiguously splitted into the probe and the object without other strong prior constraint \cite{pum}. This makes clear the fundamental advantage of the ptychographic method.

 The purpose of this work is to  present reconstruction schemes informed by the uniqueness theory, analyze
the algorithmic structure and demonstrate by numerical experiments the global and geometrical convergence properties of the schemes.

%\commentout{
  \begin{figure}[t]
\begin{center}
%\subfigure[TCB]{
\includegraphics[width=10cm]{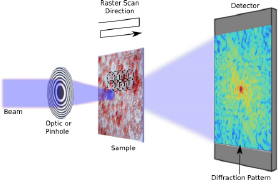}
%}\hspace{-0.6cm}
\caption{Simplified ptychographic setup showing a Cartesian grid used for the overlapping raster scan positions \cite{parallel}.
}
\label{fig0}
\end{center}
\end{figure}
%}

We adopt the following notations in the paper.  Let $\cT$ be the set of all shifts, including $(0,0)$,  involved  in the ptychographic measurement. 
 For general ptychographic schemes, we
 denote by $\mu^\bt$ the $\bt$-shifted probe for all $\bt\in \cT$ and $\cM^\bt$ the domain of
$\mu^\bt$. Let $f^\bt$ the object restricted to $\cM^\bt$ and  $\mb{\rm Twin}(f^\bt)$ the twin image of $f^\bt$ defined in $\cM^\bt$. 
We say that  the object part $f^\bt$ has a tight support in $\cM^\bt$ if $\cM^\bt$ does not fully contain any spatially shifted support
of $f^\bt$. Let $f:=\cup_\bt f^\bt\subseteq \cM:=\cup_\bt \cM^\bt$.

\commentout{%09/04/2018
We say that $\{f^\bt: \bt\in \cT\}$ is strongly connected if any two parts are joined by a chain in which two successive parts $f^\bt, f^{\bt'}$ overlap 
 in the following sense: \beq
\label{s-conn}
\min\lt\{|\cM^\bt\cap \cM^{\bt'}\cap\supp(f^\bt)|, |\cM^\bt\cap \cM^{\bt'}\cap \supp(\mb{\rm Twin}(f^\bt))|\rt\}\gg 1,
\eeq
where $|\cdot|$ denotes the cardinality of the set. 
}

Several ideas can be drawn from the uniqueness theory \cite{blind-ptych} that prescribes precise conditions under which
ptychographic ambiguities are limited to a scaling factor and an affine  phase ramp. 

First the theory recommends the use of irregular perturbations of raster scan of step size $\tau$ 
which have been widely practiced in the literature. One is
\beq
\label{rank1}
\mbox{\rm Rank-one perturbation}\quad \bt_{kl}=\tau(k,l)+(\delta^1_k, \delta^2_l),\quad k,l\in \IZ
\eeq
where $\delta_{k}^1$ and $\delta^2_l$ are small random variables relative to $\tau$. 
The other is 
\beq
\label{rank2}
\mbox{\rm Full-rank perturbation}\quad \bt_{kl}=\tau (k,l)+(\delta^1_{kl},\delta^2_{kl}), \quad k,l\in \IZ
\eeq
 where $\delta_{kl}^1$ and $\delta^2_{kl}$ are small random variables relative to $\tau$. 

Second the theory  recommends the use of
a randomly phased probe with the unknown transmission function  
$
\mu^0(\bn)=|\mu^0|(\bn)e^{\im \theta(\bn)}
$
where  $\theta(\bn)$ are random variables
and  $|\mu^0|(\bn)\neq 0,\forall \bn\in \cM^0$.  Randomly phased probes have been adopted in ptychographic experiments
\cite{zone-plate,ptycho-rpi,random-aperture,diffuser}. 

Third, the theory suggests  the probe phase constraint (PPC) as the probe initialization.  
{\em 
We say  that a probe estimate $\nu^0$ satisfies PPC with $\delta<1$ if 
\beq\label{PPC}
\measuredangle (\nu^0(\bn),\mu^0(\bn))<\delta\pi,\quad \forall\bn. 
\eeq
The default case is $\delta=1/2$ and PPC has the intuitive meaning 
$\Re(\bar \nu^0\odot \mu^0)>0$ at every pixel (where $\odot$ denotes the component-wise product, the bar the complex conjugate and $\Re$ the real part), which implies certain similarity of $\nu^0$ to $\mu^0$.
}

 The constraint \eqref{PPC} represents prior information about the probe and,  in practice, needs only to hold for sufficiently large number of pixels $\bn$. We use \eqref{PPC} for selecting and quantifying initialization, instead of the usual 2-norm.  
 
Fourth, the theory advises a probe size and overlap such that  the objects parts $\{f^\bt: \bt\in \cT\}$ mutually overlap sufficiently and at least one part $f^\bt$ has a tight support in $\cM^\bt$. The reader is referred to \cite{blind-ptych}
for the technical definition of sufficient overlap which depends not only on the overlap of adjacent illuminations
but also on the object support. For example,  the overlap requirement of adjacent illuminations for a sparsely supported object is higher than that for a densely support object. In the literature,  the adjacent illuminations  have  at least 50\%, typically 60-70\%,  overlap \cite{overlap,ePIE09}. This rule of thumb is more stringent than necessary in the case of 
 an extended, densely supported object. 
 
 \subsection{Our contribution}

Our reconstruction scheme is based on alternating minimization with the Douglas-Rachford splitting (DRS) method. DRS is a proximal point method closely related to the Alternating Direction Method of Multipliers (ADMM). The existing theory and analysis for DRS and ADMM  (see, e.g. \cite{LM79,EB92,GM75,GM76}) are limited to the convex optimization, except for a few exceptions (see e.g. \cite{FDR, LP,Hesse}).  
In particular, the performance of DRS and ADMM in the non-convex setting depends sensitively on
the choice of the objective functions as well as the selection of the relaxation parameters and the step size which are the main focus of our proposed scheme. 

Our choices of the objective functions and the step size are informed by the uniqueness theory \cite{blind-ptych}, the Poisson noise model and the stability analysis. The confluence of the three considerations
leads to the proposed scheme (acronym AMDRS). The crux of our stability analysis (Proposition \ref{thm:stable} and \ref{thm:stable2}) is that  the true solution (modulo the scaling factor and the affine phase ambiguity) to blind ptychography are stable fixed points of AMDRS with any step size and, moreover,   
in the case of  the unit step size,  all other  fixed points are saddle points. 
This suggests  that a unit step size is a good choice for AMDRS in contrast to 
(sufficiently) small step sizes typically required for nonconvex convergence analysis
\cite{LP, Hesse}.

For any iterative scheme for non-convex optimization, the initialization of the iteration plays a crucial role
in the convergence of the scheme. The initialization step is often glossed over in the development of
numerical schemes due to lack of a good metric  (see e.g. \cite{rPIE17,Waller15,probe09}). In our numerical experiments, PPC turns out to be
an excellent initialization method and metric for controlling the convergence behavior of the schemes. 
In particular, our schemes converge globally and geometrically under PPC with $\delta=1/2$.

In additional to demonstrate the numerical performance of the proposed schemes, we propose
a special boundary condition (the bright-field boundary condition) to remove the linear phase ambiguity.
The capability of removing the linear phase ambiguity is particularly important for 3D blind tomography
as the different linear phase ramps for different projections would collectively create enormous 3D ambiguities 
that are difficult to make consistent. 

%We also introduce the technique of adding salt noise to improve the performance of reconstructing an object consisting of extensive area of dark pixels which can spoil the uniqueness conditions. 

 %In contrast to many blind ptychographic reconstruction schemes that seek the vanishing of the residual (i.e. the data fitting error), we focus instead on the vanishing of the reconstruction error, 
 
The rest of the paper is organized as follows. In Section \ref{sec:DRS}, we introduce the Douglas-Rachford splitting method as the key ingredient of our reconstruction algorithms, Gaussian-DRS and Poisson-DRS. We give the fixed point analysis in Section \ref{sec:fixed}. In Section \ref{sec:num}, we perform numerical experiments with our schemes.  In Appendix \ref{sec:likelihood}, we show that Gaussian-DRS is an asymptotic form of Poisson-DRS. We conclude  in Section \ref{sec:last}. 

\section{Alternating minimization by Douglas-Rachford splitting}\label{sec:DRS}

For simplicity, let $\supp(f)\subseteq \IZ_n^2\subseteq \cM$ and let  $\cM^0:=\IZ_m^2, m<n,$ be the initial probe region with the structured illumination given by $\mu^0$.  The pixels
in $\cM\setminus \IZ_n^2$ merits a separate treatment (referred to as boundary condition)
depending on the experimental set-up (see Section \ref{sec:BC}). 

Let $\cF(\nu,g)\in \IC^N$ be the totality of the Fourier (magnitude and phase) data  corresponding to the probe $\nu$ and the object $g$ such that $|\cF(\mu, f)|=b$ where $b$ is the noiseless ptychographic data. Since $\cF(\cdot,\cdot)$ is a bilinear function, $A_k h:=\cF(\mu_k, h), \, k\geq 1, $ defines a matrix $A_k$  for the $k$-th probe estimate $\mu_k$ and $B_k\eta :=\cF(\eta, f_{k+1}), \ k\geq 1,$  for  the $(k+1)$-st image estimate $f_{k+1}$ such that $A_k f_{j+1}=B_j\mu_k$, $j\geq 1, k\geq 1$. 

 For any $y\in \IC^{m\times m}$, $\sgn(y)$ is defined as 
\[
\sgn(y)[j]=\lt\{\begin{matrix} 
1&\mbox{if $y[j]=0$}\\
y[j]/|y[j]|&\mbox{else. }
\end{matrix}\rt.
\]

The basic strategy for blind ptychographic reconstruction is to alternately update the object and probe estimates
starting from an initial guess \cite{DM08,probe09,pum}.  We pursue the same strategy and perform the object and probe updates
by solving certain minimization problems. 

\subsection{Alternating minimization} 
%\commentout{
\begin{algorithm}
\caption{Alternating minimization (AM)}\label{alg: suedo algorithm}
\begin{algorithmic}[1]
\State Input: initial probe guess $\mathbf{\mu}_1\in $ PPC($\bk,\delta$). 
\State Update the object estimate  
$\quad
f_{k+1}=\arg\min\cL(A_kg)$ s.t. $g\in \IC^{n\times n}$.
\State  Update the probe estimate  
$\quad
\mu_{k+1}=\arg\min\cL(B_k\nu)$ s.t.
$\nu\in \IC^{m\times m}$. 
%\State (optional) $\red |\mu(\bk)|=1, \forall\bk$.
\State Terminate if $\||B_k\mu_{k+1}|- b\|_2 $ stagnates or is less than tolerance; otherwise, go back to step 2 with $k\rightarrow k+1.$
\end{algorithmic}
\end{algorithm}
A main feature of alternating minimization scheme is  the monotonicity property:
\[
\cL(A_kf_k)\ge \cL(A_{k+1}f_{k+1}),\quad \cL(B_k\mu_k)\ge \cL(B_{k+1}\mu_{k+1}). 
\]

We consider two log-likelihood cost functions \cite{ML12,Poisson2} for
noise-robustness
\beq
\mbox{\rm Poisson:}\quad \cL(y) &=&\sum_i |y[i]|^2-b^2[i]\ln |y[i]|^2\label{Poisson}\\
\mbox{\rm Gaussian:}\quad  \cL(y)&=&\half \| |y|- b\|_2^2 \label{Gaussian}
\eeq
based on the maximum likelihood principle for the Poisson noise model. The Poisson log-likelihood function \eqref{Poisson}  is  asymptotically reduced to \eqref{Gaussian}
as shown in Appendix \ref{sec:likelihood}.

The ptychographic iterative engines, PIE \cite{PIE204}, ePIE \cite{ePIE09} and rPIE \cite{rPIE17},  are both derived from the amplitude-based cost function \eqref{Gaussian}.
The maximum likelihood scheme is also a variance stabilization scheme which uniformizes the probability distribution for every pixel regardless of the measured intensity value \cite{noise3}. 
It is well established that the amplitude-based cost function \eqref{Gaussian} outperforms the
intensity-based cost function  $\half \| |y|^2- b^2\|_2^2$ \cite{Waller15}.  See \cite{noise,noise2} for more choices of objective functions.

For non-convex iterative optimization, a good initial guess  or some regularization is usually crucial for convergence \cite{ML12,Poisson2}. 
We assume that  the initial guess for the probe satisfies  PPC. To test the linear phase ambiguity, we also consider the probe initialization
\beq
\mu_1 (\mathbf{n})=\mu^0(\mathbf{n})\,\exp{\lt[\im 2\pi\frac{ \mathbf{k}\cdot\mathbf{n}}{n}\rt]}\,\exp{[\im \phi(\mathbf{n})]},\ \ \ \mathbf{n}\in \mathcal{M}^0
\eeq
where $\phi(\bn)$ are independently and uniformly distributed on $(-\delta \pi, \delta\pi)$. We use $\delta\in [0, 1/2]$ as the control parameter for the proximity of the initial probe to the true probe, modulo a linear phase factor represented by $\bk$,
and denote this class of probe initialization by PPC($\bk,\delta$).

\commentout{The stopping  criteria in the inner loops is set as 
\begin{equation*}
\frac{\||\ |A_k A_k^\dagger u_{k}^{l}|-b\|_2-\|\ |A_k A_k^\dagger u_{k}^{l+1}|-b|
\|_2}{\|\ |A_k A_k^\dagger  u_{k}^{l}|-b\|_2} \leq 10^{-4}
\end{equation*} 
}

\commentout{Our algorithm now becomes \newline
%%%%% main algorithm 
\begin{algorithm}
\caption{DR\&AP}\label{main algorithm}
\begin{algorithmic}[1]
\State Initialize $\mathbf{\mu}_1$ (PPC), $u_{1,1}$ totally random. 
\State In epoch $k$   ($k\geq 1$), initialize $u_{k,1}= u_{k-1, l_{last}}=u_{k}$. Update $A_k=A(f_k)$ Do
    \begin{equation}
		u_{k}^{l+1} = \frac{1}{\rho+1}u_{k}^{l} + \frac{\rho-1}{\rho+1}P_k u_{k}^{l}+\frac{1}{\rho+1}b\odot \frac{2P_ku_{k}^{l}-u_{k}^{l}}{|2P_ku_{k}^{l}-u_{k}^{l}|}
		\label{iteration update image by DR}
	\end{equation}
	    until residual $\| |P_k u_{k}^{l+1} | -b\|_2$ stagnates. Update $f_{k+1}=A_k u_{k+1}$. 
\State In epoch $k$, initialize $ v_{k,1}= v_{k-1,l_{last}}= v_{k}$. Update $B_k=B(f_{k+1})$. Do 
    \begin{equation}
		 v_{k}^{l+1} = \frac{1}{\rho+1} v_{k}^{l} + \frac{\rho-1}{\rho+1}B_k\frac{B_k^\dagger v_{k}^{l}}{|B_k^\dagger v_{k}^{l}|}+\frac{1}{\rho+1}b\odot \frac{2B_k\frac{B_k^\dagger v_{k}^{l}}{|B_k^\dagger v_{k}^{l}|}-v_{k}^{l}}{|2B_k\frac{B_k^\dagger v_{k}^{l}}{|B_k^\dagger v_{k}^{l}|}-v_{k}^{l}|}
		\label{iteration update probe by DR}
	\end{equation}
		until residual stagnates. Update $\mu_{k+1}=\frac{B_k^\dagger  v_{k+1}}{|B_k^\dagger  v_{k+1}|}$. 
\State  If the residual is within tolerance, terminate; if not, go back to 2) and increase k by 1.
\end{algorithmic}
\end{algorithm}
}

\commentout{
\subsection{Gaussian log-likelihood function}
\begin{prop}
The monotonicity property holds: 
\beq
\||A_kf_k|-b\|_2 
& \ge &\|| A_{k+1}f_{k+1}|-b\|_2\\
\| |B_k \mu_{k}| -b \|_2 &\ge& 
 \||B_{k+1}\mu_{k+1}|-b\|_2.
\eeq 

\commentout{
\beq
\||A_kf_k|-b\|_2 &\geq& \| |A_kf_{k+1}|-b\|_2 \\
& = &\| |B_k \mu_{k}| -b \|_2 \\
&\geq & \||B_k\mu_{k+1}|-b\|_2\\
& = &\|| A_{k+1}f_{k+1}|-b\|_2\\
& \geq& \| |A_{x+1}f_{k+2}|-b\|_2
\eeq 
}
\label{prop:monotone}
\end{prop}
\begin{proof}
Let's review our algorithm
By the definition of the $f_{k+1}, \mu_{k+1}$.
\begin{equation*}
\begin{aligned}
&\| | A_kf_k| -b \|_2 \geq \min_x \| | A_k x | -b\|_2 = \| | A_kf_{k+1}| -b \|_2 \\
&\| | B_k\mu_k| -b \|_2 \geq \min_x \| | B_k x | -b\|_2 = \| | B_k\mu_{k+1}| -b \|_2 \\
\end{aligned}
\end{equation*}
By definition of $A_kf_{k+1} = B_k\mu_k$, we have 
\begin{equation*}
\| | A_kf_k| -b \|_2 \geq \min_x \| | A_k x | -b\|_2 = \| | A_kf_{k+1}| -b \|_2 =\| | B_k\mu_k| -b \|_2 \geq \min_x \| | B_k x | -b\|_2 = \| | B_k\mu_{k+1}| -b \|_2
\end{equation*}

Next let's estimate the first order decrease in either image update or probe update in epoch $k^{th}$. Denote the decrease of residual in image update in the $k^{th}$ epoch as 
\begin{equation*}
\Delta_k^x =  \| |A_kf_k|-b\|_2^2 -\||A_kf_{k+1}| - b \|_2^2 
\end{equation*}
We can use steepest decent and line search analysis to give an lowerbound  (first order) approximation of $\Delta_k^x$. Assume we have a perturbation $\Delta x = \epsilon u;\ \|u\|_2=1$  near $f_k$. 
\begin{equation*}
\begin{aligned}
\Delta_k^x &\geq  \| |A_kf_k| -b\|^2  - \| |A_k f_k+ \epsilon A_ku| - b \|^2\\
           & = \| |A_kf_k| -b\|^2 - \| |A_kf_k|(1+\epsilon \Re(\Omega^*_{A_kf_k}A_ku)) -b\|^2 \\
           & \approx -\epsilon(|A_kf_k|-b)^T\Re(\Omega^*_{A_kf_k}A_ku) \\
           & =-\epsilon(A_kf_k-\Omega_{A_kf_k}b)^T\Omega^*_{A_kf_k}(\Re(\Omega^*_{A_kf_k}A_k)\Re(u)-\Im(\Omega^*_{A_kf_k}A_k)\Im(u))
\end{aligned}
\end{equation*}
holds for all $u \in \mathbb{C}^{n\times n};\ \|u\|=1$ and $\epsilon$ small. We can maximize the right hand side over $u$ to get an lower bound of $\Delta_k^x$ by Cauthy inequality and taking
\begin{equation*}
u^T=-(A_kf_k-\Omega_{A_kf_k}b)^T\Omega^*_{A_kf_k}(\Re(\Omega^*_{A_kf_k}A_k)-\imath\Im(\Omega^*_{A_kf_k}A_k)) / \|.\|
\end{equation*} \newline
where $\|.\| :=\|(A_kf_k-\Omega_{A_kf_k}b)^T\Omega^*_{A_kf_k}(\Re(\Omega^*_{A_kf_k}A_k)-\imath\Im(\Omega^*_{A_kf_k}A_k))\| = \|(A_kf_k-\Omega_{A_kf_k}b)^T\Omega^*_{A_kf_k}(\bar{\Omega}^*_{A_kf_k}{A_\infty}_k)\| $
\begin{equation*}
\begin{aligned}
\Delta_k^x\geq& \epsilon(\|(A_kf_k-\Omega_{A_kf_k}b)^T\Omega^*_{A_kf_k}\Re(\Omega^*_{A_kf_k}A_k)\|^2+ \|(A_kf_k-\Omega_{A_kf_k}b)^T\Omega^*_{A_kf_k}\Im(\Omega^*_{A_kf_k}A_k)\|^2)/\|.\| \\
           = & \epsilon\|(A_kf_k-\Omega_{A_kf_k}b)^T\Omega^*_{A_kf_k}(\Re(\Omega^*_{A_kf_k}A_k)-\imath\Im(\Omega^*_{A_kf_k}A_k)) \|^2/\|.\| \\
           = & \epsilon\|(A_kf_k-\Omega_{A_kf_k}b)^T\Omega^*_{A_kf_k}(\bar{\Omega}^*_{A_kf_k}{A_\infty}_k) \|^2/\|.\|\\
           = & \epsilon\| A_k(A_kf_k-\Omega_{A_kf_k}b)\|
\end{aligned}
\end{equation*} 
By the same analysis, we can give an lower bound of decrease in probe update in the $k^{th}$ epoch,
\begin{equation*}
\Delta_k^\mu = \epsilon\| B_k(B_k\mu_k-\Omega_{B_k\mu_k}b)\|
\end{equation*}
\end{proof}
}

\subsection{DR splitting (DRS)}
We formulate the inner loops (Step 2 and 3 in Algorithm 1) of AM as 
\begin{equation}
\min\limits_{u}K(u)+\cL(u)
\label{DRS}
\end{equation}
where the additional objective function $K$ enforces the constraint that $u$ is in the range of $A_k$ or $B_k$.

We solve \eqref{DRS} by the Douglas-Rachford splitting (DRS) method 
which is a proximal point algorithm for minimizing a sum of two objective functions   \cite{ LM79,EB92}. 
This is motivated by the good performance of the classical Douglas-Rachford (CDR) algorithm \cite{DR,LM79} (aka the difference map \cite{Elser} in the optics literature), a special case of DRS with an infinitely large step size,  which has been used for ptychographic reconstruction \cite{DM08,probe09,ADM} and analyzed in \cite{FDR, ptych-unique}. 
The CDR iteration, however, exhibits only a sub-linear convergence globally in the noiseless case \cite{FDR} and, when the optimization problems become infeasible (e.g. with noisy data), tends to fluctuate  and
underperform, hence the search for a faster convergent and more robust DRS.

DRS is defined by the following iteration for $l=1,2,3\cdots$ 
\begin{equation}
\begin{aligned}
y^{l+1} &= \prox_{K/\rho}(u^l); \\
z^{l+1} &= \prox_{\cL/\rho}(2y^{l+1}-u^l)\\
u^{l+1}&= u^l+ z^{l+1}-y^{l+1}
\end{aligned}
\label{equ: DR definition iteration}
\end{equation}
where $\prox_{\cL/\rho}(u^k):= \mathop{\text{argmin}}\limits_{x} \cL(x)+\frac{\rho}{2}\| x-u^k\|^2$. 
For the initial guess $u^1_1$ in our simulations, we let $u^1_1=A_1^* f_1$ where $f_1(\bn)$ is a phase object with i.i.d. phase uniformly distributed over $[0,2\pi]$ or
a constant value 0.   Here $\gamma=1/\rho$ is often referred to as the step size of DRS. 

It can be checked that DRS \eqref{DRS} is formally equivalent to
the Alternating Direction Method of Multipliers (ADMM)
\begin{equation*}
\max_{\lambda}\min\limits_{y,z}\cL(y)+K(z)+\langle \lambda, y-z\rangle + \frac{\rho}{2}\|z-y\|^2.
\end{equation*}

Our choice of $K$ is important to our implementation of DRS.
Since the object estimate  in  the $k-$th  epoch must be of the form $A_k g$ for some $g\in \IC^{n\times n}$, 
we let $K(u)$ be the indicator function $ \chi_k(u)$ of the range of $A_k$, i.e. $\chi_k(u)=0$ if $ y$ is in the range of $A^*_k$; and $\chi_k(u)=\infty$ otherwise. 
For this choice of $K$, $ \prox_{K/\rho}(u)=P_k u$ is independent of $\rho$. This
should be contrasted with the choice of the distance function adopted in \cite{LP} for the tractability of
the convergence analysis due to the smoothness of the distance function.

For the Gaussian case \eqref{Gaussian}, we update the object estimate as 
\beq
f_{k+1}= A_k^\dagger u_{k}^{\infty}\label{49}
\eeq
where $u_k^\infty$ is the terminal output of  the following iteration
\beq
\label{G11}y_{k}^{l} &=& A_kA_k^\dagger u_{k}^{l} \\
\label{G12}z_{k}^{l} &=& \frac{1}{\rho+1}b\odot\sgn{(2y_{k}^{l}-u_{k}^{l})}+\frac{\rho}{\rho+1}(2y_{k}^{l}-u_{k}^{l}) \\
\label{G13}u_{k}^{l+1} &= &u_{k}^{l}+z_{k}^{l}-y_{k}^{l}. 
\eeq
Substituting \eqref{G11} and \eqref{G12} into \eqref{G13} and reorganizing the resulting equation, we obtain  \beq\label{G1}
u_{k}^{l+1} &=& \frac{1}{\rho+1}u_{k}^{l} + \frac{\rho-1}{\rho+1}P_k u_{k}^{l}+\frac{1}{\rho+1}b\odot \sgn\big(2P_ku_{k}^{l}-u_{k}^{l}\big)\\
&=& \half u^l_k+{\rho-1\over 2(\rho+1)} R_ku^l_k+\frac{1}{\rho+1}b\odot \sgn \big(R_ku^l_k)\nn
\eeq
where $P_k=A_kA_k^\dagger $ is the orthogonal projection onto the range of $A_k$ and
$R_k=2P_k-I$ is the corresponding reflector.  

%%%%%%%% step 3 probe update loop
 
\commentout{becomes the following inner loop 
\begin{equation*}
\begin{aligned}
y_{k}^{l}=&B_k\frac{B_k^\dagger v_{k}^{l}}{|B_k^\dagger v_{k}^{l}|} \\
z_{k}^{l}= & \frac{1}{\rho+1}b\odot \frac{2y_{k}^{l}- v_{k}^{l}}{|2y_{k}^{l}- v_{k}^{l}|} + \frac{\rho}{\rho+1}2y_{k}^{l}- v_{k}^{l} \\
 v_{k}^{l+1} &= v_{k}^{l}+z_{k}^{l}-y_{k}^{l}  
\end{aligned}
\end{equation*}
}
Likewise, we update the probe as 
\beq
%\mu_{k+1}&=&\sgn\big(B_k^\dagger v_{k}^{\infty}\big)\label{54}
\mu_{k+1}&=&B_k^\dagger v_{k}^{\infty}\label{54}
\eeq
where $ v_{k}^{\infty}$ is the terminal output of  the following iteration
\beq
\label{G2}
%\mu_k^{l}&=&\sgn\big(B_k^\dagger v_{k}^{l}\big)\\
% v_{k}^{l+1}& =& \frac{1}{\rho+1} v_{k}^{l} + \frac{\rho-1}{\rho+1}B_k\mu_k^{l}+\frac{1}{\rho+1}b\odot \sgn{\Big(2B_k \mu_k^{l}- v_{k}^{l}\Big)}\\
 v_{k}^{l+1}& =& \frac{1}{\rho+1} v_{k}^{l} + \frac{\rho-1}{\rho+1}Q_k v_{k}^{l}+\frac{1}{\rho+1}b\odot \sgn{\Big(2Q_k v_{k}^{l}- v_{k}^{l}\Big)}\\
 &=& \half v^l_k+{\rho-1\over 2(\rho+1)} S_ku^l_k+\frac{1}{\rho+1}b\odot \sgn \big(S_kv^l_k)\nn
\eeq 
where  $Q_k=B_kB_k^\dagger $ is the orthogonal projection onto the range of $B_k$ and $S_k$ the corresponding reflector.

For the Poisson case \eqref{Poisson},  the inner loops  take a more complicated form \beq	\label{P1}
		u_{k}^{l+1}& =&
		\half u^l_k-{1\over \rho+2} R_k u^l_k+
		      \frac{\rho}{2\rho+4}\sqrt{|R_ku_{k}^{l}|^2+\frac{8(2+\rho)}{\rho^2}b^2}\odot \sgn{\Big(R_k u_{k}^{l}\Big)}
		 %\frac{\rho+4}{2\rho+4}u_{k}^{l} -\frac{2}{2+\rho}P_k u_{k}^{l}+ \\
		%&& \frac{\rho}{2\rho+4}\sqrt{|2P_k u_{k}^{l}-u_{k}^{l}|^2+\frac{8(2+\rho)}{\rho^2}b^2}\odot \sgn{\Big(2P_k u_{k}^{l}-u_{k}^{l}\Big)}
	\eeq
	\beq
	\label{P2}
v_{k}^{l+1} &=&
\half v^l_k-{1\over \rho+2} S_k v^l_k+\frac{\rho}{2\rho+4}\sqrt{|S_k v_{k}^{l}|^2+\frac{8(2+\rho)}{\rho^2}b^2}\odot \sgn{\Big(S_k v_{k}^{l}\Big)}
 %\frac{\rho+4}{2\rho+4} v_{k}^{l} -\frac{2}{\rho+2}Q_k v_{k}^{l}+ \\ 
 %& &\frac{\rho}{2\rho+4}\sqrt{|2Q_k v_{k}^{l}-v_{k}^{l}|^2+\frac{8(2+\rho)}{\rho^2}b^2}\odot \sgn{\Big(2Q_k v_{k}^{l}- v_{k}^{l}\Big)}.\nn
\eeq
%\label{P2}
		% v_{k}^{l+1} &=& \frac{\rho+4}{2\rho+4} v_{k}^{l} -\frac{2}{\rho+2}B_k\sgn{\big(B_k^\dagger v_{k}^{l}\big)}+ \\
		    %  & &\frac{\rho}{2\rho+4}\sqrt{|2B_k\sgn{\big(B_k^\dagger v_{k}^{l}}\big)- v_{k}^{l}|^2+\frac{8(2+\rho)}{\rho^2}b^2}\odot \sgn{\Big(2B_k\sgn{\big(B_k^\dagger v_{k}^{l}\big)}- v_{k}^{l}\Big)}.\nn
After the inner loops terminate, we update the object and probe as \eqref{49} and \eqref{54},respectively.

We shall refer to DRS with the Poisson log-likelihood function \eqref{P1}-\eqref{P2} and the Gaussian version \eqref{G1}-\eqref{G2}  by the acronyms  {\em Poisson-DRS} and {\em Gaussian-DRS}, respectively. We call the the two algorithms the Alternating Minimization with Douglas-Rachford Splitting (AMDRS). Due to the non-differentiability of both $K$ and $\cL$, the global convergence property
of the proposed DRS method is beyond the current framework of analysis \cite{LP}.

In the limiting case of $\rho\to 0$, both Gaussian-DRS and Poisson-DRS  become the classical Douglas-Rachford algorithm.

The computation involved in Gaussian-DRS and Poisson-DRS are mostly pixel-wise operations
(hence efficient) except for the pseudoinverses $A_k^\dagger = (A^*_kA_k)^{-1}A^*_k$ and  $B_k^\dagger = (B^*_kB_k)^{-1}B^*_k$. In blind ptychography,  due to the isometry of the Fourier transform, $A^*_kA_k$ and $B^*_kB^\dagger_k$ are
diagonal matrices and easy to invert \cite{blind-ptych}.

\subsection{Fixed points of Gaussian-DRS}\label{sec:fixed}

For simplicity, we shall focus the stability analysis on the fixed points of Gaussian DRS.  

Suppose that the limit $ (u,v)=\lim_{k,l \to \infty}(u_{k}^l,v_{k}^l)$ exist where $(u^l_k,v^l_k)$ are two-parameter arrays of
the Gaussian-DRS iterates. Let $\lim_{k\to \infty}A_k=A_\infty,\ \lim_{k\to \infty} B_k = B_\infty$.

 Let $P_\infty=A_\infty  A_\infty^\dagger$ and $Q_\infty=B_\infty         B_\infty^\dagger$ denote the orthogonal projection onto the range of $A_\infty$ and $B_\infty$, respectively. Let
 $P^\perp_\infty=I-P_\infty$ and $Q^\perp_\infty=I-Q_\infty$ denote the orthogonal complements of $P_\infty$ and $Q_\infty$, respectively. Denote $R_\infty=2P_\infty-I, S_\infty=2Q_\infty-I$.

Passing to the limit in \eqref{G1}-\eqref{G2} we obtain  the fixed point equation for Gaussian-DRS
\beq
u&=&  \half u+{\rho-1\over 2(\rho+1)} R_\infty u+\frac{1}{\rho+1}b\odot \sgn \big(R_\infty u)\label{sa}\\
v&=&  \half v+{\rho-1\over 2(\rho+1)} S_\infty v+\frac{1}{\rho+1}b\odot \sgn \big(S_\infty v)\label{sb}
\eeq
\commentout{
 \beq
 \label{sa}
u&=& \frac{1}{\rho+1} u+ \frac{\rho-1}{\rho+1}P_\infty  u+\frac{1}{\rho+1}b\odot \sgn\big(2P_\infty  u-u\big)\\
 v& =& \frac{1}{\rho+1} v + \frac{\rho-1}{\rho+1}Q_\infty v+\frac{1}{\rho+1}b\odot \sgn{\Big(2Q_\infty v - v\Big)} \label{sb}
\eeq
which
can be written as
 \beq
 \label{sa'}
P_\infty u+\rho P^\perp_\infty u&=&b\odot \sgn\big(2P_\infty  u-u\big)\\
Q_\infty v+\rho Q^\perp_\infty v&=& b\odot \sgn{\Big(2Q_\infty v - v\Big)}.  \label{sb'}
\eeq
}

It is straightforward to check that the true solution ($u_0:=\cF(\mu,f), v_0:=\cF(\mu,f)$) is a fixed point of  Gaussian-DRS \eqref{G1}-\eqref{G2} and
Poisson-DRS \eqref{P1}-\eqref{P2}.

We now give  theoretical motivation for our numerical choice of $\rho=1$.

 For a solution $(u,v)$ to the fixed point equation \eqref{sa}-\eqref{sb}, define the reflection images, 
 $
x: =R_\infty u, \,\, y: =S_\infty v. $
Denote the right hand side of \eqref{sa}  as $U(x)$ and consider a perturbation of $x$ by $\epsilon z$ where $\ep$ is a small positive number. 
Set ${\Omega}=\diag(\sgn(x))$, $C={\Omega}^*A_\infty$, $\eta ={\Omega}^*z$. Suppose $|x|>0$. 
From somewhat tedious but straightforward calculation, we have
\beqn
\lim_{\ep \to 0}{1\over \ep}(U(x+\epsilon v) -U(x) )&= &\Omega J_A(\eta)
\eeqn
where 
\beq
\label{38'}
J_A(\eta) &= & CC^\dagger\eta- {1\over 1+\rho}\lt[\Re\big(2CC^\dagger \eta-\eta \big) +{\imath}\big(I-\diag({b}/{|x|})\big) \Im\lt(2CC^\dagger \eta-\eta\rt)\rt].
\eeq
The differential $J_B$ of the right hand side of \eqref{sb} has the similar structure.

 We prove the following results  in Appendix \ref{sec:stable} and \ref{sec:stable2}. 
 The first result says that for $\rho=1$ all the non-solution fixed points are linearly unstable and
 the second says that for $\rho\in [0,\infty)$ the true solutions to blind ptychography  are
 stable fixed points.
 
 \begin{prop}
Let  $(u,v)$ be a solution to \eqref{sa}-\eqref{sb} with $\rho=1$ such that $|x|>0$ or $|y|>0$. Suppose 
 \beqn
\|J_A(\eta)\|_2 \le \|\eta\|_2,\quad\forall \eta\in \IC^N&\mbox{or} & \|J_B(\xi)\|_2\le \|\xi\|_2,\quad\forall \xi \in \IC^N
   \eeqn
  where $N $ is the dimension of data. 
Then
$ f_\infty :=A_\infty^\dagger u$ and $\mu_\infty:=B_\infty^\dagger v$
are a solution to blind ptychography, i.e.  $|\cF(\mu_\infty,f_\infty)| = b$. 
Moreover, we have 
\beq
\label{key2}
x=A_\infty f_\infty=b\odot\sgn(x)=B_\infty \mu_\infty =y.
%x=y=b\odot \sgn(x)=b\odot\sgn(y).
\eeq

\label{thm:stable}
\end{prop}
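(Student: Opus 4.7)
The plan is to exploit the simplification that at $\rho=1$ the coefficient $(\rho-1)/(2(\rho+1))$ in \eqref{sa}--\eqref{sb} vanishes, collapsing the fixed point equations to $u = b\odot\sgn(x)$ and $v = b\odot\sgn(y)$; in particular $|u|=|v|=b$ pointwise. I would then orthogonally decompose $u = u_1 + u_2$ with $u_1 = P_\infty u$ and $u_2 = P_\infty^\perp u$. Since $x = R_\infty u = u_1 - u_2$, one has $u_1 = \tfrac{1}{2}(u+x)$, $u_2 = \tfrac{1}{2}(u-x)$; because $u = b\odot\omega$ and $x = |x|\odot\omega$ with a common phase $\omega := \sgn(x)=\sgn(u)$, rotating by $\Omega := \diag(\omega)$ gives real vectors $\tilde u = b$, $\tilde x = |x|$. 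Setting $P_C := \Omega^* P_\infty\Omega = CC^\dagger$ and decomposing the Hermitian projection $P_C = P_r + iP_i$ into its symmetric and antisymmetric parts, the reality of $\tilde u_1 = P_C b = \tfrac{1}{2}(b+|x|)$ and $\tilde u_2 = P_C^\perp b = \tfrac{1}{2}(b-|x|)$ forces $P_i b = 0$ and $R_r b := (2P_r - I)b = |x|$; consequently $D|x| = b$ where $D = \diag(b/|x|)$.

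The heart of the argument is a stability test along the one-parameter family $\eta_t := i(b - tP_C^\perp b)$. First I would simplify $J_A(\eta) = \tfrac{1}{2}\eta + \tfrac{i}{2}D\,\Im(R_C\eta)$ and compute
\[
J_A(\eta_t) = \tfrac{i}{2}(I + DR_r)(b - tP_C^\perp b) = ib + \tfrac{i t}{2}(D-I)P_C^\perp b,
\]
so in particular $J_A(ib) = ib$; at $t=0$ the stability inequality thus holds with \emph{equality}. Expanding to first order,
\[
\|J_A\eta_t\|_2^2 - \|\eta_t\|_2^2 = t\bigl[\langle b, (D-I)P_C^\perp b\rangle + 2\|P_C^\perp b\|_2^2\bigr] + O(t^2).
\]
Since the hypothesis must hold for both signs of small $t$, the bracketed coefficient must vanish. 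Substituting $D_j = b_j/|x_j|$ and $(P_C^\perp b)_j = \tfrac{1}{2}(b_j - |x_j|)$, this reduces to
\[
\tfrac{1}{2}\sum_j (b_j - |x_j|)^2 \cdot \frac{b_j+|x_j|}{|x_j|} = 0,
\]
a sum of nonnegative terms with strictly positive multipliers (using $|x|>0$). Hence $b_j = |x_j|$ for every $j$, i.e.\ $P_C^\perp b = 0$ and $u \in \mathrm{range}(A_\infty)$.

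It then follows that $A_\infty f_\infty = P_\infty u = u = b\odot\sgn(x) = x$, which gives $|A_\infty f_\infty| = b$. The bilinearity identity $\cF(\mu_\infty, f_\infty) = A_\infty f_\infty = B_\infty\mu_\infty$ yields $Q_\infty v = u$, so $|Q_\infty v| = b$; combined with $\|v\|_2^2 = \|b\|_2^2$ and Pythagoras $\|v\|_2^2 = \|Q_\infty v\|_2^2 + \|Q_\infty^\perp v\|_2^2$, this forces $Q_\infty^\perp v = 0$, hence $v\in\mathrm{range}(B_\infty)$ and $y = v = u = x$ as claimed. The alternative hypothesis $\|J_B(\xi)\|_2 \le \|\xi\|_2$ is handled by the symmetric argument with $(A,u,x,f_\infty)$ replaced by $(B,v,y,\mu_\infty)$, and then the same bilinearity/Pythagoras step recovers $u\in\mathrm{range}(A_\infty)$.

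The main obstacle I anticipate is identifying the correct test direction. Several seemingly natural candidates (purely imaginary $\eta = iP_C^\perp b$, or real $\eta = P_C^\perp b$) yield only strict inequalities that do not propagate to $b = |x|$; the specific combination $\eta_t = i(b - tP_C^\perp b)$ is dictated by the requirement that $ib$ be a unit-norm eigenvector of $J_A$, so that the $O(1)$ term saturates the norm bound and the vanishing of the $O(t)$ coefficient becomes forced. Once this direction is found, the rest is a verification that the resulting linear form is a sum of nonnegative squares and therefore vanishes termwise.
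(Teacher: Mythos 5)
Your proof is correct, and it reaches the conclusion by a genuinely different mechanism than the paper, even though both arguments probe the contraction hypothesis on purely imaginary vectors in the (rotated) span of $b$ and $|x|$ and both rest on the same key identity $CC^\dagger b=\tfrac{1}{2}(b+|x|)$ (the paper's \eqref{key} at $\rho=1$; your $P_Cb=\tfrac12(b+|x|)$). The paper argues by contradiction: assuming $\|P_\infty^\perp x\|\neq 0$, it fixes a single direction $\eta=\im(\alpha|x|+\beta b)$ with $0<\alpha<\beta$, lower-bounds $\|J_A(\eta)\|_2\ge(\alpha+\beta)\|b\|_2$ by the quadratic (AM--GM) inequality applied entrywise to $\alpha|x|+\alpha b^2/|x|$, and upper-bounds $\|\eta\|_2<(\alpha+\beta)\|b\|_2$ using $\|CC^\dagger|x|\|_2=\|P_\infty x\|_2<\|b\|_2$. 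You instead first observe that $\im b$ is a fixed vector of $J_A$ (so the bound is saturated there), differentiate $\|J_A\eta_t\|_2^2-\|\eta_t\|_2^2$ along the path $\eta_t=\im(b-tP_C^\perp b)$ --- which sits inside the paper's two-parameter family with $\alpha=t/2$, $\beta=1-t/2$ --- and force the first-order coefficient, a sum of the nonnegative terms $\tfrac12(b_j-|x_j|)^2(b_j+|x_j|)/|x_j|$, to vanish. This buys you the termwise conclusion $b_j=|x_j|$ directly and makes transparent \emph{why} the test direction works (it must pass through an eigenvector of eigenvalue one), at the cost of a derivative-at-the-maximum argument in place of a one-line norm comparison; it is, however, more tightly tied to $\rho=1$, whereas the paper's computation is set up for general $\rho\ge1$ before specializing. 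Your closing step --- $Q_\infty v=B_\infty\mu_\infty=\cF(\mu_\infty,f_\infty)=A_\infty f_\infty=u$, then $\|v\|_2=\|b\|_2=\|Q_\infty v\|_2$ and Pythagoras to kill $Q_\infty^\perp v$ and obtain the full chain \eqref{key2} --- is spelled out more completely than in the paper, which leaves that part implicit.
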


\begin{prop}\label{thm:stable2}
Let $(u,v)$ be a solution to \eqref{sa}-\eqref{sb} for any $\rho\in [0,\infty)$ such that $|\cF(\mu_\infty,f_\infty)| = b$ where $\mu_\infty$ and $f_\infty$
are defined as in Proposition \ref{thm:stable}. Then $\|J_A(\eta)\|_2 \le \|\eta\|_2, \|J_B(\xi)\|_2\le \|\xi\|_2$ for all $\eta,\xi \in \IC^N$ and the equality holds 
in the direction $\pm \imath b/\|b\|$ (and possibly elsewhere on the unit sphere). 
\end{prop}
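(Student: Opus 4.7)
The plan is to reduce the claim $\|J_A\|_2\le 1$ to the non-expansiveness of the Jacobian of the Gaussian reflected proximal operator evaluated at the clean measurement $w=b$. Under the hypothesis $|\cF(\mu_\infty,f_\infty)|=b$ the factor $I-\diag(b/|x|)$ appearing in \eqref{38'} vanishes and $J_A$ collapses to the Douglas--Rachford Jacobian, to which standard averaged-operator estimates apply; the equality claim on $\pm ib/\|b\|$ then follows from the observation that $b$ lies in the range of the projection $P=CC^\dagger$.

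First I would extract the pointwise equality $|x|=b$ from the hypothesis. Rearranging \eqref{sa} yields $u=\frac{\rho-1}{\rho+1}x+\frac{2}{\rho+1}b\odot\sgn(x)$, and applying $P_\infty$ together with the identity $P_\infty R_\infty u=P_\infty u$ gives $|P_\infty u| = (\rho|x|+b)/(\rho+1)$ pointwise. The hypothesis $|A_\infty f_\infty|=|P_\infty u|=b$ then forces $|x|=b$ for $\rho>0$, and the $\rho=0$ case uses the parallel relation $|y|=b$ obtained from \eqref{sb}. With $|x|=b$, the correction term in \eqref{38'} vanishes and $J_A(\eta) = P\eta - \alpha\Re(R\eta)$ with $R=2P-I$ and $\alpha=1/(1+\rho)$.

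Using $\tfrac{1}{2}(I+R)=P$ and $\Re(R\eta)=R\eta-i\Im(R\eta)$, I would rewrite this as $J_A=\tfrac{1}{2}(I+MR)$, where $M$ is the componentwise operator $\zeta\mapsto\frac{\rho-1}{\rho+1}\Re(\zeta)+i\Im(\zeta)$. A direct linearization of the complex sign map identifies $M$ with the Jacobian at $w=b$ of $2\prox_{\cL/\rho}-I$. Since $M$ has componentwise eigenvalues $(\rho-1)/(\rho+1)$ and $1$, both of magnitude at most one, $\|M\|\le 1$; combined with the isometry of $R$, the triangle inequality yields
\begin{equation*}
\|J_A(\eta)\|_2 \le \tfrac{1}{2}\|\eta\|_2 + \tfrac{1}{2}\|MR\eta\|_2 \le \|\eta\|_2.
\end{equation*}
For the equality claim, solving \eqref{sa} in the $\Omega$-frame yields $\Omega^*u = b$ and hence $Pb=b$, so $b\in\mathrm{range}(P)$. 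For $\eta=\pm ib/\|b\|$ one obtains $P\eta=\eta$, $R\eta=\eta$, $\Re(R\eta)=0$ and $J_A(\eta)=\eta$, giving equality; further equality directions occur exactly when $\mathrm{range}(P)$ contains additional purely imaginary vectors. The identical argument handles $J_B$ with $(B_\infty,S_\infty,v,y)$ in place of $(A_\infty,R_\infty,u,x)$.

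The main obstacle is the identification $J_A=\tfrac{1}{2}(I+MR)$: formula \eqref{38'} is written in a way that obscures the Douglas--Rachford structure, and extracting it requires carefully linearizing the complex $\sgn$ map at a $w$ with $|w|=b>0$ and then using $|x|=b$ to eliminate the $(I-\diag(b/|x|))$ correction. Once that identification is in place, the non-expansiveness is the classical averaged-operator bound, and the equality in the distinguished direction drops out of the fixed-point structure with essentially no further computation.
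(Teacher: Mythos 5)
Your proof is correct, but it takes a genuinely different route from the paper's. The paper QR-factors $CC^\dagger=GG^*$ and invokes the spectral machinery of \cite{FDR} (Propositions 5.6, 5.9, Corollary 5.10) to block-diagonalize $J_A$ into $2\times 2$ real blocks indexed by the singular-value pairs $\lamb_k^2+\lamb_{2n^2+1-k}^2=1$, then computes all eigenvalues explicitly (formula \eqref{52}) and checks their moduli are at most $1$, with the maximizer located via $\eta_1=b$. You instead exhibit the averaged-operator structure $J_A=\tfrac12(I+MR)$ with $R=2CC^\dagger-I$ unitary and $M:\zeta\mapsto\tfrac{\rho-1}{\rho+1}\Re(\zeta)+\imath\Im(\zeta)$ a componentwise real-linear contraction, so that non-expansiveness drops out of the triangle inequality, and equality at $\pm\imath b/\|b\|$ follows from $CC^\dagger b=b$ and $\Re(\pm\imath b)=0$. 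I verified the identification $J_A=\tfrac12(I+MR)$ against \eqref{38''}: it is exact once $|x|=b$. Your argument is shorter, self-contained, and uniform in $\rho$; what it does not deliver is the full spectrum — in particular the second-largest eigenvalue $1-\lamb_{2n^2-1}^2$ and hence the spectral gap that the paper's computation makes visible and that is relevant to the observed geometric convergence rate. Two small caveats: your derivation of $|x|=b$ via the Pythagoras identity degenerates at $\rho=0$ (the paper glosses over the same point, so this is not a gap relative to the paper), and your parenthetical ``exactly when $\mathrm{range}(P)$ contains additional purely imaginary vectors'' overstates the characterization at $\rho=0$, where $M$ becomes the isometry $\zeta\mapsto-\bar\zeta$; since the proposition only asserts equality ``possibly elsewhere,'' neither affects correctness.
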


Moreover, under the uniqueness conditions of \cite{blind-ptych}, 
Proposition \ref{thm:stable} and \ref{thm:stable2} then imply that the true solution and its inherent ambiguities (modulo a scaling factor
and affine  phase ramp) are the only stable fixed points of Gaussian-DRS with the unit step size. 
Therefore Gaussian-DRS with the unit step size should not stagnate near a fixed point
that is not a solution to blind ptychography.

\section{Experiments}\label{sec:num}

As motivated by Proposition \ref{thm:stable}, we fix the DRS parameter $\rho=1$ in all simulations. As a result, \eqref{G1}-\eqref{G2} become
\beqn
u_{k}^{l+1} &= &\frac{1}{2}u_{k}^{l} +\frac{1}{2}b\odot \sgn\big(R_ku_{k}^{l}\big)\\
v_{k}^{l+1}& =& \frac{1}{2} v_{k}^{l}+\frac{1}{2}b\odot \sgn{\Big(S_k v_{k}^{l}\Big)}. 
% v_{k}^{l+1}& =& \frac{1}{2} v_{k}^{l}+\frac{1}{2}b\odot \sgn{\Big(2B_k\,\sgn{\big(B_k^\dagger v_{k}^{l}\big)}- v_{k}^{l}\Big)}. 
\eeqn
and \eqref{P1}-\eqref{P2} become
 \beqn u_{k}^{l+1}& =& \frac{1}{2}u_{k}^{l} -\frac{1}{3}R_k u_{k}^{l}+\frac{1}{6}\sqrt{|R_k u^l_k|^2+24b^2}\odot \sgn{\Big(R_k u_{k}^{l}\Big)}\nn \\
 v_{k}^{l+1} &=& \frac{1}{2} v_{k}^{l} -\frac{1}{3}S_k v_{k}^{l}+\frac{1}{6}\sqrt{|S_k v_{k}^{l}|^2+24b^2}\odot \sgn{\Big(S_k v_{k}^{l}\Big)}.\nn
		 %v_{k}^{l+1} &=& \frac{5}{6} v_{k}^{l} -\frac{2}{3}B_k\sgn{\big(B_k^\dagger v_{k}^{l}\big)}\\
		 %&&+\frac{1}{6}\sqrt{|2B_k\sgn{\big(B_k^\dagger v_{k}^{l}}\big)- v_{k}^{l}|^2+24b^2}\odot \sgn{\Big(2B_k\sgn{\big(B_k^\dagger v_{k}^{l}\big)}- v_{k}^{l}\Big)}.\nn
\eeqn

\subsection{Test objects}
\begin{figure}[t]
\centering
\subfigure[]{\includegraphics[width=6.5cm]{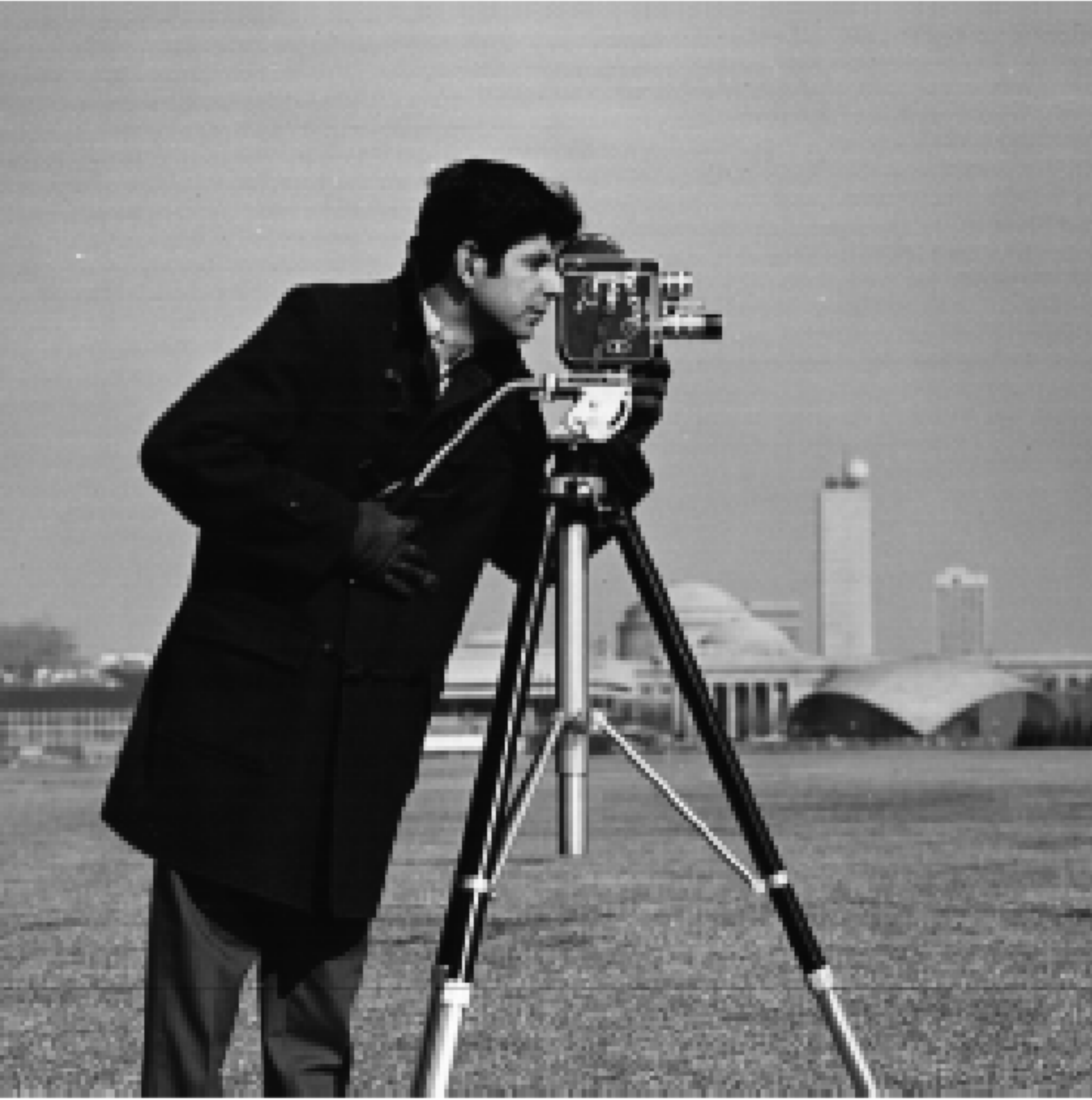}}\hspace{1cm}
\subfigure[]{\includegraphics[width=6.5cm]{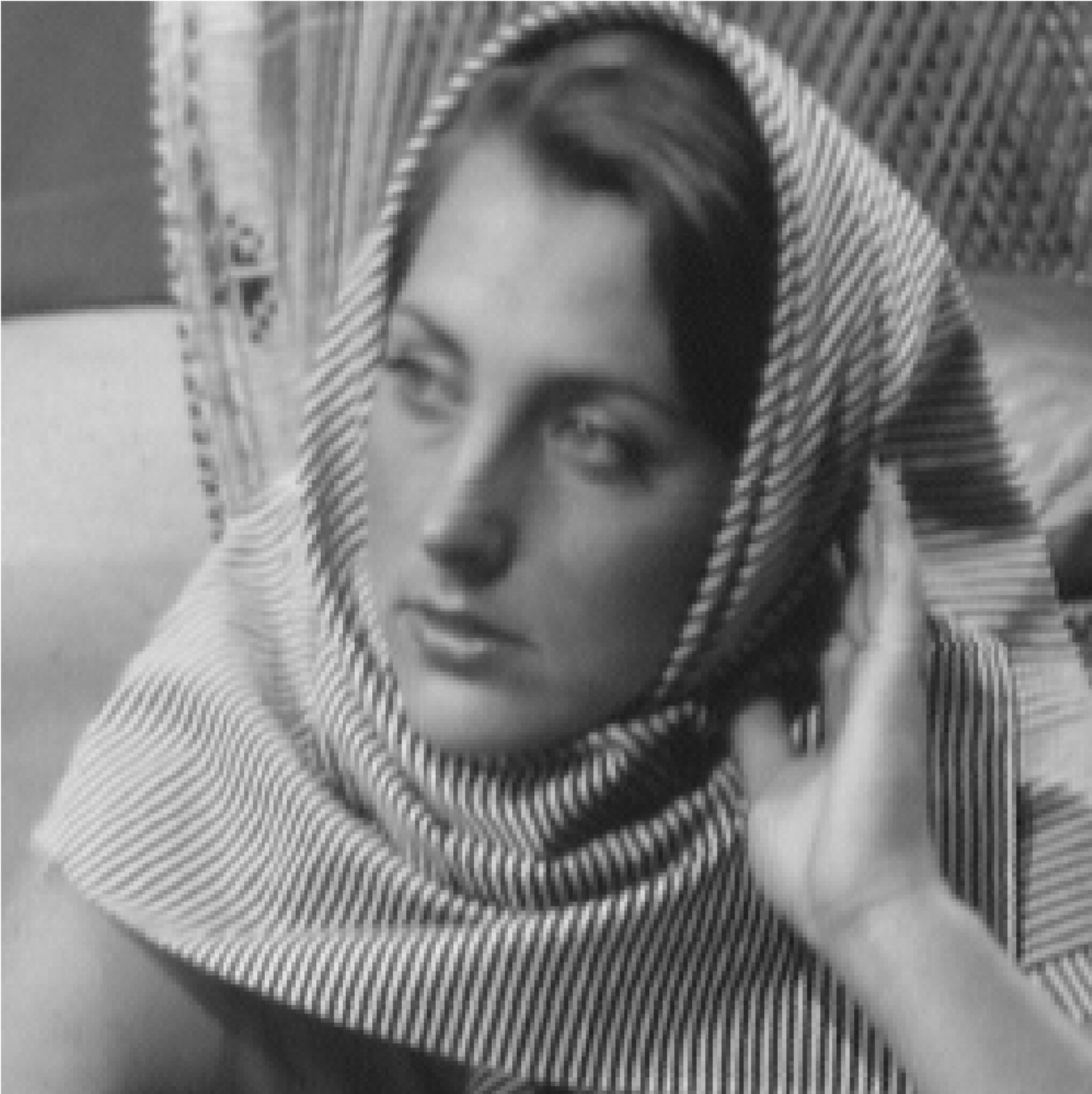}}
%\subfigure[]{\includegraphics[width=3cm]{figs/inimask_diff.png}}
  \caption{(a) The real part and (b) the imaginary part of the first test image.}
  \label{fig:image}
\end{figure}%

\begin{figure}[h]
\centering
 \subfigure[RPP magnitudes]{\includegraphics[width=6.5cm]{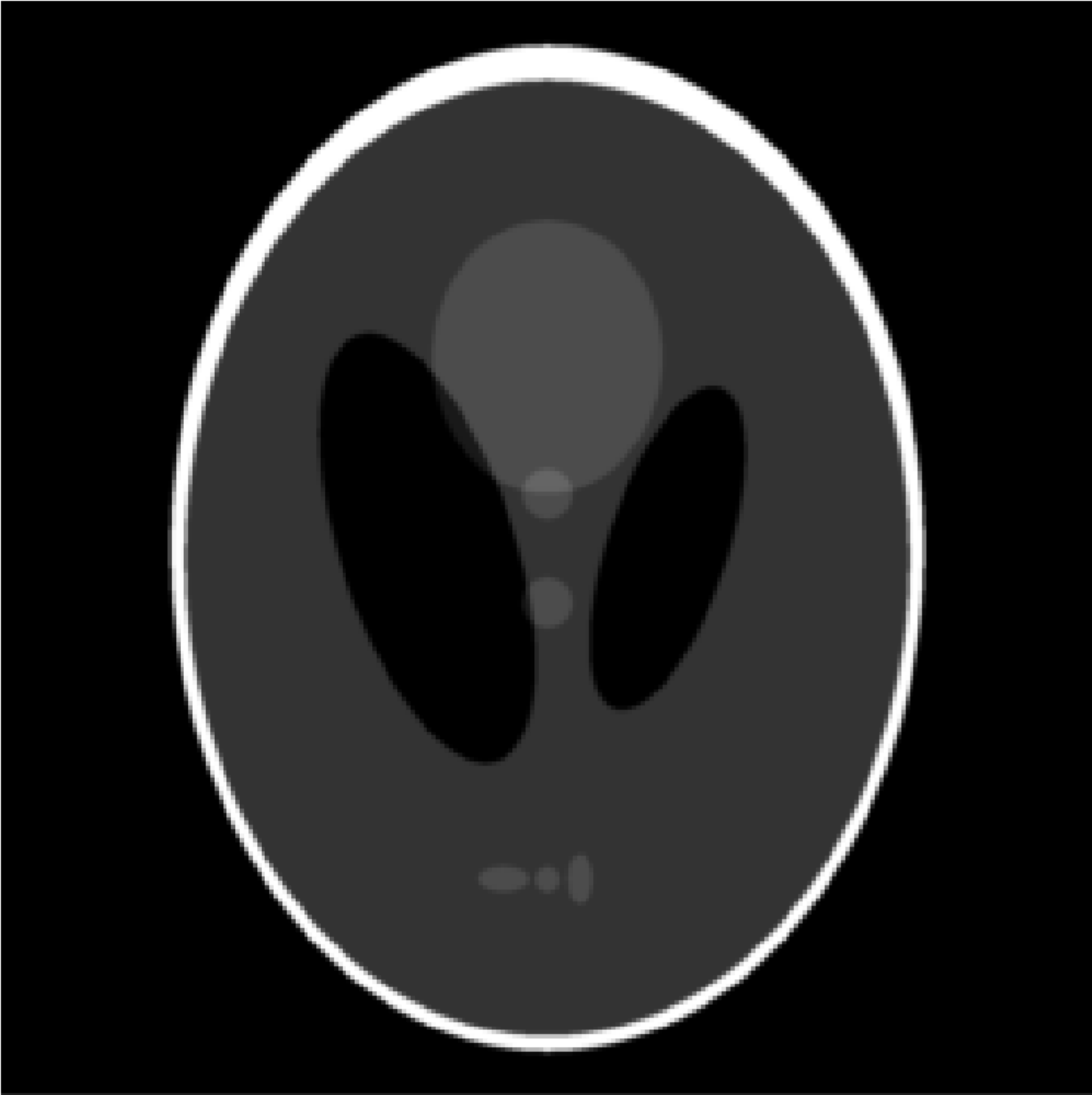}}\hspace{0.7cm}
\subfigure[RPP phases ]{\includegraphics[width=7cm,height=6.8cm]{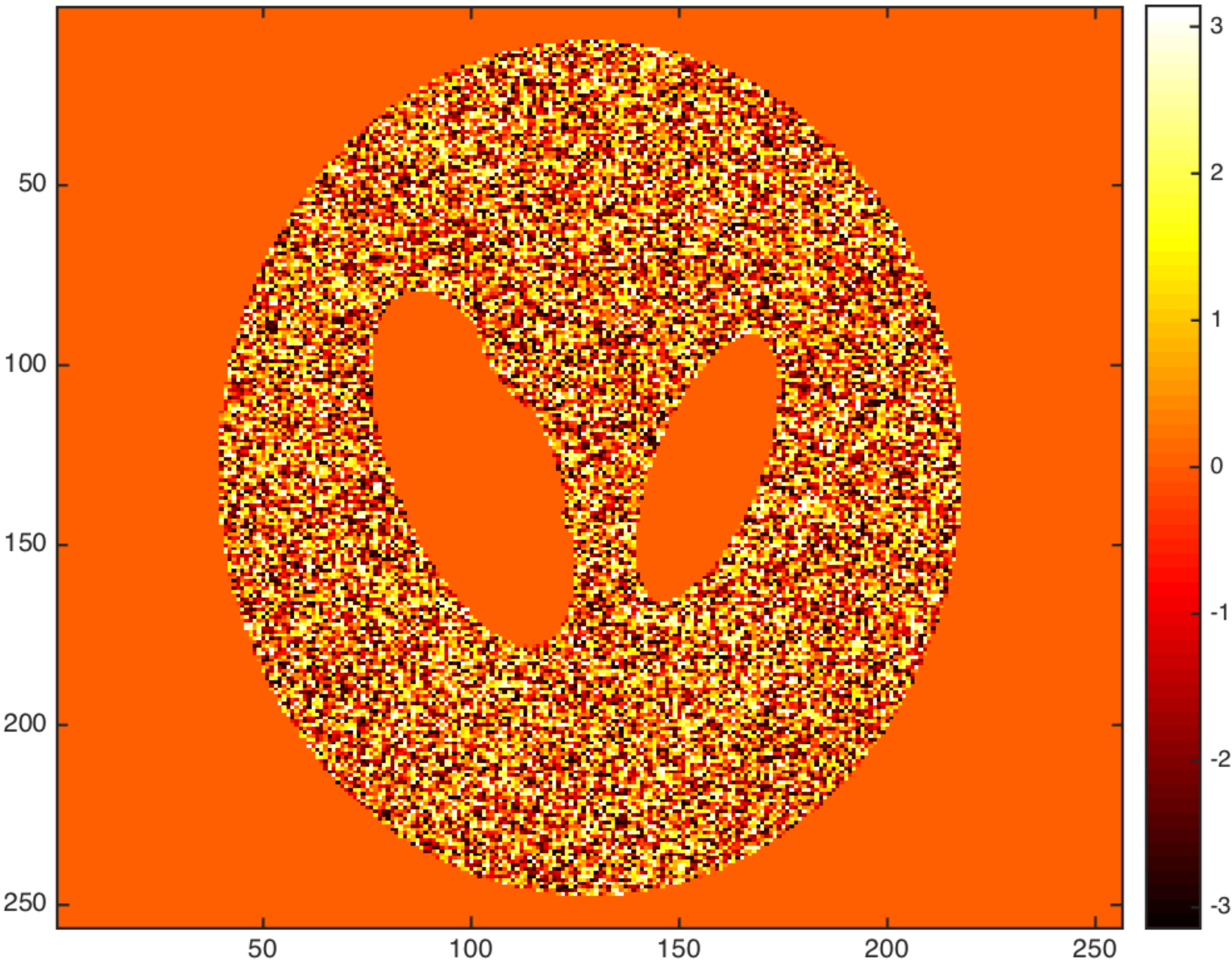}}  \\
% \subfigure[Magnitude of salted RPP]{\includegraphics[width=7cm]{}}%\hspace{0.5cm}
% \subfigure[The salted phase]{\includegraphics[width=7.5cm,height=7cm]{}}
 \caption{(a) Magnitudes and (b) phases of  RPP. }
 \label{fig:RPP}
 \end{figure}

\commentout{
\begin{figure}
\centering
\subfigure[corn grains]{\includegraphics[width=7cm]{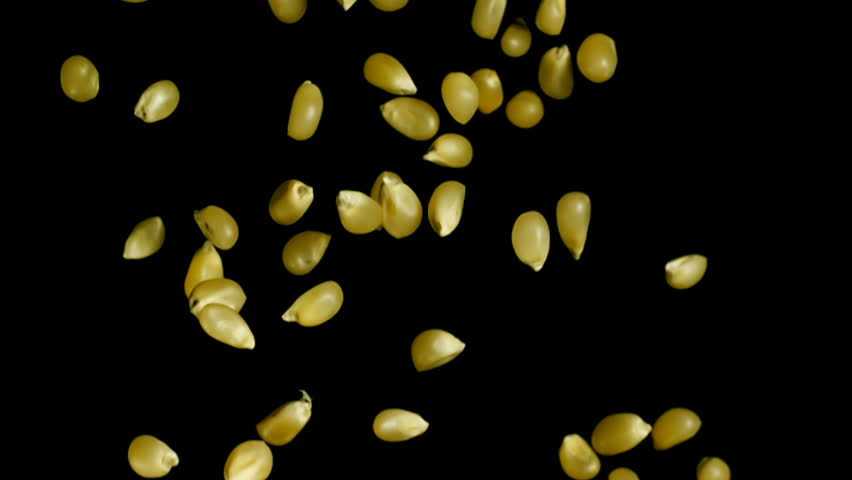}}
\caption{}
\label{fig:corn}
\label{fig11}
\end{figure}
}
Our first test image  is 256-by-256 Cameraman+ $\im$ Barbara (CiB). The resulting test object has the phase range $\pi/2$. 
The second test object is randomly-phased phantom (RPP) defined by $
f_*=P\odot  e^{\im\phi}
$
where $P$ is the standard phantom (Fig. \ref{fig:RPP}(a)) and  $\{\phi(\bn)\}$ are i.i.d. uniform random variables over $[0,2\pi]$. 
RPP has the maximal phase range because of its noise-like phase profile.  In addition to the huge phase range, RPP has loosely supported parts with respect to the measurement schemes (see below) due to its thick dark margins around the oval. 

\commentout{%08/23/2018
The most disconnected object that we consider here is the corn-grain image (Fig. \ref{fig11}(a) ), which
easily violates the strong connectedness condition \eqref{s-conn}. Again we compare the results of the corn grains
and the salted version to see how the strong connectedness condition \eqref{s-conn} affects the performance and
explore the possibility of 
adding salt as an effective tool for ptychographic reconstruction of disconnected objects. 
}

\subsection{Measurement schemes}\label{sec:scan}

\begin{figure}
\centering
\subfigure[i.i.d. probe]{\includegraphics[width=7cm]{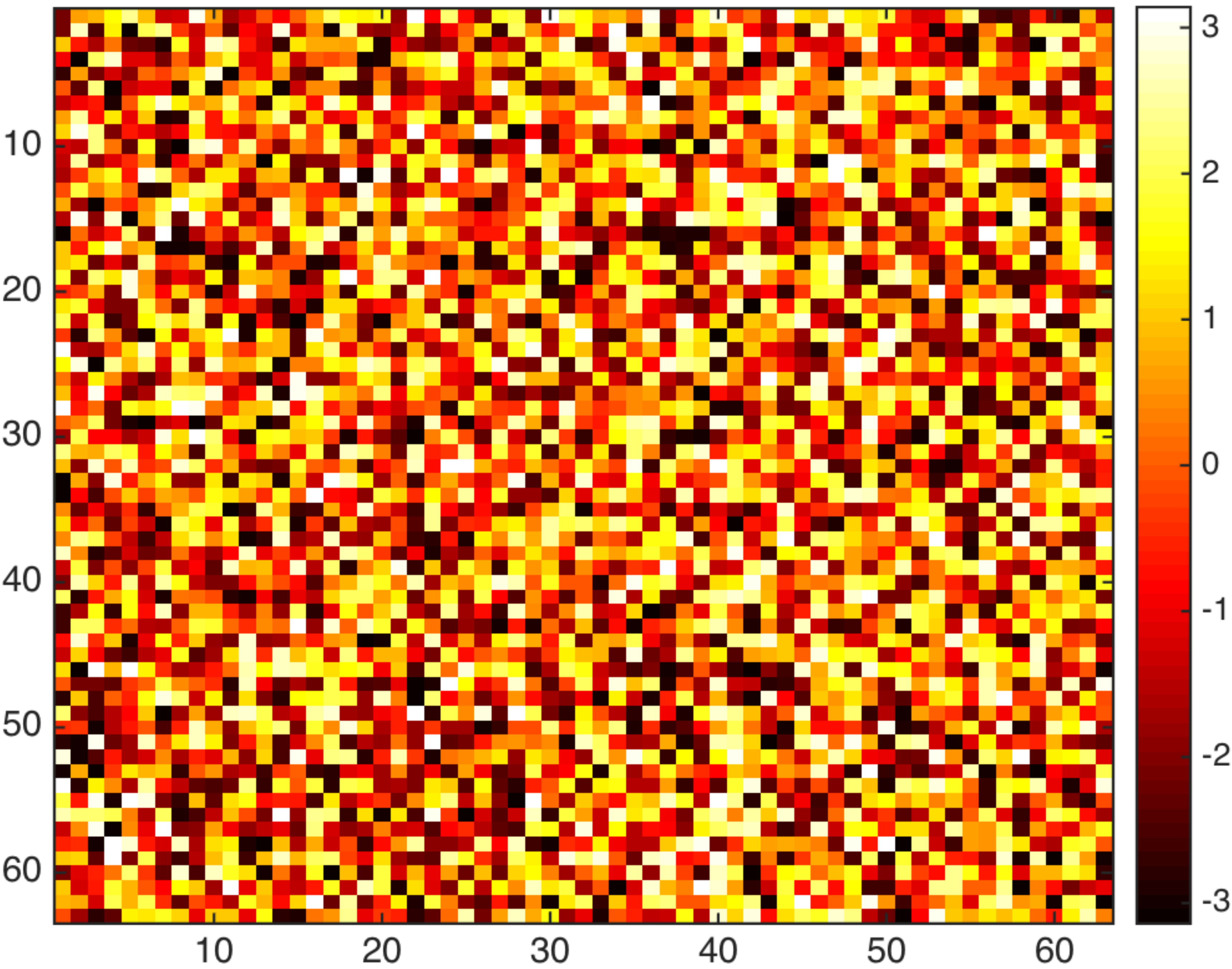}}\quad
\subfigure[Correlated probe $c=0.4$]{\includegraphics[width=7cm]{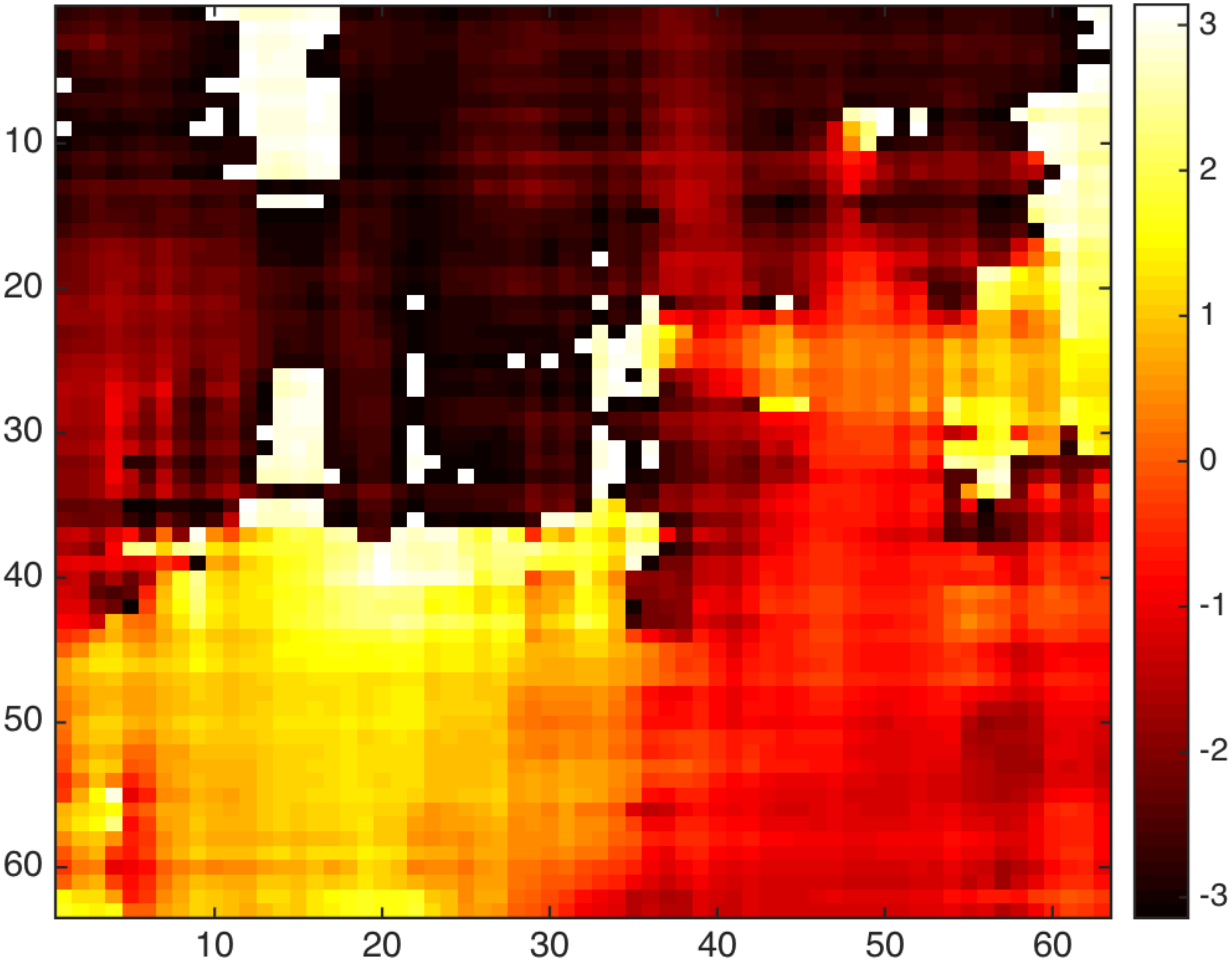}}\\
\subfigure[Correlated probe $c=0.7$]{\includegraphics[width=7cm]{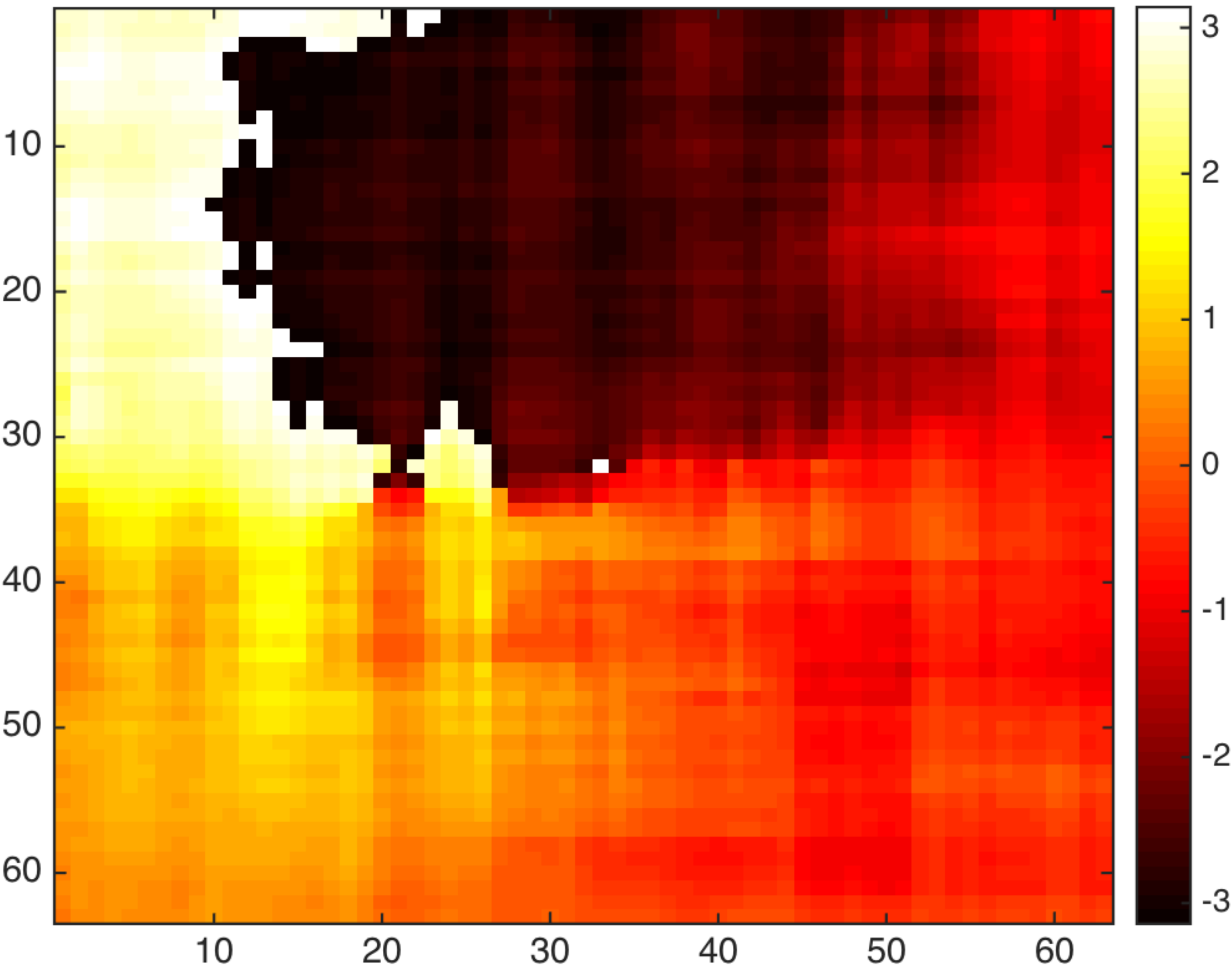}}\quad
\subfigure[Correlated probe $c=1$]{\includegraphics[width=7cm]{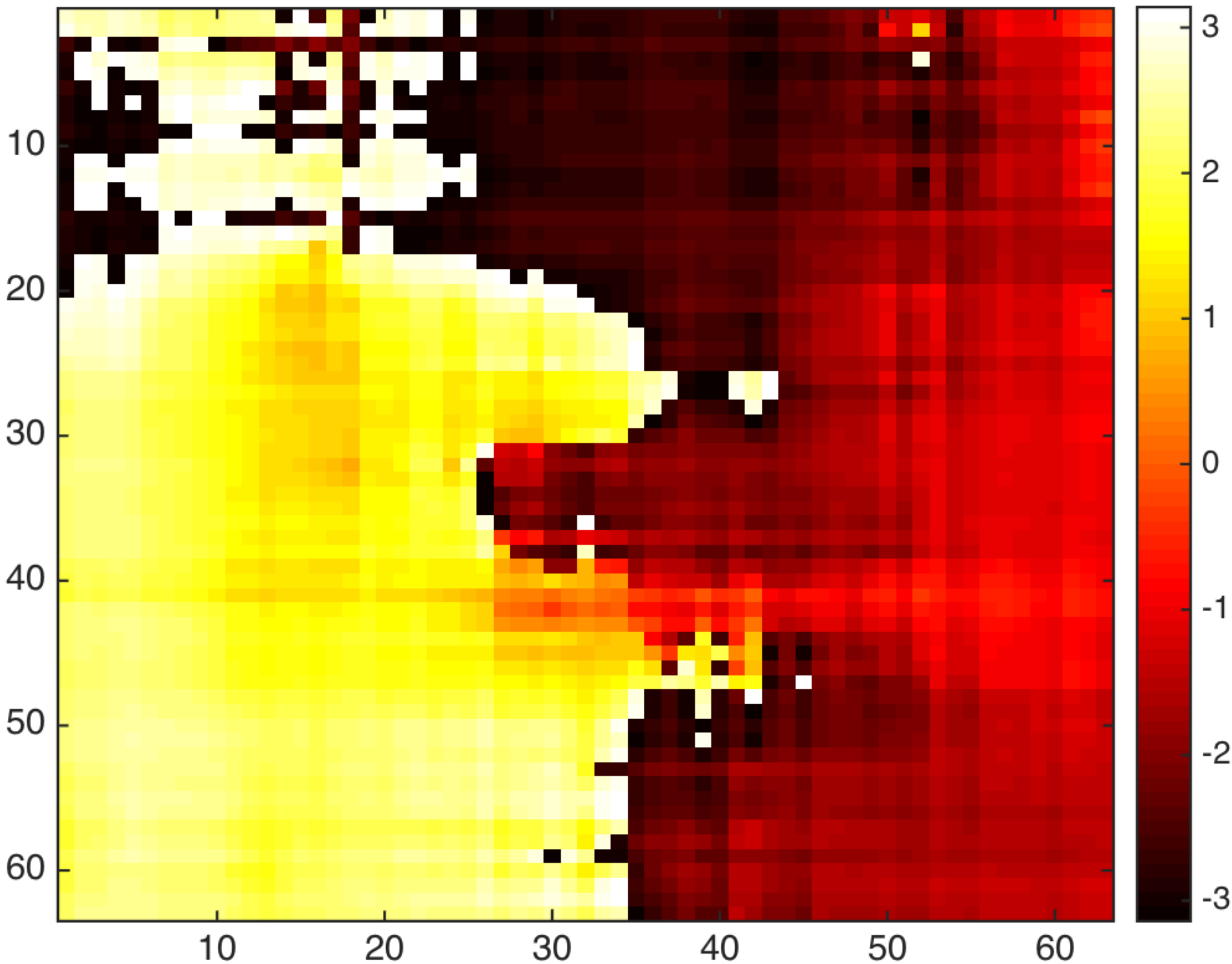}}
\caption{The phase profile of (a) the i.i.d. probe and (b)(c)(d) the correlated probes of various correlation lengths.}
\label{fig:probes}
\end{figure}
We do not explore the issue of varying 
the probe size in the present work, which has been done with the classical Douglas-Rachford algorithm  in \cite{ptych-unique}.
We fix the probe size to $60\times 60$. In addition to the i.i.d. probe, we consider also correlated probe
produced by convolving the i.i.d. probe with characteristic function of the set $\{(k_1,k_2 )\in \mathbb{Z}^2 : \max\{|k_1|,|k_2|\} \leq c\cdot m;\ c\in (0,1]  \}$ where  the constant $c$ is a measure
of the correlation length in the unit of $m=60$ (Fig. \ref{fig:probes}). 

We let  $\delta_{k}^1$ and $\delta^2_l$ in the rank-one scheme \eqref{rank1} and 
 $\delta_{kl}^1$ and $\delta^2_{kl}$ in the full-rank scheme \eqref{rank2} to be {i.i.d. uniform random variables  over $\lb -4,4\rb$}. 
In other words, the adjacent probes overlap by an average of $\tau/m=50\%$.

\subsection{Error metrics}

 We use relative error (RE) and relative residual (RR) as the merit metrics  for the recovered image $f_k$ and probe $\mu_k$ at the $k^{th}$ epoch:
\beq\label{RE}
\mbox{RE}(k)&= &\min_{\alpha\in \mathbb{C}, \mathbf{k} \in \mathbb{R}^2}\frac{\|f(\mathbf{k}) - \alpha e^{-\imath {2\pi}\mathbf{k}\cdot \mathbf{r}/n} f_k(\mathbf{k})\|_2}{\|f\|_2}\\
%\mbox{\rm RE}( \mu_k)&: = &\min_{\alpha\in \mathbb{R}, \mathbf{k}_{\mu} \in \mathbb{R}^2}\frac{\|\mu^0(\mathbf{m}) - e^{\imath (\alpha-\frac{2\pi}{n}\mathbf{k}_{\mu}\cdot \mathbf{m})} \mu_k(\mathbf{m})\|_2}{\|\mu^0\|_2}\\
\mbox{\rm RR}(k)&= & \frac{\|b- |A_kf_{k}|\|_2}{\|b\|_2}.
\eeq
Note that in \eqref{RE} the linear phase ambiguity is discounted along with a scaling factor.

\subsection{Different combinations}
First we compare performance of AMDRS with different combinations of objective functions, scanning schemes and
random probes in the case of  noiseless measurements with  the periodic boundary condition. 
We use the stopping  criteria for the inner loops: 
\begin{equation*}
\frac{\| |A_k A_k^\dagger u_{k}^{l}|-b\|_2-\||A_k A_k^\dagger u_{k}^{l+1}|-b\|_2}{\||A_k A_k^\dagger u_{k}^{l}|-b\|_2} \leq 10^{-4}
\end{equation*} 
with the maximum number of iterations capped at 60.

\begin{figure}
\centering 
%\subfigure[]{\includegraphics[width=.3\linewidth]{figs/recon_RE.png}}
%\subfigure[]{\includegraphics[width=.3\linewidth]{figs/recon_IM.png}}
\subfigure[]{\includegraphics[width=8cm]{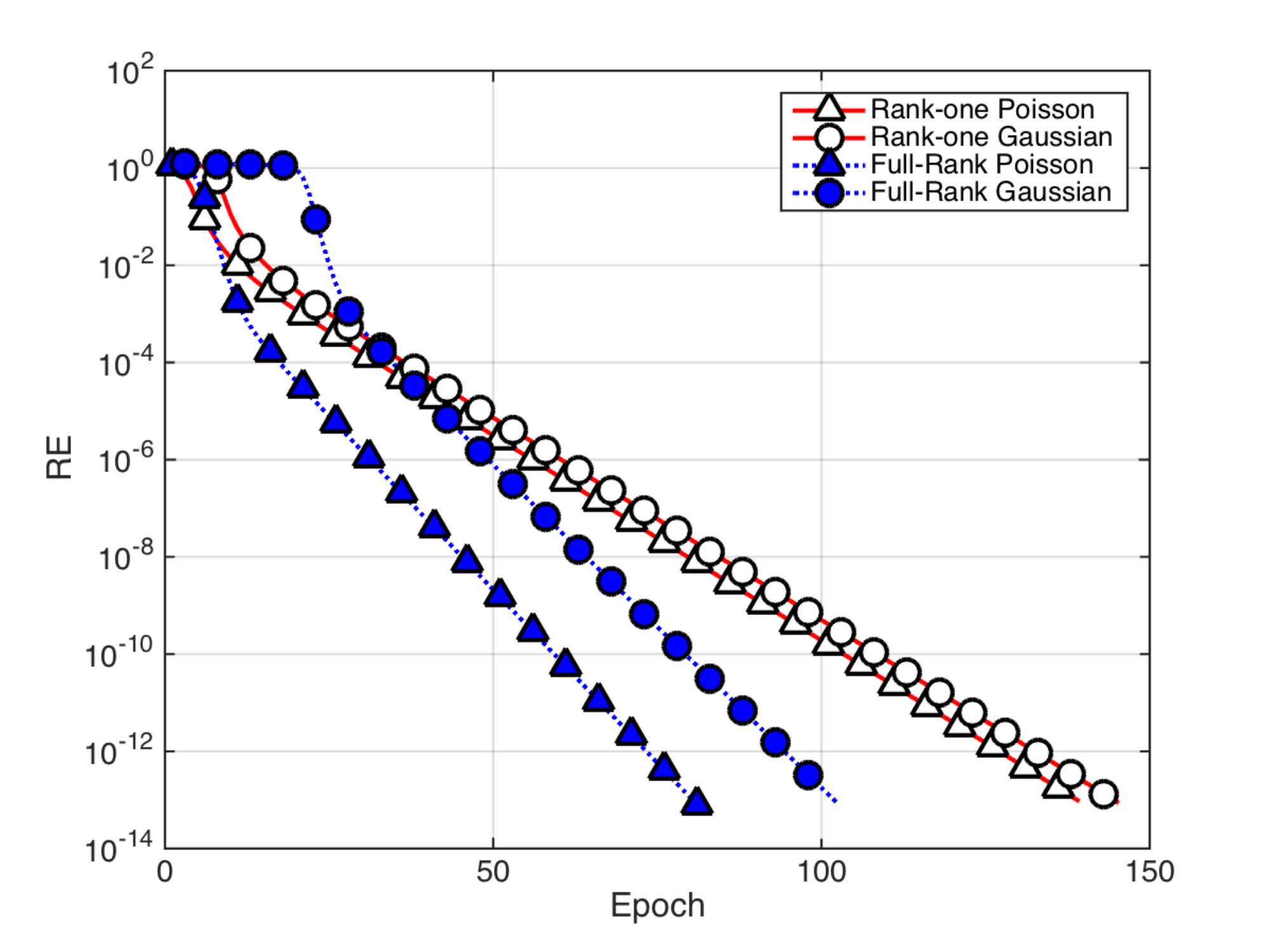}}
 % \caption{
  %The convergence rate is 0.7345 for  the rank-one Poisson case, 0.7352 for the rank-one Gaussian case,  0.5973 for the full-rank Poisson case and 0.5961 the full-rank Gaussian case. }
%\end{figure}%
%\begin{figure}
\centering
%\subfigure[]{\includegraphics[width=16cm]{figs/conv_Coriidfresnel.png}}
\subfigure[]{\includegraphics[width=8cm]{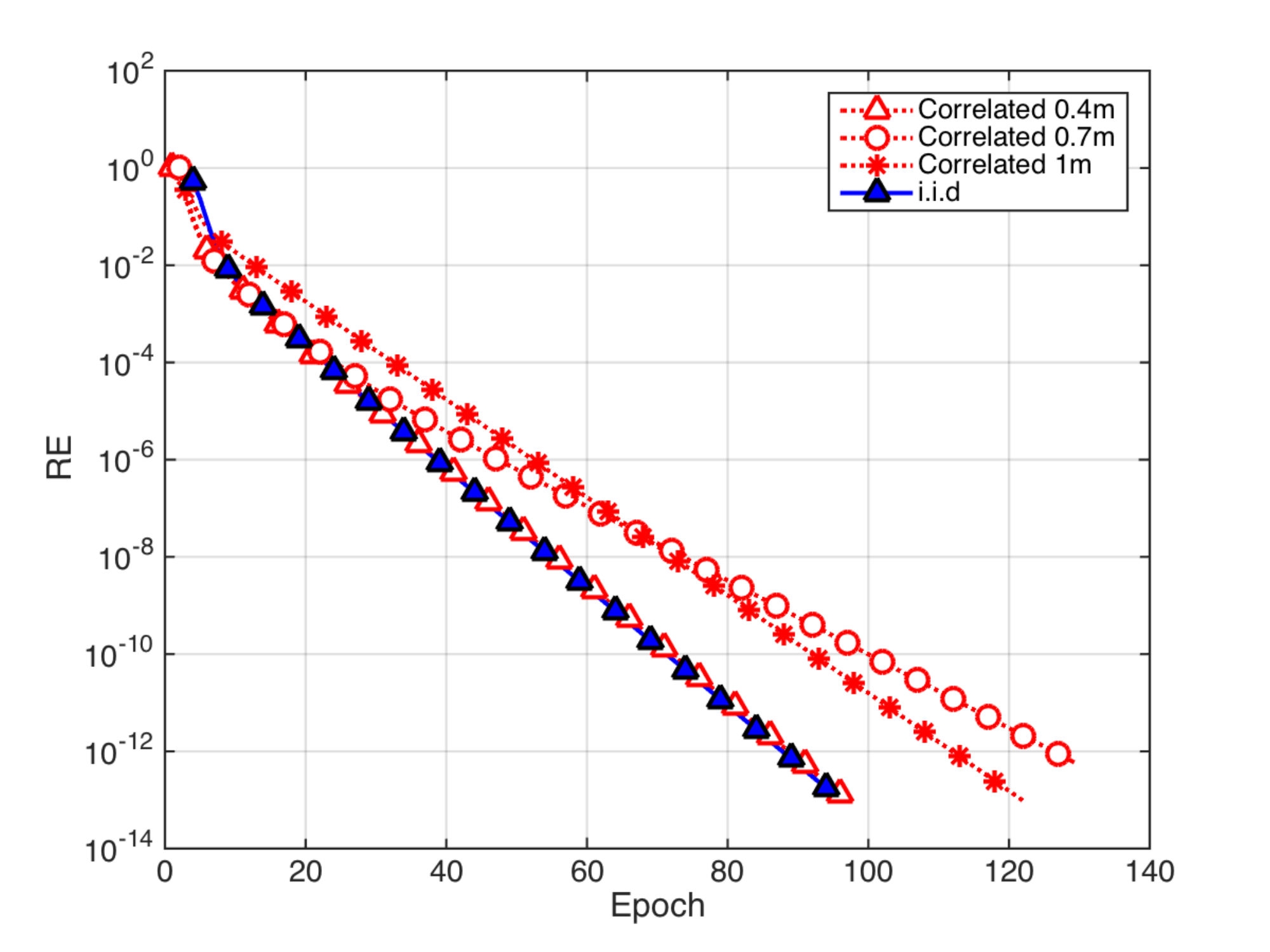}} \\
%\subfigure[]{\includegraphics[width=8cm]{figs/convergence_Cor.png}} 
%\subfigure[]{\includegraphics[width=8cm]{figs/convergence_iid_cor_fresnel.png}}
  \caption{ Geometric convergence to CiB at various rates for (a) Four combinations of objective functions and scanning schemes with i.i.d. probe (rank-one Poisson, $r_o=0.8236$; rank-one Gaussian, $r_o=0.8258$; full-rank Poisson, $r_o=0.7205$; full-rank Gaussian, $r_o=0.7373$) and (b) Poisson-DRS with four probes of different correlation lengths ($r_o=0.7583$ for $c=0.4$; $r_o=0.8394$ for  $c=0.7$; $r_o=0.7932$ for $c=1$; $r_o=0.7562$ for iid probe)
  % (left) Red triangle: c=0.4, rate=0.731; Green circle: c=0.7, rate=0.746; Blue square: c=1, rate=0.758. (right) Red triangle: \mbox{i.i.d.} probe, rate=0.732; Green circle: Correlated probe c=1, rate=0.806
  }
  \label{fig1}
  \label{fig2}
\end{figure}%

%%%% refer to (30) in example 6.1 %%%%%%% 
Figure \ref{fig1} shows geometric decay of RE \eqref{RE}  at various rates $r_o$ for the test object CiB. 
In particular, Fig. \ref{fig1}(a) shows that the full-rank scheme outperforms the rank-one scheme
and that Poisson-DRS outperforms (slightly) Gaussian-DRS while
Figure \ref{fig2}(b) shows that the i.i.d. probe yields the smallest rate of convergence ($r_o=0.7562$) closely followed by
the rate ($r_o=0.7583$)  for $c=0.4$. 
\commentout{%08/02/2018
In contrast, the algorithm does not
converge for 
the Fresnel probe $\mu^0(k_1,k_2) :=\exp\{\im \pi \gamma (k_1^2+k_2^2)/m\}$. 
}

\commentout{
\subsection{Linear phase ambiguity}

To see if the affine phase ambiguity is a fixed point of the algorithm, we test the initial guess $\mathbf{\mu_1} := \mathbf{\mu^0}\sgn\big(\sum_{0}^5 e^{\imath\theta_i} { e^{\imath 2\pi\mathbf{k}_i\cdot \mathbf{m}/n}}\big)$, where $ \theta_i \sim {\rm i.i.d.}(0,2\pi)$ and 
$ \mathbf{k}_1=(-1,1); \mathbf{k}_2=(1,0); \mathbf{k}_3=(0,1); \mathbf{k}_4= (1,1); \mathbf{k}_5= (1,2)$. 
}

\commentout{
\begin{figure}[h]

\centering
 % \subfigure[$ \mathbf{k}_f=(0,0), r_r=0.740, $\quad\quad $r_o=0.7466, r_m=0.745.$]{\includegraphics[width=5cm]{figs/LPA1.png}}
  \subfigure[$\mathbf{k}_f=(1,1), r_r=0.7418$, $r_o=0.74,r_m=0.7420$]{\includegraphics[width=8cm]{figs/LPA2.png}}\quad
  \subfigure[$\mathbf{k}_f=(3,0), r_r=0.7426$$r_o=0.7421, r_m=0.7429$]{\includegraphics[width=8cm]{figs/LPA3.png}}
  \caption{ 
  %The convergence rate and $\mathbf{k}_f, \mathbf{k}_{\mu}$: 1a)residual: rate=0.741. RE image: rate=0.737, $\mathbf{k}_f=(-1,0)$. RE probe: rate=0.745, $\mathbf{k}_{\mu}=(1,0)$; 2a) residual: rate=0.738. RE image: rate=0.725, $\mathbf{k}_f=(-1,2)$. RE probe: rate=0.733, $\mathbf{k}_{\mu}=(1,-2)$; 3a)residual: rate=0.743. RE image: rate=0.740, $\mathbf{k}_f=(1,1)$. RE image: rate=0.741, $\mathbf{k}_{\mu}=(-1,-1)$. 
  }
\end{figure}
}

 \commentout{
 \begin{figure}
 \centering
 \subfigure[$\mathbf{k}_{o}=(0,1.231),  r_o=0.9345, r_m=0.9376$ ]{\includegraphics[width=8cm]{figs/phantom-converge.png}}\quad 
  \subfigure[$\mathbf{k}_f=(1,1),  r_o=0.9115, r_m=0.9153$]{\includegraphics[width=8cm]{figs/phantom-salt-converge.png}}
 % \caption{Rate for residual (a) r_r=0.9364, (b) r_r=0.9154,}
 \caption{Convergence to (a) RPP and (b) salted RPP.}
 \commentout{
 { (a),  the convergence rate is 0.922 for RR, 0.914 for RE (object) and 0.93 for RE (probe).
The optimal wavenumber for the probe estimate is $(-1,0.283)$. 
For salted RPP, the convergence rate is 0.898 for RR, 0.893 for RE (object) and 0.897 for RE (probe). The optimal wavenumber for the probe estimate is $(-1,-3)$.}
}
\label{fig:RPP2}
\end{figure}
}
 \subsection{Poisson noise}
 
\begin{figure}[t]
\centering
  \includegraphics[width=10cm]{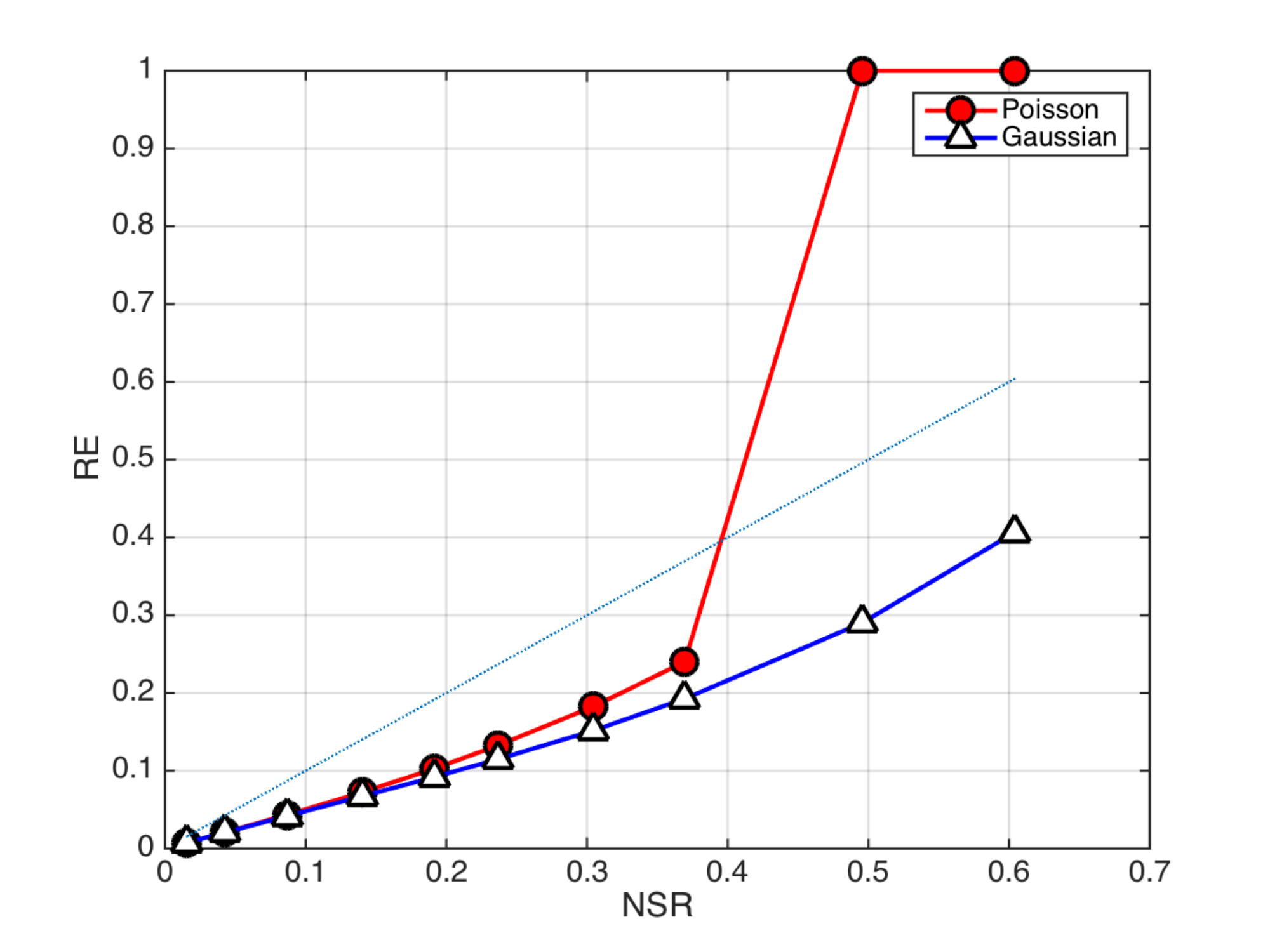}
  \caption{RE versus NSR}
  \label{fig: NSR verse RR}
\end{figure}%

For noisy measurement, the level of noise is measured in terms of the noise-to-signal ratio (NSR).
\begin{equation*}
{\rm NSR}= \frac{\|b-|A_0f_0|\|_2}{\|A_0f_0\|_2}
\end{equation*}
Figure \ref{fig: NSR verse RR} shows RE  \eqref{RE} for CiB versus  NSR for Poisson-DRS and Gaussian-DRS with the periodic boundary condition, i.i.d. probe and the full-rank scheme. The maximum number of epoch in AMDRS is limited to $100$. The RR stabilizes  usually  after 30 epochs. The (blue) reference straight line has slope 1. We see that the Gaussian-DRS outperforms the  Poisson-DRS, especially when  the Poisson RE becomes unstable for NSR $\ge 35\%$.  As noted in \cite{rPIE17,adaptive,AP-phasing} fast convergence (with the Poisson log-likelihood function) may introduce noisy artifacts and reduce reconstruction quality. 

For the rest of the experiments, we use noiseless data, Poisson-DRS and the full-rank scheme.  

\subsection{Boundary conditions}\label{sec:BC}
\begin{figure}
\centering
%\subfigure[Periodic BC]{\includegraphics[width=8cm]{figs/combinedCiBRPP}}
\subfigure[CiB reconstruction with PPC(0, 0, 0.5)]{\includegraphics[width=8cm]{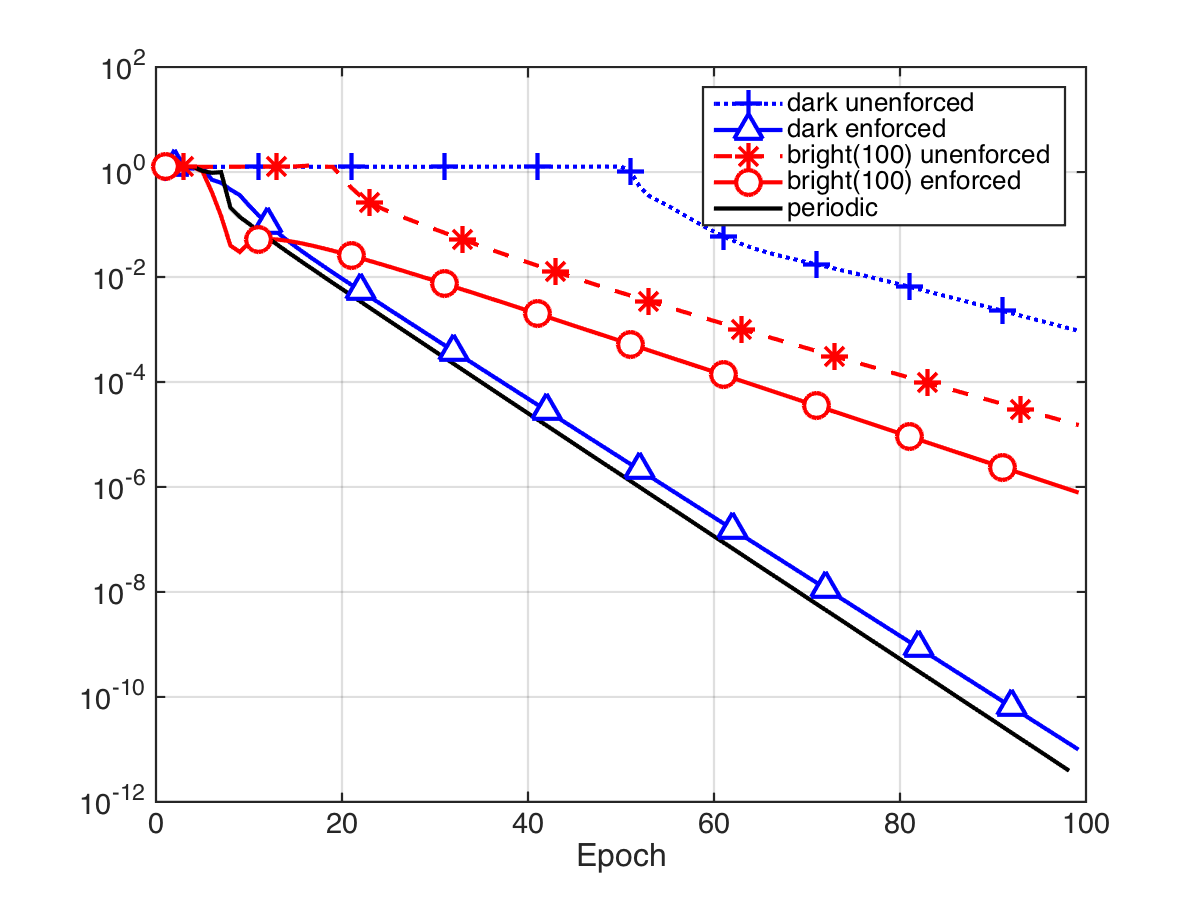}}
\subfigure[RPP reconstruction with PPC(0, 0, 0.4)]{\includegraphics[width=8cm]{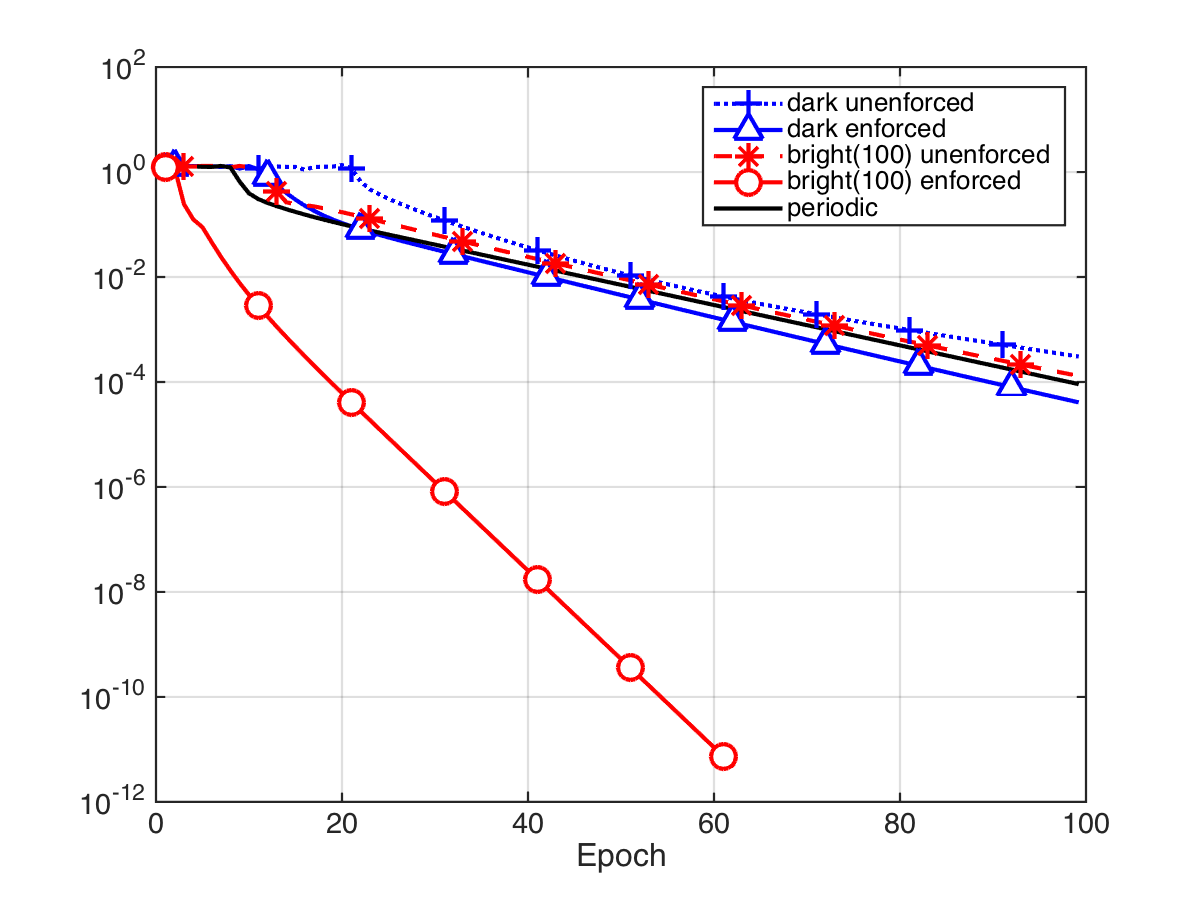}}
\caption{RE under  various boundary conditions}
\label{fig:BC}
\end{figure}

\begin{figure}
\subfigure[Reconstructed moduli with dark BC]{\includegraphics[width=6.7cm,height=6.7cm]{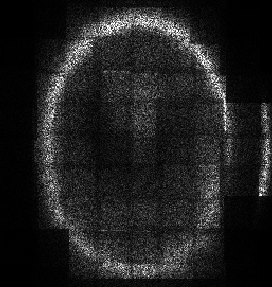}}\quad\quad\quad
\subfigure[Reconstructed phase error with periodic BC]{\includegraphics[width=7.3cm,height=7cm]{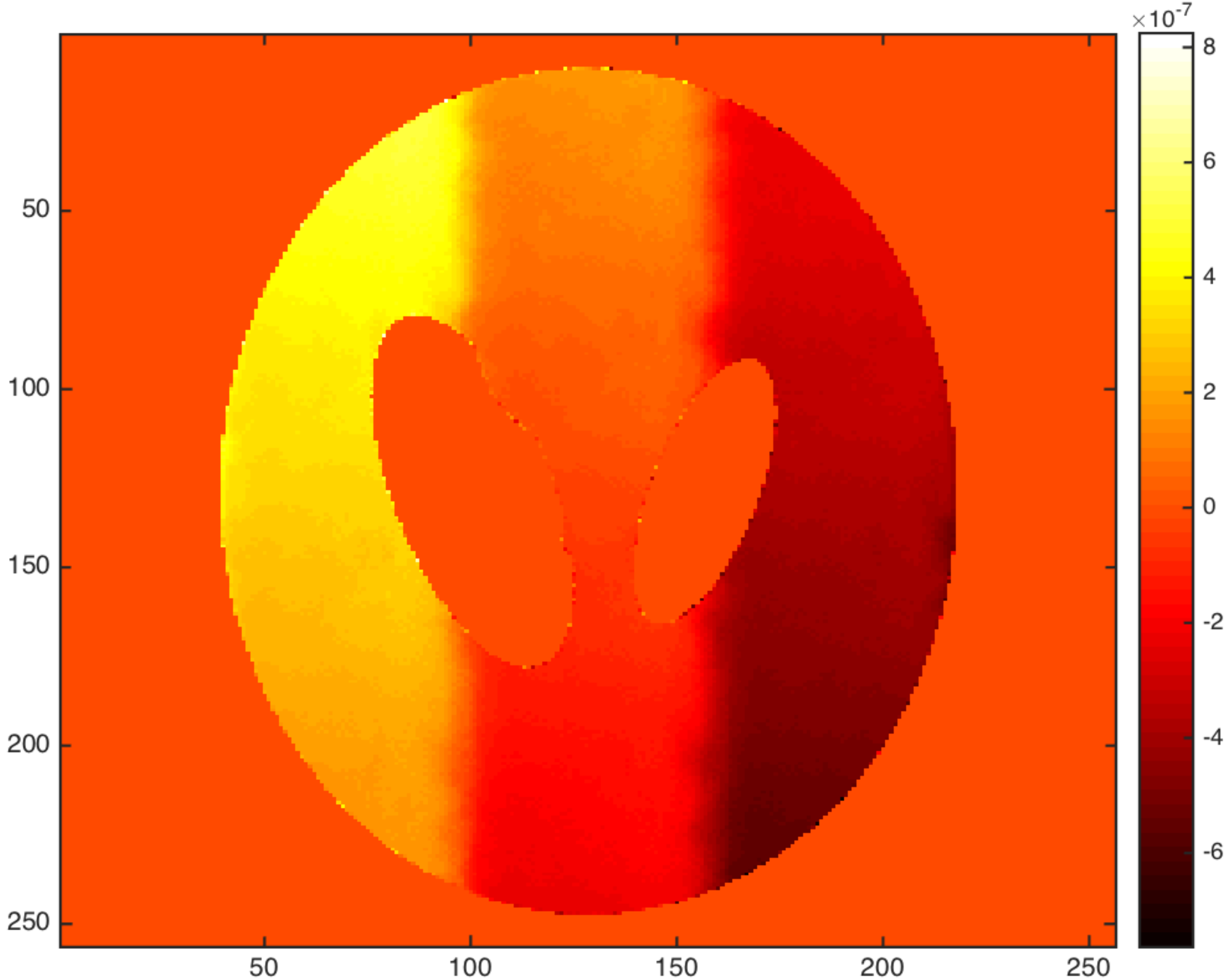}}
\caption{RPP reconstruction with PPC(0, 0, 0.5) and (a) the dark-field BC and
(b) the periodic BC. }
\label{fig:dark}
\end{figure}

\commentout{10/29/2018
\begin{figure}
\centering
 \subfigure[Moduli of salted RPP]{\includegraphics[width=.4\linewidth]{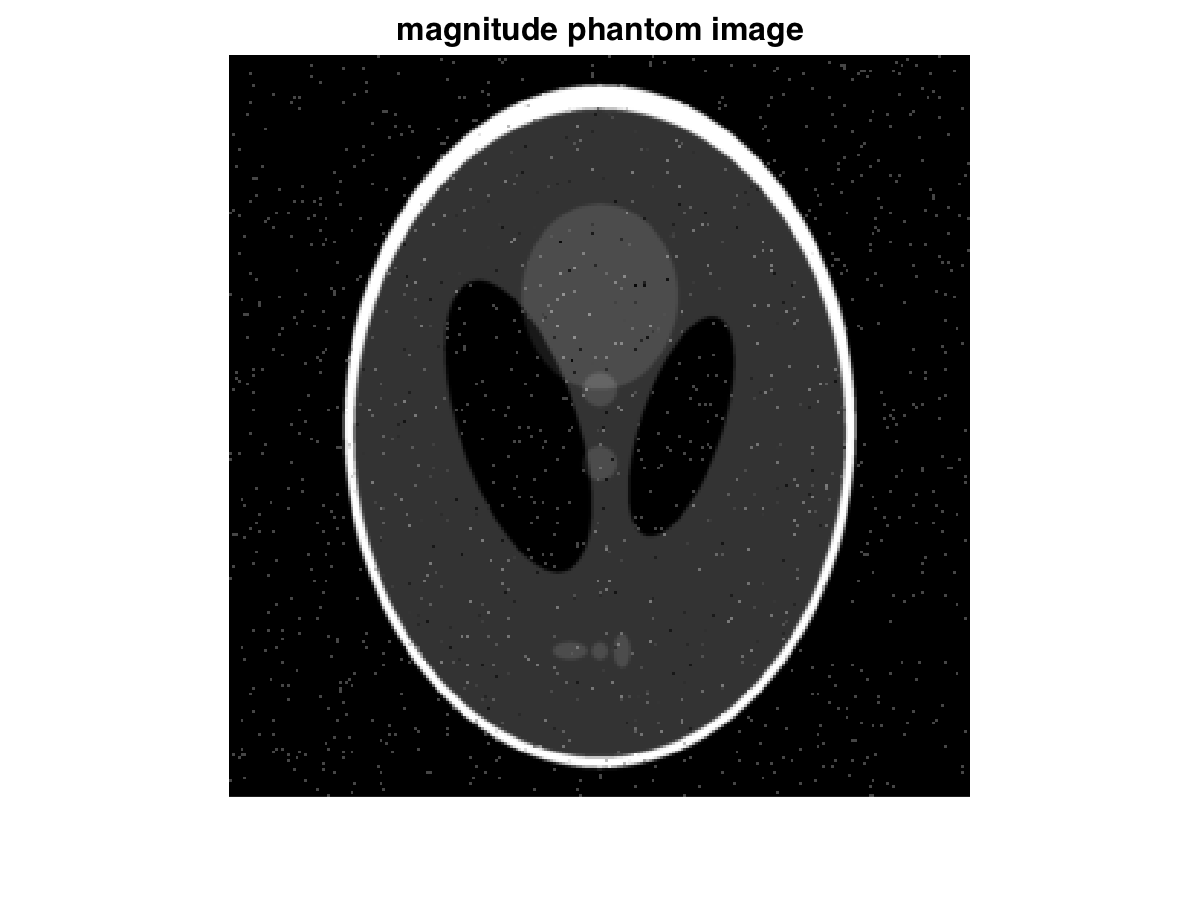}}
 \subfigure[Phases of salted RPP ]{\includegraphics[width=.4\linewidth]{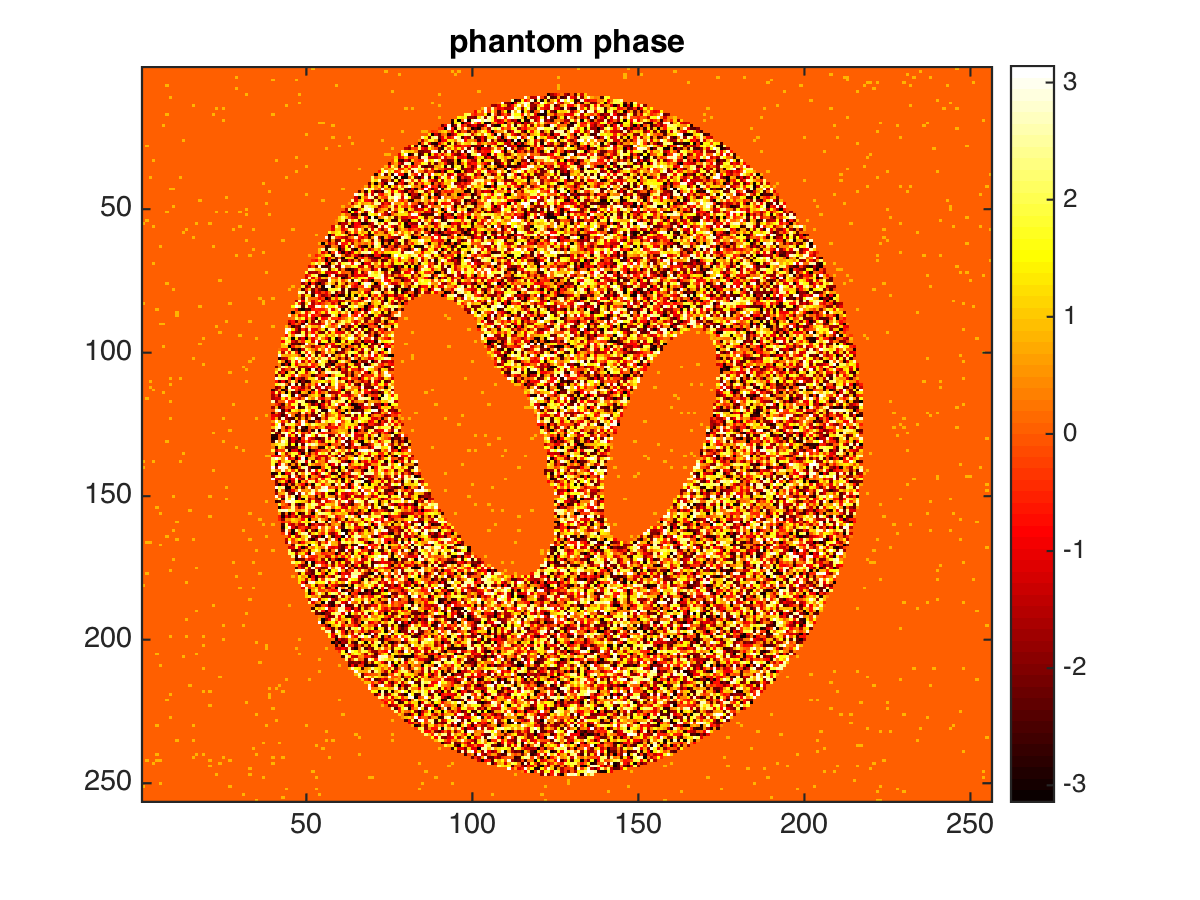}}\\
 \subfigure[Reconstructed moduli]{\includegraphics[width=.4\linewidth]{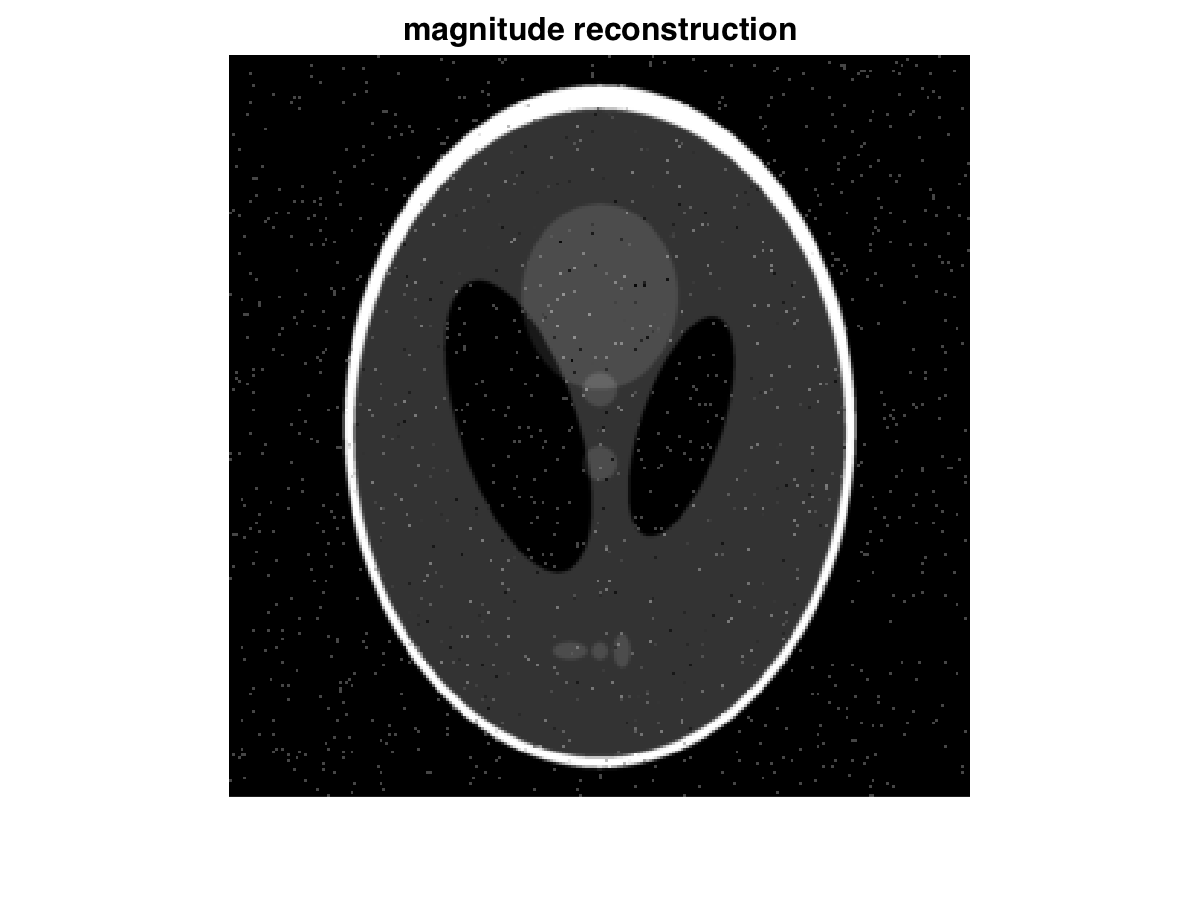}}
 \subfigure[Object phase errors]{\includegraphics[width=.4\linewidth]{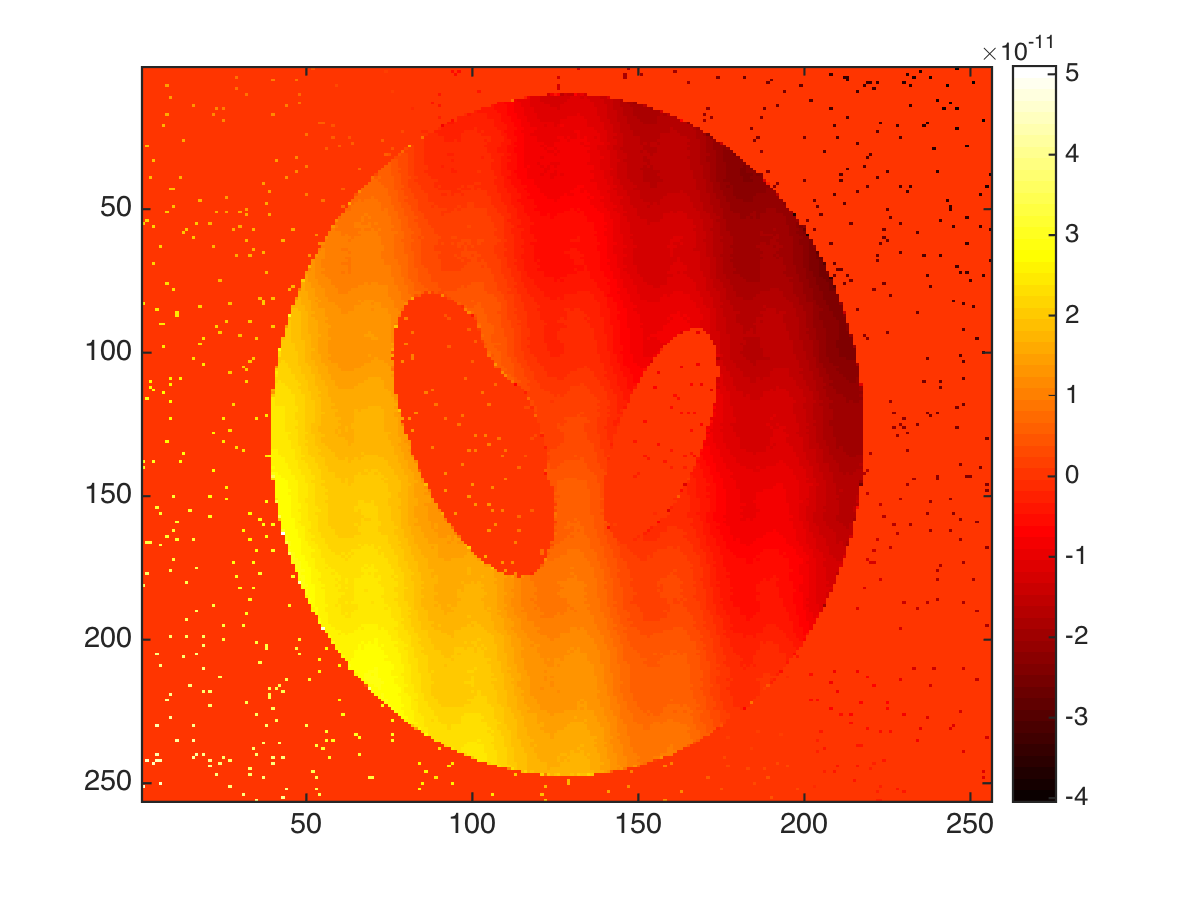}}\\
 \subfigure[Probe phase errors]{\includegraphics[width=.4\linewidth]{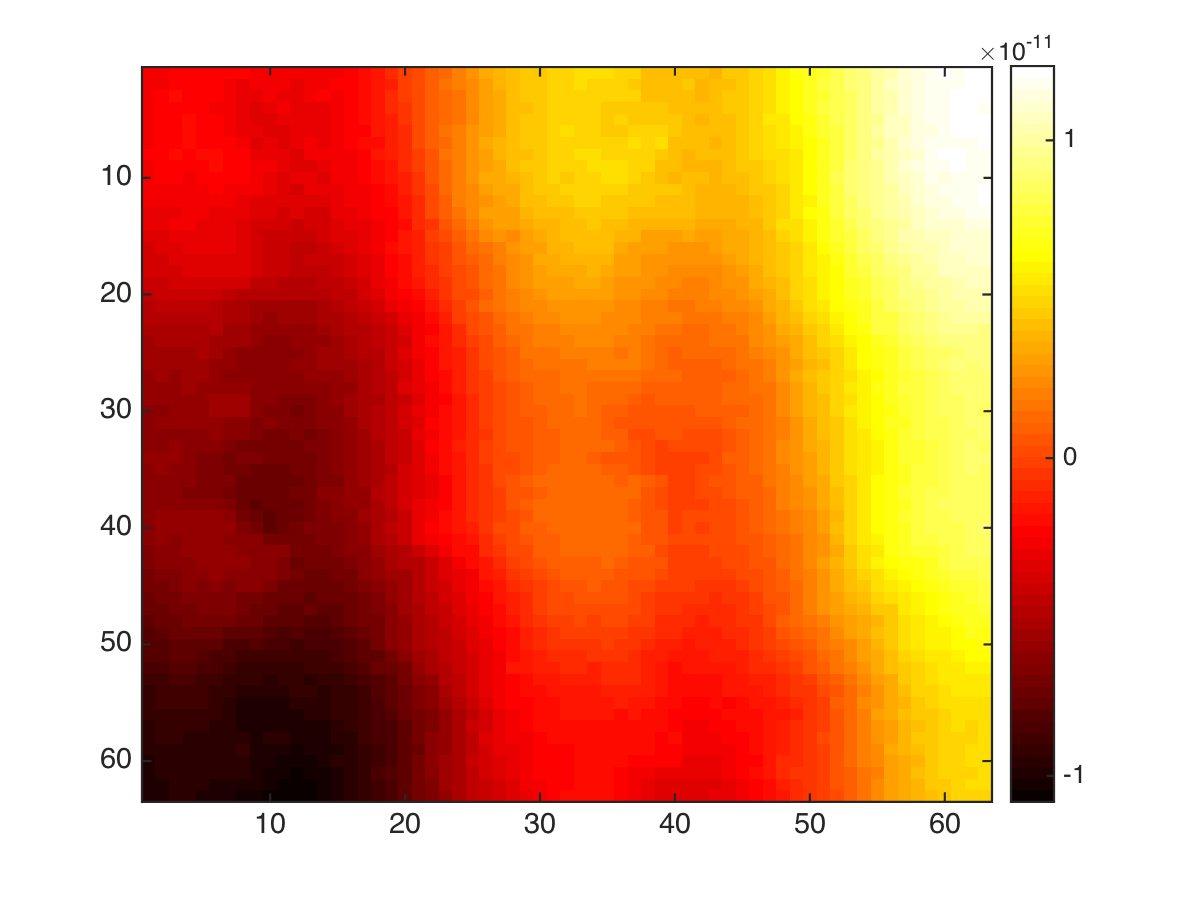}}
 \subfigure[6]{\includegraphics[width=.4\linewidth]{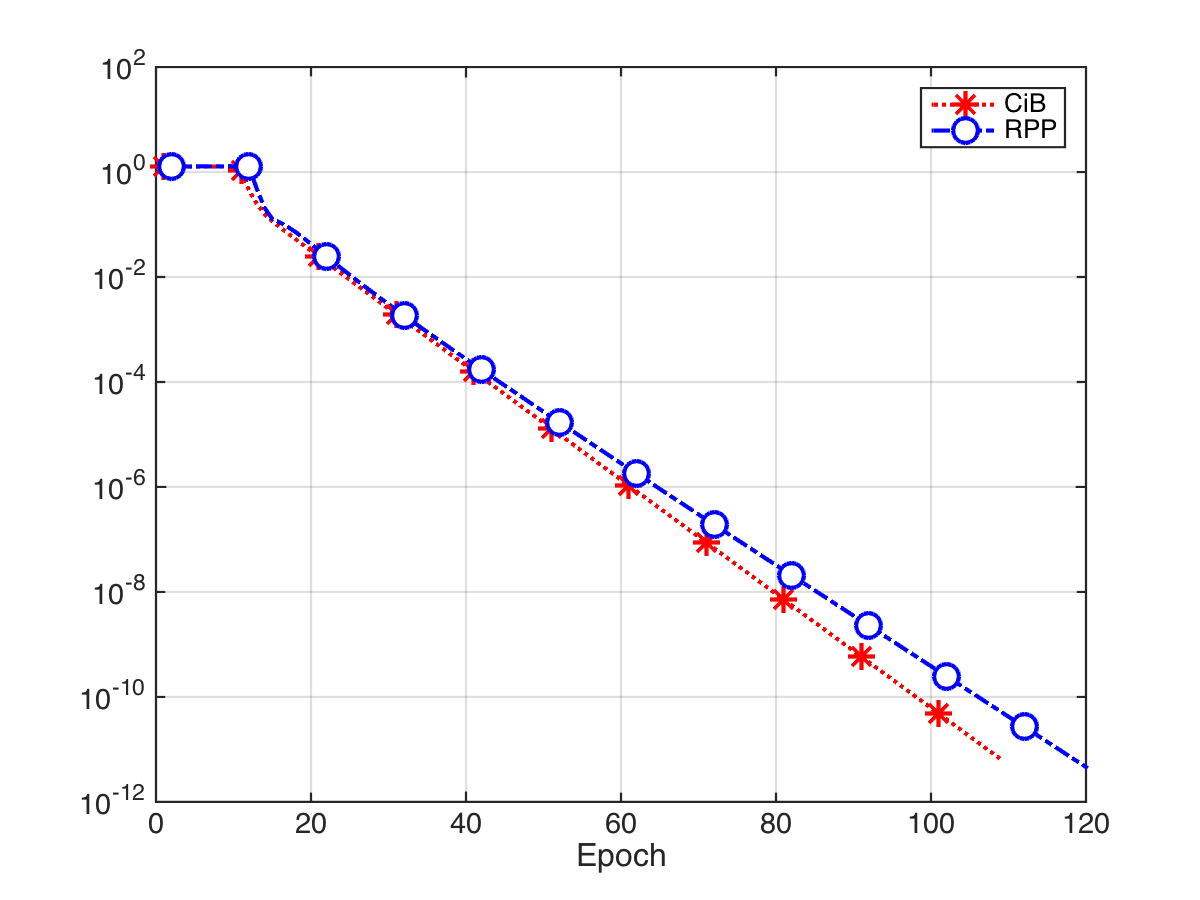}}
\caption{PPC(-1,1,0.5)  RPP and CiB: Blue circle: Rate = 0.8015, $\mathbf{k}_{im}=(-1,1.0517)$; Red star: Rate=0.7787, $\mathbf{k}_{im}=(-1,1)$}
\label{fig: Linear phase shift, phantom test, salt noise}
\end{figure}
}

When the probe steps outside of the boundary of the object domain, the 
area $\cM\setminus \IZ_n^2$ 
needs special treatment in the reconstruction process. 

The periodic boundary condition conveniently  treats all diffraction patterns and object pixels
in the same way by assuming that $\IZ_n^2$ is a (discrete) torus. The periodic boundary condition generally forces 
the slope $\br$ in the linear phase ambiguity to be integers.
The dark-field and  bright-field  boundary conditions assume zero and nonzero values, respectively, in $\cM\setminus \IZ_n^2$. 

Depending on our knowledge of the boundary values, we may or may not enforce the boundary condition in each case.  
When the bright-field boundary condition is enforced,  the linear phase ambiguity disappears from the object estimate. On the contrary, enforcing the dark-field boundary condition can not remove
the linear phase ambiguity. In both cases, however, enforcing either boundary condition speeds
up the convergence as shown in Figure \ref{fig:BC} which is produced by keeping the maximum iterations of the inner loops to 30. 

Moreover, the dark-field boundary condition can pose a challenge for reconstruction if the object domain has many dark pixels as in the case of RPP. Fig. \ref{fig:dark}(a) shows an example of
failed reconstruction with PPC(0,0,0.5) of RPP which  has a piece of the shell shifted sideway. 
Two ways to fix the problem: One is to improve the initialization condition and the other is to
use a different boundary conditions (periodic or bright-field BC). Fig. \ref{fig:dark}(b) shows the
reconstructed phase error with the periodic BC (note the scale of the color bar).  Fig. \ref{fig:BC}(b) shows the relative error with probe initialization PPC(0, 0, 0.4) and various boundary conditions. 

 \commentout{%10/28/2018
However, its rate of convergence and
 accuracy of reconstruction depends on the contrast between the boundary value and the object values. 
 The larger the contrast, the greater the basin of attraction  and the faster the convergence for the iteration. 

On the contrary, the dark-field boundary condition assumes the zero value for the pixels outside of the object domain
can neither remove the linear phase ambiguity nor forcing the linear phase slope $\br$ to be integers. As a result, the limit of vanishing bright-field boundary condition 
is a singular one.  }

\commentout{%10/29/2018

% \subsection{Disconnected objects}
\subsection{Dark pixels}
\commentout{%10/25/2018
 \begin{figure}
 \centering
{\includegraphics[width=9.5cm]{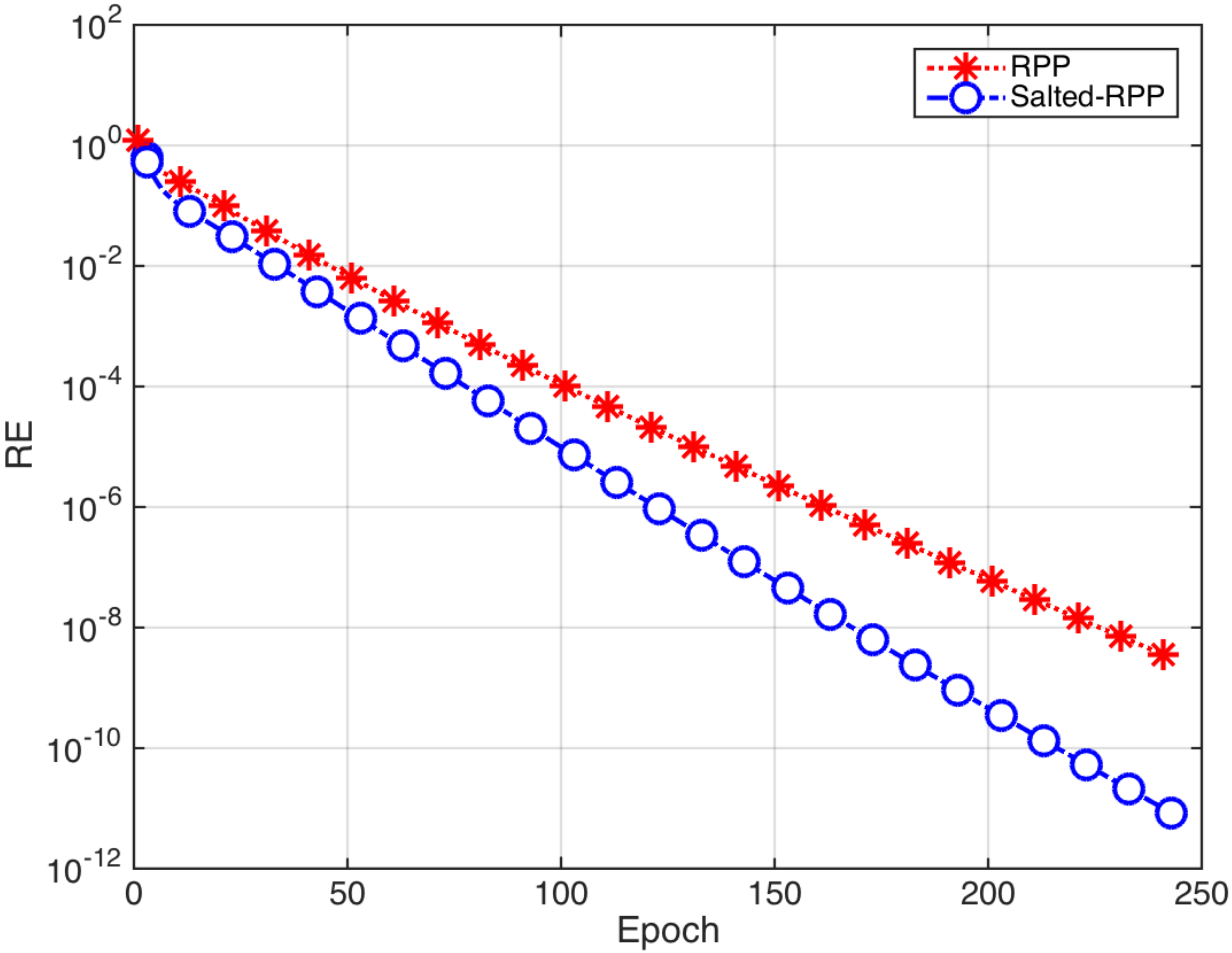}}
\caption{Adding salt noise speeds up the convergence.}
\label{figRPP}
\end{figure}
}

\commentout{
\begin{figure}
\centering
\subfigure[Phase error for RPP]{\includegraphics[width=8cm]{figs/phase-error-RPP.pdf}}
\subfigure[Phase error for salted RPP]{\includegraphics[width=8cm]{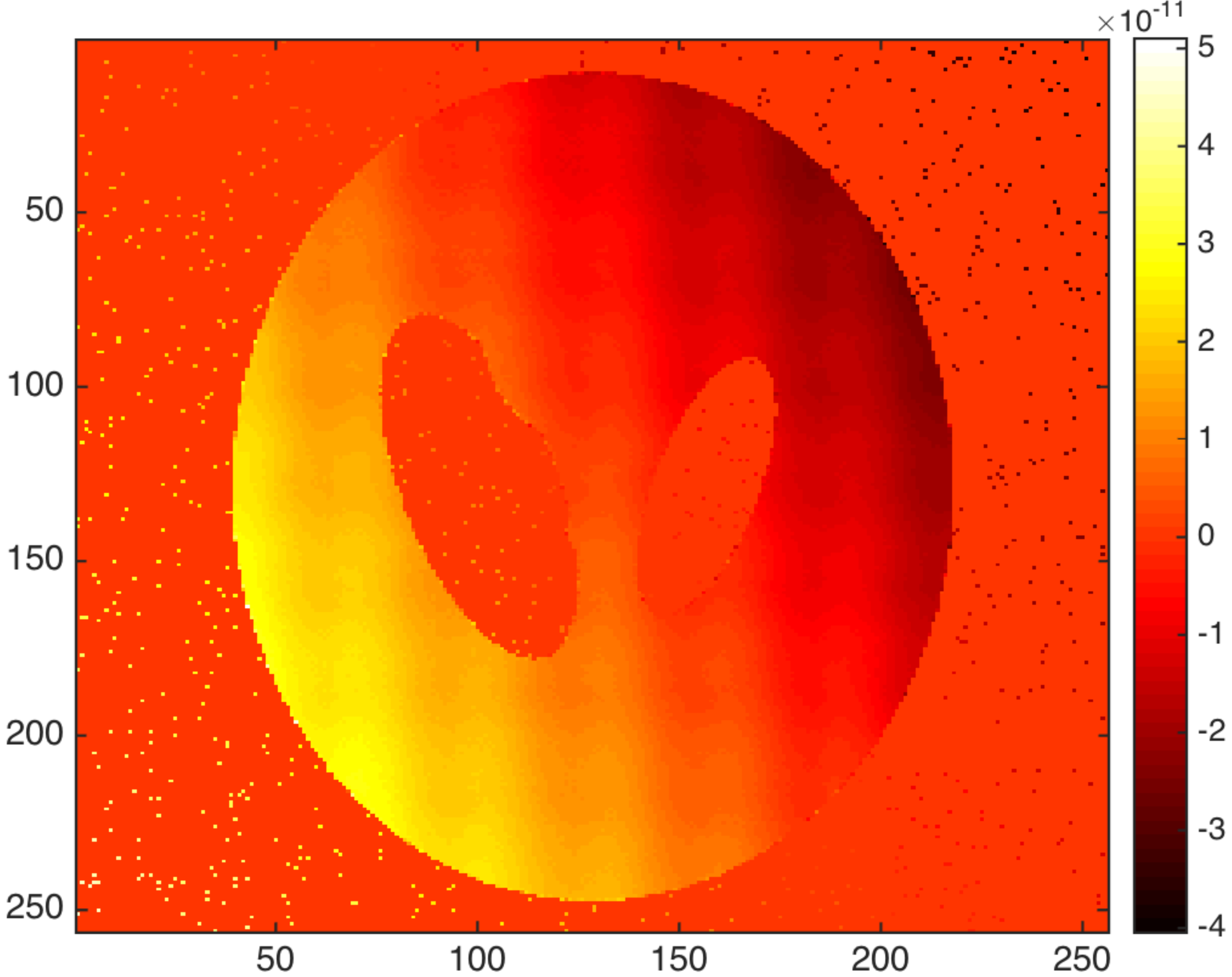}}
\caption{Phase difference between the true object and the reconstruction. Note the different scales in (b) and (c).}
\label{fig:phase}
\end{figure}
}
 Extensive area of dark pixels can pose a challenge to blind ptychographic reconstruction. 
As shown in Fig. \ref{figRPP}, the rate  ($r_o= 0.9345$) of convergence for RPP is larger than
that for CiB in Fig. \ref{fig1}.

The third test object is the salted RPP, the sum of
RPP and the salt noise (Fig. \ref{fig:RPP} (c)(d)). The salted noise is i.i.d, binomial random variables  with  probability $0.02$ to be a complex constant in the form of $a(1+\im), a\in \IR, $ and probability $0.98$ to be zero.  The salt noise reduces  the support looseness  without significantly changing the original image and makes the salted RPP more connected with respect to the ptychographic measurement.  We  explore the potential  of 
adding salt for enhancing ptychographic reconstruction. 

The added salt can be removed in postprocessing by denoising after the salted version is well reconstructed. 

Adding salt to RPP can speed up the convergence slightly, resulting in a smaller rate of 0.9115 and more accurate reconstruction. Fig. \ref{figRPP} and \ref{fig:phase} show the phase error of the reconstruction for
RPP and salted RPP, respectively. 

Another effect of the dark margin in RPP is that  the slope $\br$ of the linear phase ambiguity may be non-integers since the periodic boundary condition is ineffective on the dark margin. Adding salts can force $\br$ to be integers. For example,
$\br=(0,-1.231)$ for RPP while $\br=(-1,-1)$ for the salted RPP in Fig. \ref{figRPP} and \ref{fig:phase}. 

As mentioned above, the linear phase ambiguity is inherent to blind ptychographic reconstruction with either the periodic or dark-field (boundary value = 0) boundary
condition. Next  we demonstrate that the linear phase ambiguity is absent with the bright-field boundary condition. 
}

\commentout{%08/23/2018
\begin{figure}
\centering
   \subfigure[Oversampling; PPC(-1,1,$\half$)]{\includegraphics[width=8cm]{report729/convDR_os_FBD_noLPS.png}}
    \subfigure[Non-oversampling; PPC ($-1,1,\frac{1}{5}$) ]{\includegraphics[width=8cm]{report729/convDR_nos_FBD_noLPS.png}}
 \caption{RE2 for Poisson-DRS  with the bright-field boundary condition at magnitude 255. }\label{fig:bright}
\end{figure}
}

\commentout{
\begin{figure}
\centering
\subfigure[Boundary value ]{\includegraphics[width=8cm]{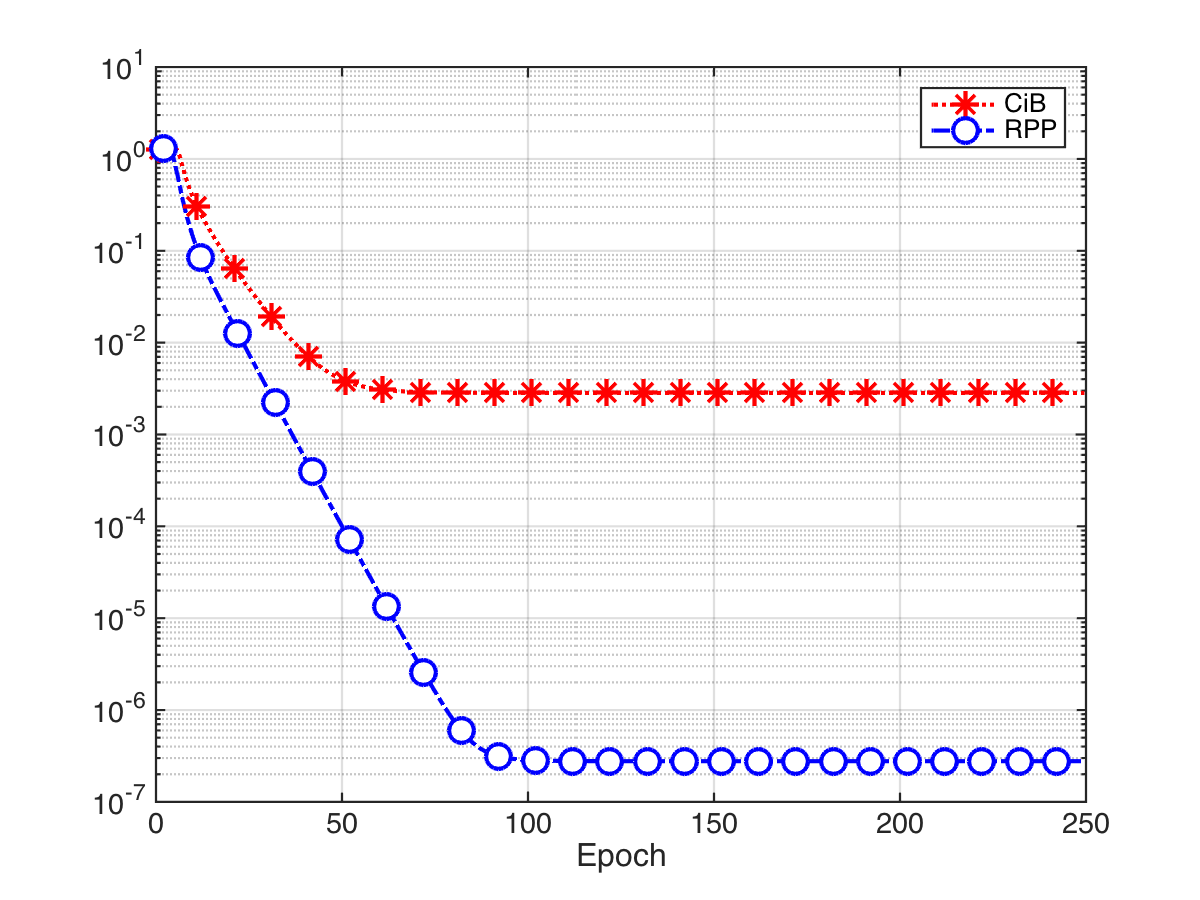}}
\subfigure[Boundary value ]{\includegraphics[width=8cm]{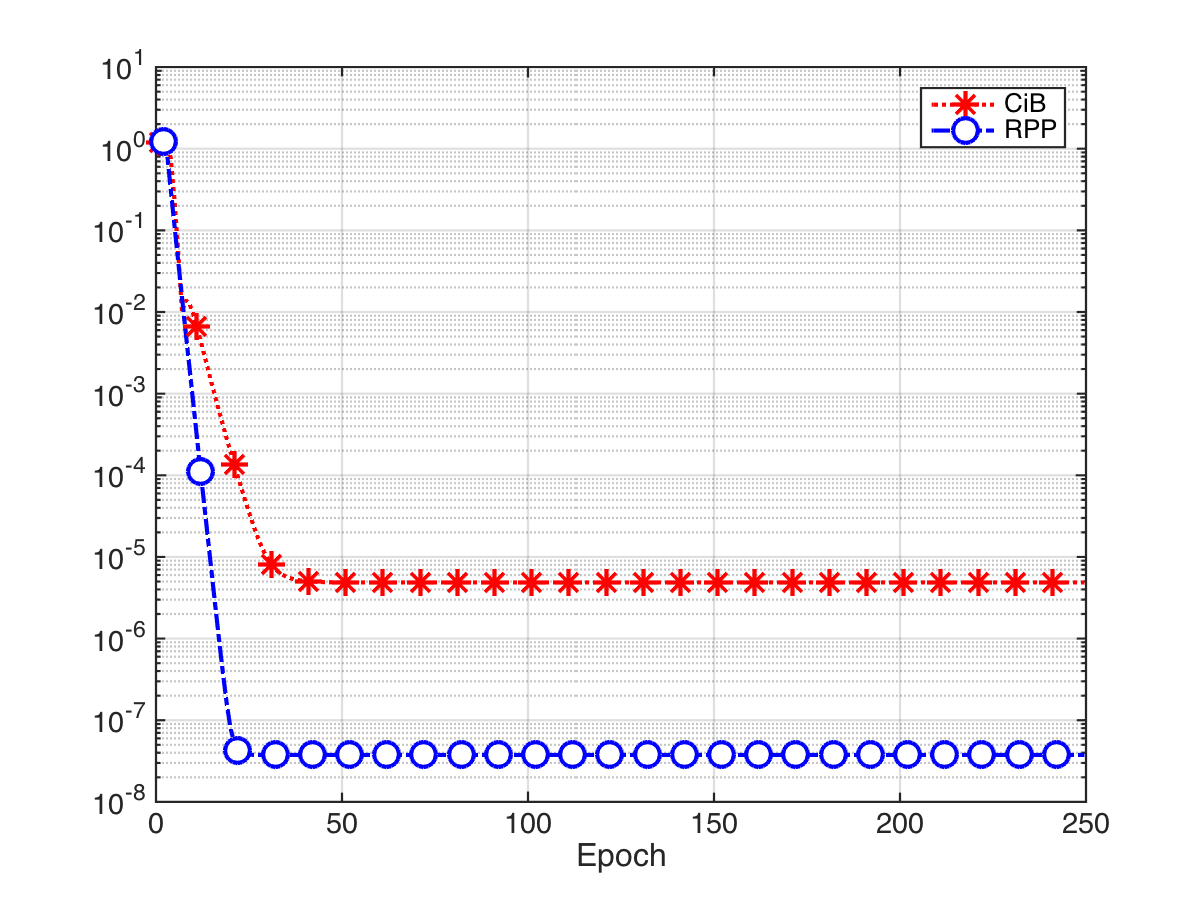}}
% \subfigure[Boundary value = 100)]{\includegraphics[width=8cm]{}}
% \subfigure[Boundary value = 255]{\includegraphics[width=8cm]{}}
%  \subfigure[Boundary value = 255; PPC(-1,1,$\half$)]{\includegraphics[width=8cm]{}}
% \subfigure[Boundary value = 100; PPC (-0.5,0.5,0)]{\includegraphics[width=8cm]{}}
% \subfigure[Boundary value = 255]{\includegraphics[width=7cm]{figs/conv_LPS_CiBRPP_255inten_noadj.png}}
\caption{RE with PPC(-0.5, 0.5, $\half$) and the boundary value (a) 100 and (b) 255}
\label{fig:bright}
\end{figure}
}

\subsection{Linear phase ambiguity}

\begin{figure}[t]
\centering
 \subfigure[Boundary value = 100 ]{\includegraphics[width=8cm]{figs/conv_LPS_CiBRPP_100inten_NoSRPP.png}}
 \subfigure[Boundary value = 255 ]{\includegraphics[width=8cm]{figs/conv_LPS_CiBRPP_255inten_NoSRPP.png}}
%  \subfigure[Bright field (255) not enforced]{\includegraphics[width=5.2cm]{}}
\caption{RE with PPC(-0.5, 0.5, $\half$) and the boundary value (a) 100 and (b) 255}
\label{fig:bright}
\end{figure}
\commentout{%10/25/2018
 \begin{figure}
\centering
 \subfigure[Max iteration = 80]{\includegraphics[width=8cm]{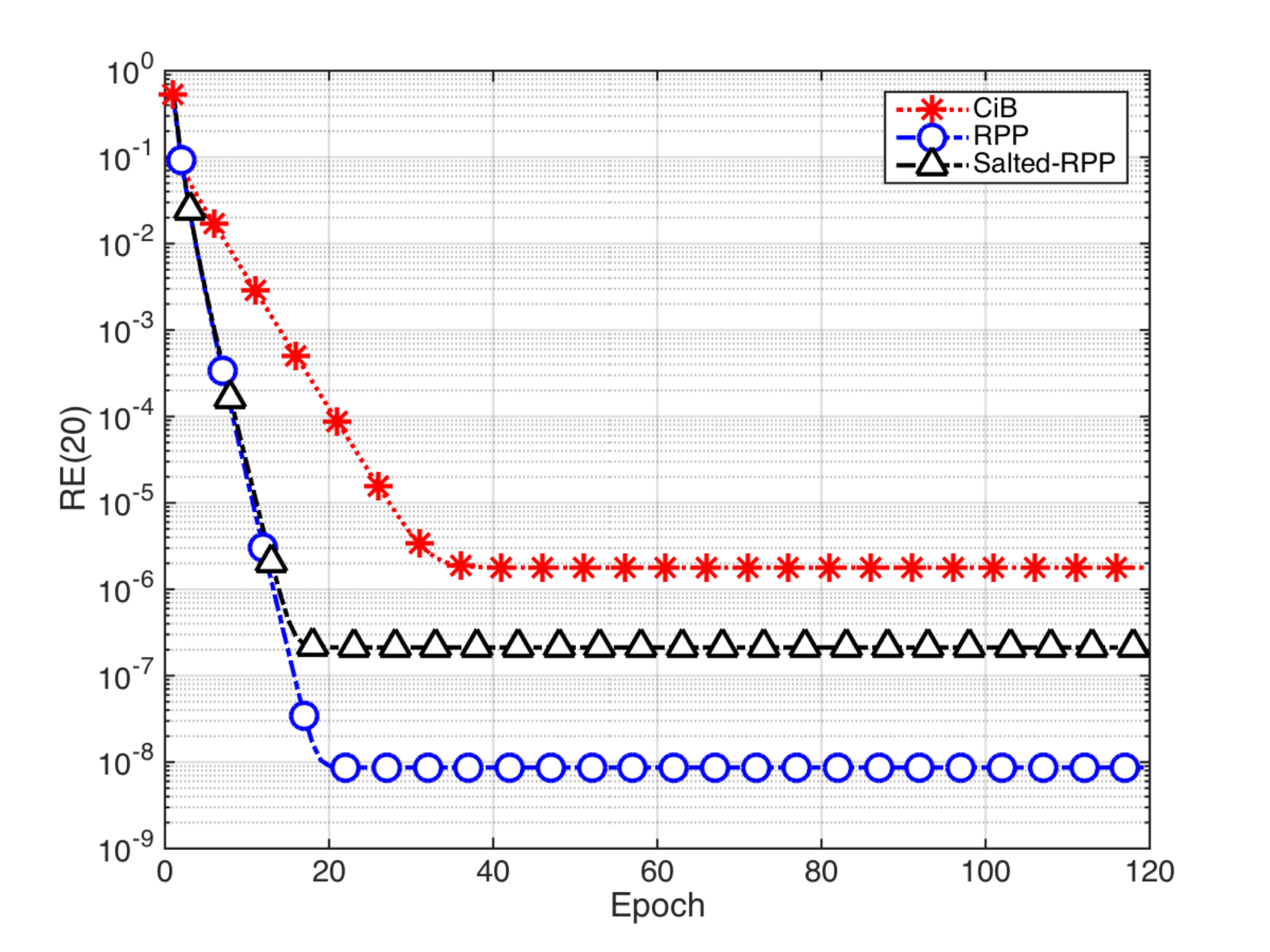}}
 \subfigure[Max iteration = 110]{\includegraphics[width=8cm]{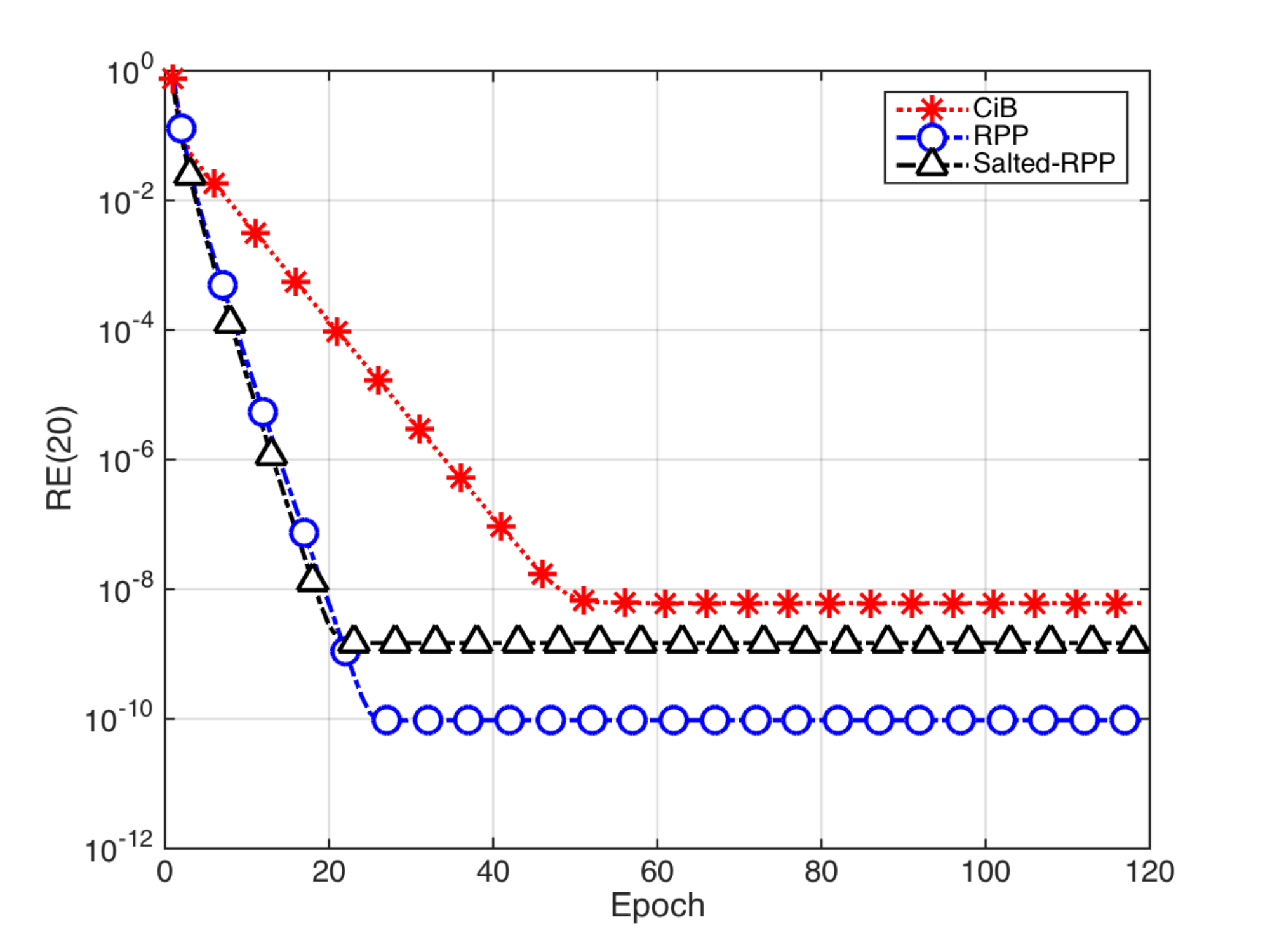}}
\caption{RE under  the boundary value = 255; PPC(-0.5, 0.5, 0)}
\label{fig10}
\end{figure}
}

To show that the linear phase ambiguity is absent under the bright-field boundary condition,
we test AMDRS with the initialization PPC$(-0.5, 0.5, \half)$ 
and use  a more stringent  error metric
\beq\label{RE2}
{\mbox{RE2}}(k)&= &\min_{\alpha\in \mathbb{C}}\frac{\| f(\mathbf{k}) -\alpha f_k(\mathbf{k})\|_2}{\|f\|_2}. 
\eeq
Note that PPC$(-0.5, 0.5, \half)$ violates the probe phase constraint $\Re(\bar \nu^0\odot \mu^0)>0$ allowed by the uniqueness theory.  The linear phase factor is introduced
in the initialization to test if it persists in the reconstruction. 

We also use the less tolerant  stopping rule
\begin{equation*}
\frac{\| |A_k A_k^\dagger u_{k}^{l}|-b\|_2-\||A_k A_k^\dagger u_{k}^{l+1}|-b\|_2}{\||A_k A_k^\dagger u_{k}^{l}|-b\|_2} \leq 10^{-5}
\end{equation*} 
for the inner loops with the maximum number of iteration  capped at 80, the rate of convergence accelerates. 

Fig. \ref{fig:bright} demonstrates the capability of the bright-field boundary condition to eliminate the linear phase ambiguity
as the stronger error metric \eqref{RE2} decays geometrically before settling down to the final level of accuracy. The higher boundary value  ($255$ in Fig. \ref{fig:bright}(a)) leads to faster convergence than the lower boundary value ($100$ in Fig. \ref{fig:bright}(b)). The final level of accuracy, however, depends on how accurately the inner loops for each epoch are solved. For example, increasing the maximum number of iteration from 80 to 110, significantly enhances the final accuracy
of reconstruction (not shown). 

\commentout{
\subsection{Comparison with rPIE}

In this section, we compare the performance of AMDRS in Fig. \ref{fig:bright} (a) with that of the regularized PIE (rPIE) \cite{rPIE17}, the most up-to-date version of ptychographic iterative engine (PIE).

Instead of using all the 64 diffraction patterns simultaneously to update the object and probe estimates, rPIE uses
one diffraction pattern at a time in a random order. As such rPIE is a variant of stochastic gradient descent. The potential benefits  include efficient memory use and a good speed boost by parallel computing resources. Unfortunately, rPIE
often fail to converge  with our set-up. 

To obtain reasonable results, we make two adjustments. First, we change  the random phases of RPP and salted RPP to i.i.d. uniform random variables in  $(-\frac{\pi}{2}, \frac{\pi}{2}$). Second, we use PPC($0,0,  \frac{1}{40}$)
for the probe initialization. 

% These parameter values are very different from those suggested for mPIE, the momentum-enhanced rPIE. Please check!
There are three adjustable parameters in rPIE and we select these values $\alpha =0.95, \gamma_{\rm prb}=0.95, \gamma_{\rm obj}=0.9$ (see \cite{rPIE17} for definition). The order of updating small patches is randomly shuffled in each experiment. For each test image, we run 20 independent experiments and present the best run in Fig. \ref{fig:rPIE}.

%\commentout{
\begin{figure}
\centering
{\includegraphics[width=8cm]{report729/conv_SGD_CiBRPP}}
 \caption{rPIE reconstruction with the probe initialization PPC ($0,0,\frac{1}{40}$).  }
 \label{fig:rPIE}
 \end{figure}
% }
 }
\section{Conclusion}\label{sec:last}

We have presented reconstruction algorithms based on alternating minimization by Douglas-Rachford splitting. Our choices of the objective functions and the step size are informed by the uniqueness theory \cite{blind-ptych}, the Poisson noise model and the stability analysis. The confluence of the three considerations leads to the proposed scheme AMDRS.

Enabled by the PPC initialization method, AMDRS converges globally and geometrically 
in all our experiments except in the case of RPP with the dark-field boundary condition and the
initialization condition PPC(0, 0,  0.5) due to the extensive area of dark pixels.

The boundary condition can have a significant impact on the performance of numerical reconstruction.
For either the dark-field or bright-field condition, enforcing the boundary condition, whenever available, improves the rate of convergence. 
 
\commentout{%10/29/2018
Further, we have shown that the bright-field boundary condition can remove
the linear phase ambiguity and accelerate convergence, and that adding salt noise can  improve the quality
of reconstruction of objects exhibiting extensive area of dark pixels provided that the added salt noise can
be further removed by postprocessing. 
}

\appendix
\commentout{
\section{Monotonicity}
\begin{thm}
The sequence of $\mathcal{L}(\mu,f)$ generated in algorithm \ref{alg: suedo algorithm AM} satisfies the following decreasing property. 

\begin{equation*}
\mathcal{L}(\mu_k,f_k) \geq \mathcal{L}(\mu_k,f_{k+1}) \geq \mathcal{L}(\mu_{k+1},f_{k+1})
\end{equation*}
\label{thm: residual decrease and its estimate}
\end{thm}
\begin{proof}{\ref{thm: residual decrease and its estimate}}
Let's review our algorithm \ref{alg: suedo algorithm AM}. 
It is easy to see the theorem \ref{thm: residual decrease and its estimate} can be derived directly from definition of the $f_{k+1}, \mu_{k+1}$.
\begin{equation*}
\begin{aligned}
&\mathcal{L}(\mu_k,f_k)\geq \min\limits_{f}\mathcal{L}(\mu_k,f) = \mathcal{L}(\mu_k,f_{k+1}) \\
&\mathcal{L}(\mu_k,f_{k+1})\geq \min\limits_\mu\mathcal{L}(\mu,f_{k+1})= \mathcal{L}(\mu_{k+1},f_{k+1}) \\
\end{aligned}
\end{equation*}
Next let's estimate the first order decrease in either image update \eqref{equ: AP image update} or mask update \eqref{equ: AP mask update} in epoch $k^{th}$ with gaussian log-likelihood assumption \eqref{equ: gaussian log-likelihood}. Denote the decrease of residual in image update in the $k^{th}$ epoch as 
\begin{equation*}
\Delta_k^x =  || |A_kf_k|-b||^2 -|||A_kf_{k+1}| - b ||^2 
\end{equation*}
We can use steepest decent and line search analysis to give an lowerbound  (first order) approximation of $\Delta_k^x$. Assume we have a perturbation $\Delta x = \epsilon u;\ ||u||=1$  near $f_k$. 
\begin{equation*}
\begin{aligned}
\Delta_k^x &\geq  || |A_kf_k| -b||^2  - || |A_k f_k+ \epsilon A_ku| - b ||^2\\
           & = || |A_kf_k| -b||^2 - || |A_kf_k|(1+\epsilon \Re(\Omega^*_{A_kf_k}A_ku)) -b||^2 \\
           & \approx -\epsilon(|A_kf_k|-b)^T\Re(\Omega^*_{A_kf_k}A_ku) \\
           & =-\epsilon(A_kf_k-\Omega_{A_kf_k}b)^T\Omega^*_{A_kf_k}(\Re(\Omega^*_{A_kf_k}A_k)\Re(u)-\Im(\Omega^*_{A_kf_k}A_k)\Im(u))
\end{aligned}
\end{equation*}
holds for all $u \in \mathbb{C}^{n\times n};\ ||u||=1$ and $\epsilon$ small. We can maximize the right hand side over $u$ to get an lower bound of $\Delta_k^x$ by Cauthy inequality and taking
\begin{equation*}
u^T=-(A_kf_k-\Omega_{A_kf_k}b)^T\Omega^*_{A_kf_k}(\Re(\Omega^*_{A_kf_k}A_k)-\imath\Im(\Omega^*_{A_kf_k}A_k)) / ||.||
\end{equation*} \newline
where $||.|| :=||(A_kf_k-\Omega_{A_kf_k}b)^T\Omega^*_{A_kf_k}(\Re(\Omega^*_{A_kf_k}A_k)-\imath\Im(\Omega^*_{A_kf_k}A_k))|| = ||(A_kf_k-\Omega_{A_kf_k}b)^T\Omega^*_{A_kf_k}(\bar{\Omega}^*_{A_kf_k}{A_\infty}^*_k)|| $
\begin{equation*}
\begin{aligned}
\Delta_k^x\geq& \epsilon(||(A_kf_k-\Omega_{A_kf_k}b)^T\Omega^*_{A_kf_k}\Re(\Omega^*_{A_kf_k}A_k)||^2+ ||(A_kf_k-\Omega_{A_kf_k}b)^T\Omega^*_{A_kf_k}\Im(\Omega^*_{A_kf_k}A_k)||^2)/||.|| \\
           = & \epsilon||(A_kf_k-\Omega_{A_kf_k}b)^T\Omega^*_{A_kf_k}(\Re(\Omega^*_{A_kf_k}A_k)-\imath\Im(\Omega^*_{A_kf_k}A_k)) ||^2/||.|| \\
           = & \epsilon||(A_kf_k-\Omega_{A_kf_k}b)^T\Omega^*_{A_kf_k}(\bar{\Omega}^*_{A_kf_k}{A_\infty}^\dagger_k) ||^2/||.||\\
           = & \epsilon|| A_k(A_kf_k-\Omega_{A_kf_k}b)||
\end{aligned}
\end{equation*} 
\newpage
By the same analysis, we can give an lower bound of decrease in mask update in the $k^{th}$ epoch,
\begin{equation*}
\Delta_k^\mu = \epsilon|| B_k(B_k\mu_k-\Omega_{B_k\mu_k}b)||
\end{equation*}
\end{proof}
}

\section{The Poisson versus Gaussian log-likelihood functions}\label{sec:likelihood}

Poisson distribution
\[
P(n)={\lambda^ne^{-\lambda}\over n!}
\]
Let $n=\lambda(1+\ep)$ where $\lamb\gg 1$ and $\ep\ll 1$.
Using  Stirling's formula 
\[
n!\sim \sqrt{2\pi n} e^{-n} n^n
\]
in the Poisson distribution, we obtain
\beqn
P(n)&\sim &{\lamb^{\lamb(1+\ep)}e^{-\lamb}\over \sqrt{2\pi} e^{-\lamb(1+\ep)} [\lamb(1+\ep)]^{\lamb(1+\ep)+1/2}}\\
&\sim &{1\over \sqrt{2\pi\lamb} e^{-\lamb\ep} (1+\ep)^{\lamb(1+\ep)+1/2}}.
\eeqn
By the asymptotic 
\[
(1+\ep)^{\lamb(1+\ep)+1/2}\sim e^{\lamb\ep+\lamb\ep^2/2}
\]
we have
\beq
\label{PG1}
P(n)\sim {e^{-\lamb\ep^2/2}\over \sqrt{2\pi \lamb}}={e^{-(n-\lambda)^2/(2\lamb)}\over \sqrt{2\pi\lamb}}. 
\eeq
Namely  in the low noise limit the Poisson noise is equivalent to the Gaussian noise of the mean $|y_0|^2$ and
 the variance equal to the intensity of the diffraction pattern.  
\commentout{and get the pdf
\[
 {e^{-\xi^2/(2\sigma^2)}\over \sqrt{2\pi\sigma^2}}
\]
}
The overall SNR can be tuned by varying the  signal energy $\|y_0\|^2$. 

The log-likelihood function for the right hand side of  \eqref{PG1} is 
\beq
\label{pg}
\sum_j \ln |y[j]| +{1\over 2} \lt|{b[j]\over |y[j]|}-|y[j]|\rt|^2,\quad b= \mbox{noisy diffraction pattern.}
\eeq
For small NSR and in the vicinity of $b$, we make the substitution 
\[
{\sqrt{b[j]}\over |y[j]|}\to 1,\quad \ln|y[j]|\to \ln\sqrt{b[j]}
\]
to obtain 
\beq
\label{pg2}
\mbox{const.}+ {1\over 2} \sum_j \lt|\sqrt{b[j]}-|y[j]|\rt|^2.
\eeq

\section{Proof of Proposition \ref{thm:stable}}\label{sec:stable}

Note that $A_\infty^\dagger u=A^\dagger_\infty x, B_\infty^\dagger v=B^\dagger_\infty y$ and 
\beqn
P_\infty u=P_\infty x, &&P^\perp_\infty u=- P^\perp_\infty x\\
\quad Q_\infty v=Q_\infty y, && Q^\perp_\infty v=-Q^\perp_\infty y,
\eeqn
 Hence we also have $
u =R_\infty x,\quad v =S_\infty y. $

There are two ways to express \eqref{sa}-\eqref{sb} in terms of $y,x$: 
First, by reorganizing, 
\beq
\label{32}
b\odot\sgn(x)&=&P_\infty x-\rho P^\perp_\infty x\\
b\odot\sgn(y)&=&Q_\infty y-{\rho} Q^\perp_\infty  y,\label{33}
\eeq
and, second, after  the reflections  $R_\infty$ and $S_\infty$, respectively, 
\beq
\label{30}
P_\infty x+\rho P^\perp_\infty x & = & R_\infty \lt(b\odot \sgn{(x)}\rt) \\
Q_\infty y+{\rho} Q^\perp_\infty  y&=& S_\infty \lt(b\odot \sgn{(y)}\rt).\label{31}
\eeq

From \eqref{32}-\eqref{33} we obtain by the Pythogoras theorem 
\beq
\|P_\infty x\|^2_2+\rho^2 \|P^\perp_\infty x\|_2^2=\|b\|_2^2=\|Q_\infty y\|^2_2+\rho^2 \|Q^\perp_\infty y\|_2^2.
\eeq

If $\|P^\perp_\infty x\|_2=0$, then $x=P_\infty x=b\odot \sgn(x)$ by \eqref{32}, implying
$b=|A_\infty f_\infty|$ for $f_\infty :=A_\infty^\dagger u=A^\dagger x$. 
Likewise, if $\|Q^\perp_\infty y\|_2=0$, then $y=Q_\infty y=b\odot \sgn(y)$ by \eqref{33}, implying 
$b=|B_\infty \mu_\infty|$ for $\mu_\infty :=B_\infty^\dagger v=B^\dagger_\infty y$. In other words, $(f_\infty,\mu_\infty)$ solve
the blind ptychography problem. 

We now prove Proposition \ref{thm:stable} by contradiction. For our purpose, it suffices to focus on \eqref{32}.  

Suppose $\|P_\infty x\|_2< \|b\|_2$ (or equivalently $\|P^\perp_\infty x\|_2\neq 0$). 
Applying $\Omega^*$  and rewriting \eqref{30} we have
\beqn
CC^\dagger |x|+ \rho(|x|-CC^\dagger |x|)&=& 2CC^\dagger b-b
\eeqn
On the other hand, applying the operator $C$ on \eqref{30} we have 
\begin{equation*}
CC^\dagger |x| = CC^\dagger b.
\end{equation*}
Combining the above two relations, we obtain
\beq
\label{38}
CC^\dagger |x| = CC^\dagger b={\rho\over 1+\rho}|x|+{1\over 1+\rho} b.\label{key}
\eeq

Using \eqref{key} we now show that $\|J_A(\eta)\| > \|\eta\|$ if $\rho\ge 1$ and $\eta=\imath (\alpha |x|+\beta b)$ where $\alpha$ and
$\beta$ are two positive constants such that $\rho \beta >\alpha$. The choice of a purely imaginary $\eta$ is to nullify $\Re\big(\eta-2CC^\dagger \eta \big)$
by means of \eqref{38}.  By \eqref{38}, $\eta$ can be written as 
\beqn
\eta 
 &=&\imath(\alpha+{\alpha \over \rho}) CC^\dagger|x|+\imath (\beta-{\alpha\over \rho})b
 \eeqn

After some algebra with \eqref{38'} and \eqref{38}, we arrive at
\[
J_A(\eta)={\imath\over 1+\rho} \lt\{\lt[{\rho(\rho-1)\over \rho+1}(\alpha+\beta)+\alpha\rt]|x|+2\lt[{2\beta\rho\over \rho+1}+{\alpha(\rho-1)\over \rho+1}\rt]b+\lt[{2\alpha\over \rho+1} + {(1-\rho)\beta\over 1+\rho}\rt]{b^2\over |x|}\rt\}%{\imath\over 1+\rho} \lt\{\lt[{\rho(\rho-1)\over \rho}(\alpha+\beta)+\alpha\rt]|x|+2\lt[\beta+(\alpha+\beta){\rho-1\over \rho+1}\rt]b+\lt[(\alpha+\beta) {2\over \rho+1}-\beta\rt]{b^2\over |x|}\rt\}.
\]

For the rest of our analysis the case of $\rho=1$ is particularly transparent.  

\commentout{%09/03/2018
For $\rho=0$,
\beqn
J_A(\eta)&=& \imath\lt[\alpha |x|-2\alpha b+(2\alpha+\beta) {b^2\over |x|}\rt]
\ge \imath (\alpha+\beta) (2b-|x|)
\eeqn
 by the quadratic inequality. 
 }
 
 For $\rho=1$,
 \[
 J_A(\eta)={\imath\over 2}\lt[\alpha|x|+2\beta b+ \alpha b^2/|x|\rt]. 
 \]
 By the quadratic inequality, $\alpha |x|+\alpha b^2/|x|\ge 2\alpha b$ and hence 
 $\|J_A(\eta)\|_2 \ge (\alpha+\beta)\|b\|_2.$ On the other hand, by \eqref{40}, $
 \eta=\imath 2\alpha CC^\dagger |x|+\imath (\beta-\alpha)b$
 and
 hence
 \beqn
 \|\eta\|_2&\le& 2\alpha \|CC^\dagger x\|_2+(\beta-\alpha)\|b\|_2
 \eeqn
 where $\alpha<\beta$. 
 Therefore, as a consequence of $\|CC^\dagger |x|\|_2 = \|P_\infty x\|_2< \|b\|_2$,  the desired result follows:
  \[
  \|\eta\|_2 < (\alpha+\beta)\|b\|_2\le \|J_A(\eta)\|_2.
  \]
 
Finally, for any linearly stable fixed point of Gaussian DRS, we now know from the above analysis that $\|P^\perp_\infty x\|_2=0$ and hence $x=P_\infty x$ which, combined with \eqref{32}, yields
the sought after statement \eqref{key2}.

\section{Proof of Proposition \ref{thm:stable2}}\label{sec:stable2}

As we have seen in the proof of Proposition \ref{thm:stable} that
any  blind ptychography solution satisfies $
x=P_\infty x=b\odot \sgn(x)
$
and hence by \eqref{38'}
\beq
\label{38''}
J_A(\eta) &= & CC^\dagger\eta+ {1\over 1+\rho}\Re\big(\eta-2CC^\dagger \eta \big)\\
&=&  \imath \Im\lt(CC^\dagger\eta\rt)+ \Re\lt(CC^\dagger \eta\rt)+{1\over 1+\rho}\Re\big(\eta-2CC^\dagger \eta \big).\nn
\eeq
We now show that $\|J_A(\eta)\|_2\le \|\eta\|_2$ for all $\eta$. The case for $J_B$ can be similarly analyzed. 

To proceed, we shall write $CC^\dagger=GG^*$ where $G$ is an isometry. 
This can  be done for any matrix $C$ via the QR decomposition. 

According to Proposition 5.9 and Corollary 5.10 in \cite{FDR}, $GG^*$ 
can be block-diagonalized into one $(N-2n^2)\times (N-2n^2)$ zero block and $2n^2$ $2\times 2$ blocks
\beq\label{block}
\lt[ \begin{matrix}
\lambda_k^2& \lambda_k\lambda_{2n^2+1-k}\\
\lambda_k\lambda_{2n^2+1-k}&\lambda_{2n^2+1-k}^2
\end{matrix}\rt],\quad k=1,2,\cdots, 2n^2
\eeq
in the orthonormal basis $\{\eta_k, \imath \eta_{2n^2+1-k}: k=1,2,\cdots, 2n^2\}$
where $ \eta_k\in \IR^N$ are the right singular vectors, corresponding to the singular values $\lamb_k$,  of
 \begin{equation} \label{Bv}
\left[
\begin{matrix}
 \Re[K^*]\\
  \Im[K^*] 
\end{matrix}
\right]\in \IR^{2n^2,N}.
\end{equation}
Moreover, the complete set of singular values satisfy  
\beq
\label{40}
&1=\lamb_1\ge \lamb_2\ge \cdots\ge \lamb_{2n^2}= \lamb_{2n^2+1}=\cdots=\lamb_{N}=0&\\
&\lamb_k^2+\lamb_{2n^2+1-k}^2=1.&\label{41}
\eeq

In view of the block-diagonal nature of $GG^*$, we shall analyze $J_A(\eta)$ in the 2-dim spaces spanned by the orthonormal basis $\{\eta_k, \imath \eta_{2n^2+1-k}\}$ one $k$ at a time. Note that the first basis vector $\eta_k$ is real and the second  $\imath \eta_{2n^2+1-k}$ is purely imaginary. 

For any fixed $k$, we write $\eta=(z_1,z_2)^T\in \IC^2$ and obtain 
\beq\label{50}
J_A(\eta) &=&\lt(\lamb_k^2z_1+\lamb_k\lamb_{2n^2+1-k}z_2\rt)\eta_k+
\lt(\lamb_k\lamb_{2n^2+1-k}z_1+\lamb^2_{2n^2+1-k}z_2\rt)\imath\eta_{2n^2+1-k}\\
&&+{1\over 1+\rho}\lt[
(1-2\lamb_k^2)\Re(z_1)-2\lamb_k\lamb_{2n^2+1-k} \Re(z_2)\rt]\eta_k \nn\\
&&
+ {1\over 1+\rho}\lt[2\lamb_k\lamb_{2n^2+1-k} \Im(z_1)-(1-2\lamb_{2n^2+1-k}^2)\Im(z_2) \rt]\eta_{2n^2+1-k}. \nn
\eeq
Next we treat \eqref{50} as a linear function of $\Re(z_1), \Re(z_2), \Im(z_1), \Im(z_2)$ with real coefficients in the basis $\{\eta_k,\imath\eta_{2n^2+1-k}, \imath \eta_k, \eta_{2n^2+1-k}\}$ and represent $J_A$ by
a $4\times 4$ matrix which is block-diagonalized into two $2\times 2$ blocks:
\beq\label{51}
\lt[\begin{matrix} {1\over 1+\rho}+{\rho-1\over\rho+1} \lamb_k^2 & {\rho-1\over \rho+1}\lamb_k\lamb_{2n^2+1-k}\\
\lamb_k\lamb_{2n^2+1-k}& \lamb_{2n^2+1-k}^2
\end{matrix}\rt],\quad
\lt[\begin{matrix}
\lamb_k^2 &\lamb_k\lamb_{2n^2+1-k}\\
-{\rho-1\over \rho+1}\lamb_k\lamb_{2n^2+1-k} &-{1\over \rho+1}-{\rho-1\over\rho+1} \lamb_{2n^2+1-k}^2
\end{matrix}\rt].
\eeq
with the former of \eqref{51} acting on $\Re(z_1), \Re(z_2)$ and 
the latter acting on $\Im(z_1), \Im(z_2)$. 
In view of their similar structure, it suffices to focus on the former.

Calculation of the eigenvalues  with \eqref{40}-\eqref{41} gives 
\beq
\label{52}
{1\over 2(\rho+1)} \lt[{\rho} +{2} (1-\lamb_k^2)\pm \sqrt{\rho^2-4\lamb_k^2+4\lamb_k^4}\rt]
\eeq
which, with the + sign,  equals 1 at $k=2n^2$ (recall $\lamb_{2n^2}=0$).
Next we  show that 1 is the largest eigenvalue among all $k$ and $\rho\in [0,\infty)$.

Note that the radical in \eqref{52} is real for any $\lamb_k\in [0,1]$ iff $\rho\ge 1$.
Hence, for $\rho\ge 1$, the maximum eigenvalue is 1 and occurs at $k=2n^2$. 
In particular, \eqref{52} becomes
\beq
\label{53}
{1\over 4} \lt[1+2(1-\lamb_k^2)\pm |1-2\lamb_k^2|\rt]\quad \mbox{for} \quad\rho=1,  
\eeq
which achieves the maximum value 1 at $k=2n^2$ and the second largest
eigenvalue $1-\lamb_{2n^2-1}^2$ at $k=2n^2-1$. 
%As shown in \cite{FDR}, this maximum eigenvalue 1 corresponds to the global phase ambiguity and hence harmless. 

For $\rho<1$ and those $k$'s giving rise to some purely imaginary radical in \eqref{52},
we calculate the modulus of \eqref{52} and obtain the upper bound
\beq
\label{55}
\sqrt{1-\lamb_k^2\over 1+\rho}\le \sqrt{1\over 1+\rho}\le 1,\quad \forall k.
\eeq

In view of \eqref{52} and the leftmost term in \eqref{55}, it is clear that the second largest (in modulus) eigenvalue also occurs at $k=2n^2-1$. 

According Proposition 5.6 in \cite{FDR}, $\eta_1=b$ and hence $\arg\min_{\|\eta\|=1}|J_A(\eta)|$
occurs in the subspace spanned by $\{\eta_{2n^2}, \imath b\}$ where $\eta_{2n^2}$ is a real-valued null vector of \eqref{Bv}. It is easy to check that a real null vector of  \eqref{Bv} is also a
null vector of $K^*$ and hence of $P_\infty$. Since $x$ is in the range of $P_\infty$,
we conclude that $\arg\min_{\|\eta\|=1}|J_A(\eta)|$ contains $\pm \imath b/\|b\|$ and possibly elsewhere
since we do not know if $\lamb_k>0$ for $k>2n^2$ without additional conditions.

The proof is complete.

%The similar bounds hold for $\rho \gg \lamb_{2n^2-1}$: the largest eigenvalue  is 1 and the second largest has the asymptotic  $1-\lamb_{2n^2-1}^2/\rho$.

\section*{Acknowledgements} The research of A. F. is supported by  the US National Science Foundation  grant DMS-1413373. AF thanks National Center for
Theoretical Sciences (NCTS), Taiwan,   where  the present work was carried out, for the hospitality  during his visits  in June and August 2018.


\begin{thebibliography}{plain}
\bibitem{keyhole}

B. Abbey, K. A. Nugent, G. J. Williams, J.N. Clark, A.G. Peele , M.A. Pfeiffer, M. De Jonge \& I. McNulty, ``Keyhole coherent diffractive imaging." {\em Nat. Phys.} {\bf 4} (2008) 394-398.


%\bibitem{AH}
%P. F. Almoro and S. G. Hanson, `` Random phase plate for wavefront sensing via phase retrieval and a volume speckle field," {\em Appl. Opt.} {\bf 47} 2979-2987 (2008).

%  \bibitem{Alm}
%  P. F. Almoro, G. Pedrini, P. N. Gundu, W.  Osten,  S. G. Hanson,
 % ``Enhanced wavefront reconstruction by random phase modulation with a phase diffuser," {\em Opt. Laser Eng.}  {\bf 49} 252-257 (2011).
  
%\bibitem{Bates}
%R. H. T. Bates, ``Fourier phase problems are uniquely solvable in more than one dimension. 1. Underlying theory.", {\em Optik} {\bf 61(3)} (1982): 247-262. 

%\bibitem{Bodin}
%A. Bodin, P. D\`ebes and S. Najib, ``Irreducibility of hypersurfaces" 
%{\em Comm. Alg.} {\bf 37} (2009) 1884-1900. 

%\bibitem{BW}%
%M. Born and E. Wolf, {\em Principles of Optics}, 7-th edition, 
%Cambridge University Press,  1999 (pages 199-201). 


%\bibitem{BS}
%Yu. M. Bruck and L. G. Sodin, "On the ambiguity of the image reconstruction problem," {\em Opt. Commun.}{\bf 30},  304-308 (1979).

\bibitem{Poisson2}
L. Bian, J. Suo, J. Chung, X. Ou, C. Yang, F. Chen, and Q. Dai, ``Fourier ptychographic reconstruction using Poisson
maximum likelihood and truncated Wirtinger gradient," {\em Sci. Rep. } {\bf 6} (2016), 27384.

% \bibitem{BWW}
% R. Br\"{a}uer, U. Wojak, F. Wyrowski, O. Bryngdahl, `` Digital diffusers for optical
%holography," {\em Opt. Lett.} {\bf 16}, 1427Ð9 (1991).




\bibitem{overlap}%
O. Bunk, M. Dierolf, S. Kynde, I. Johnson, O. Marti, F. Pfeiffer, ``Influence of the overlap parameter on the convergence of the ptychographical iterative engine," {\em  Ultramicroscopy} {\bf 108} (5) (2008) 481-487.




\bibitem{CDI}%
H.N. Chapman \& K.A. Nugent, ``Coherent lensless X-ray imaging," {\em Nat. Photon.} {\bf 4} (2010) 833-839.

% "Microscopy: A new phase for X-ray imaging". {\em Nature.} {\bf 467} (2010), 409-410.



\bibitem{FDR}%
P. Chen and  A. Fannjiang, ``Phase retrieval with a single mask 
by Douglas-Rachford algorithms," {\em Appl. Comput. Harmon. Anal.} {\bf  44} (2018), 665-699. 

\bibitem{ptych-unique}
P. Chen and A. Fannjiang, `` Coded-aperture ptychography: uniqueness and reconstruction", {\em Inverse Problems} {\bf 34 }(2018) 025003.


\bibitem{AP-phasing}%
P. Chen, A. Fannjiang and G. Liu, ``Phase retrieval with one or two coded diffraction patterns by alternating projection with the null initialization," {\em J. Fourier Anal. Appl.} {\bf 24} (2018), 719-758. 

%\bibitem{Conway}
%H.H. Conway  \& N.J.A. Sloane, {\em Sphere Packings, Lattices and Groups}, 3rd ed., Berlin, New York: Springer-Verlag,  1999. 

\bibitem{ptycho10}%
M. Dierolf, A. Menzel, P. Thibault, P. Schneider, C. M. Kewish, R. Wepf, O. Bunk, and F. Pfeiffer, `` Ptychographic x-ray computed tomography at the nanoscale,'' {\em Nature} {\bf 467} (2010), 436-439.

%\bibitem{circle}
%M. Dierolf, P. Thibault, A. Menzel, C. Kewish, K. Jefimovs, I. Schlichting, K. Kong, O. Bunk, and F. Pfeiffer, ``Ptychographic coherent diffractive imaging of weakly scattering specimens,"{\em  New J. Phys.} {\bf 12} (2010), 035017.

\commentout{\bibitem{FAK}
C. Falldorf, M. Agour, C. v. Kopylow and R. B. Bergmann,
``Phase retrieval by means of a spatial light modulator in the Fourier domain of an imaging system," {\em Appl. Opt.} {\bf 49}, 1826-1830 (2010). 
}


\bibitem{DR}
J. Douglas and H.H. Rachford, "On the numerical solution of heat conduction problems in two and three space variables," {\em Trans.  Am.  Math.  Soc.}{\bf 82} (1956), 421-439.

%\bibitem{Horisaki2}
%R. Egami, R. Horisaki, L. Tian \& J. Tanida,
%``Relaxation of mask design for single-shot phase imaging with a coded aperture,"
%{\em Appl. Opt.} {\bf 55} (2016) 1830-1837. 

\bibitem{Elser}
V. Elser, 
``Phase retrieval by iterated projections," 
{\em J. Opt. Soc. Am. A}  {\bf 20} (2003), 40-55.

\bibitem{EB92}
J.	Eckstein	and D.P. Bertsekas, ``On the Douglas-Rachford splitting method and the proximal point algorithm for maximal monotone operators,"  {\em Math. Program. A} {\bf 55} (1992), 293-318.




%\bibitem{unique} %
%A. Fannjiang, \lq\lq Absolute uniqueness of phase retrieval with random illumination," {\em Inverse Problems} {\bf 28} (2012), 075008 (2012). 

\bibitem{blind-ptych}

A. Fannjiang \& P. Chen, ``Blind ptychography: uniqueness and ambiguities,"  {\tt arXiv:1806.02674}. 

%\bibitem{rpi} 
%A. Fannjiang and W. Liao, \lq\lq Phase retrieval with random phase illumination," {\em J. Opt. Soc. A} {\bf 29},   1847-1859 (2012).  


\bibitem{pum}%
A. Fannjiang and W. Liao, ``Fourier phasing with phase-uncertain mask," {\em Inverse Problems} 
{\bf 29} (2013) 125001.



\bibitem{PIE104}%
H.M.L. Faulkner and J.M. Rodenburg, ``Movable aperture lensless transmission microscopy: A novel phase retrieval algorithm," {\em Phys. Rev. Lett.}{\bf 93} (2004), 023903. 

\bibitem{PIE05}%
H.M.L. Faulkner and J.M. Rodenburg,
``Error tolerance of an iterative phase retrieval algorithm for moveable illumination microscopy,"
{\em Ultramicroscopy} {\bf 103:2} (2005), 153-164. 

%\bibitem{Fie82}
%Fienup, J. R. 1982 Phase retrieval algorithms|a comparison. {\em Appl. Opt. } 21, 2758-2769. 
%\bibitem{Fie97}
%Fienup, J. R. 1987 Reconstruction of a complex-valued object from the modulus of its Fourier-transform using a support constraint. {\em J. Opt. Soc. Am. A} 4, 118-123.

\bibitem{GM76}
D. Gabay and B.  Mercier. A dual algorithm for the solution of nonlinear variational problems via finite element approximation. {\em Computers \& Mathematics with Applications} {\bf 2(1)} (1976), 17-40.

\bibitem{EM-ptych1}
S. Gao, P. Wang,  F.  Zhang, G. T. Martinez, P. D. Nellist, X. Pan \& A. I. Kirkland,
``Electron ptychographic microscopy for three-dimensional imaging,"
{\em Nat. Comm.} {\bf 18}(2017) 163. 

\bibitem{GM75}
R. Glowinski and A. Marroco. ``Sur l'approximation, par \'el\'ements finis d'ordre un, et la r\'esolution, par p\'enalisation-dualit\'e d'une classe de probl\'emes de dirichlet non lin\'eaires," 
{\em ESAIM: Mathematical Modelling and Numerical Analysis}, {\bf 9}(1975), 41-76.

\bibitem{noise}
P. Godard, M. Allain, V. Chamard, and J. Rodenburg, ``Noise models for low counting rate coherent diffraction imaging," {\em  Opt. Express} {\bf 20} (2012), 25914-25934.

\bibitem{Fie08}%
M. Guizar-Sicairos,  J.R. \& Fienup,  ``Phase retrieval with transverse translation diversity: a nonlinear optimization approach." {\em Opt. Express} {\bf 16} (2008), 7264-7278.

%\bibitem{Hayes}%
%M. Hayes, ``The reconstruction of a multidimensional sequence from the phase or magnitude of its Fourier transform," {\em IEEE Trans. Acoust. Speech Signal Process.} {\bf 30} (1982), 140-154. 

\bibitem{Hesse}%
R. Hesse, D. R. Luke, S. Sabach, and M.K. Tam,
``Proximal heterogeneous block implicit-explicit method and application to blind ptychographic diffraction imaging," {\em SIAM J. Imag. Sci.} {\bf 8} (2015) pp. 426-457. 

%\bibitem{scan2}
%X. Huang, H. Yan, M. Ge, H.  \"Ozt\"urk, , E. Nazaretski, I. K. Robinson, \& Yong S. Chu, ``Artifact mitigation of ptychography integrated with on-the-fly scanning probe microscopy," {\em Appl. Phys. Lett.} {\bf 111} (2017) 023103. 


\commentout{
\bibitem{Hoppe1}%
W. Hoppe, ``Beugung im inhomogenen Primärstrahlwellenfeld. I. Prinzip einer Phasenmessung von Elektronenbeungungsinterferenzen". {\em Acta Cryst.. A.} {\bf 25} (4) (1969) 495.

\bibitem{Hoppe2}%
W. Hoppe, ``Beugung im inhomogenen Primärstrahlwellenfeld. III. Amplituden- und Phasenbestimmung bei unperiodischen Objekten", {\em Acta Cryst. A}. {\bf 25} (4) (1969) 508.
}

\bibitem{Horisaki1}
R. Horisaki, R. Egami \& J. Tanida, ``Single-shot phase imaging with randomized light (SPIRaL)". {\em Opt. Express} {\bf 24} (2016), 3765-3773.

\bibitem{scan1}
X. Huang, H. Yang, R. Harder, Y. Hwu, I.K. Robinson \& Y.S. Chu,``Optimization of overlap uniformness for ptychography," {\em Opt. Express} {\bf 22} (2014), 12634-12644.



%\bibitem{meta1}%
% J. Hunt, T. Driscoll, A. Mrozack, G. Lipworth, M. Reynolds, D. Brady, and D. R. Smith, "Metamaterial Apertures for Computational Imaging," {\em Science} {\bf 339} (2013), 310-313.
 
% \bibitem{Iwen}%
%M. A. Iwen, A. Viswanathan, and Y. Wang, 
%``Fast phase retrieval from local correlation measurements,"
%{\em SIAM J. Imaging Sci.} {\bf 9(4)}(2016), pp. 1655-1688. 

\bibitem{EM-ptych2}
Y. Jiang,  Z.  Chen,  Y. Han, P. Deb, H. Gao, S. Xie, P. Purohit, M. W. Tate, J. Park, S. M. Gruner, V.  Elser \& D. A. Muller
``Electron ptychography of 2D materials to deep sub-angstrom resolution," {\em Nature} {\bf 559} (2018) 343-349. 


\bibitem{mask2} C. Kohler, F. Zhang and W. Osten, 
``Characterization of a spatial light modulator and its application in phase retrieval,"
{\em Appl. Opt.} {\bf 48} (2009) 4003-4008.

\bibitem{noise3} A.P. Konijnenberg, W.M.J. Coene and H.P. Urbach,
``Model-independent noise-robust extension of ptychography,"
{\em Opt. Exp.}{\bf 26} (2018) 5857-5874.


\bibitem{FPM15}
C. Kuang, Y. Ma, R. Zhou, J. Lee, G. Barbastathis, R. R. Dasari, Z. Yaqoob \& P.T.C. So, ``Digital micromirror device-based laser-illumination Fourier ptychographic microscopy," {\em Opt. Exp. }{\bf 23}(2015), 26999-27010. 

\bibitem{LP} G. Li \& T. K. Pong,
``DouglasÐRachford splitting for nonconvex optimization with application to nonconvex feasibility problems,"
{\em Math. Program.} {\bf A 159} (2016), 371-401

\bibitem{LM79}
P.-L. Lions and B. Mercier,``Splitting algorithms for the sum of two nonlinear operators," {\em SIAM  J. Num.  Anal.} {\bf 16} (1979), 964-979.


%\bibitem{meta3}%
% G. Lipworth, A. Mrozack, J. Hunt, D.L. Marks, T. Driscoll, D. Brady, "Metamaterial apertures for coherent computational imaging on the physical layer," {\em J. Opt. Soc. Am. A} {\bf 30} (2013), 1603-1612.

\bibitem{supres-PIE}%
A. M. Maiden, M. J. Humphry, F. Zhang and J. M. Rodenburg, ``Superresolution imaging via ptychography,"
{\em J. Opt. Soc. Am. A} {\bf 28} (2011), 604-612. 

\bibitem{rPIE17}
A. M. Maiden, D. Johnson and P. Li, 
``Further improvements to the ptychographical iterative engine," {\em Optica}
{\bf 4} (2017), 736-745.

\bibitem{ptycho-rpi}%
A.M. Maiden,	 G.R. Morrison,	 B. Kaulich,	 A. Gianoncelli	 \& J.M. Rodenburg,
``Soft X-ray spectromicroscopy using ptychography with randomly phased illumination,"
{\em Nat. Commun.} {\bf 4} (2013), 1669. 


\bibitem{ePIE09}%
A.M. Maiden \& J.M. Rodenburg, ``An improved ptychographical phase retrieval algorithm for diffractive imaging," {\em Ultramicroscopy} {\bf 109} (2009), 1256-1262.

\bibitem{ePIE10}%
A. M. Maiden, J. M. Rodenburg and M. J. Humphry,
``Optical ptychography: a practical implementation with useful resolution," {\em Opt. Lett.}
{\bf 35} (2010), 2585-2587. 

%\bibitem{Miao}
%J. Miao, D. Sayre, and H. N. Chapman. ``Phase retrieval from the magnitude of the Fourier transforms of nonperiodic objects.", {\em J. Opt. Soc. Am. A} {\bf15(6)} (1998):1662-1669.

\bibitem{zone-plate}
G.R. Morrison, F. Zhang, A. Gianoncelli and I.K. Robinson,
``X-ray ptychography using randomized zone plates," {\em Opt. Exp.} {\bf 26} (2018) 14915-14927.

\bibitem{parallel}
Y. S. G. Nashed, D. J. Vine, T. Peterka, J. Deng,
R. Ross  and C. Jacobsen, 
``Parallel ptychographic reconstruction," {\em Opt. Express} {\bf 22} (2014) 32082-32097. 

%\bibitem{Popescu}
%M. Mir, B. Bhaduri, R. Wang, R. Zhu and
%G.  Popescu, ``Quantitative Phase Imaging", {\em Progress in Optics} {\bf 57} 133-217 (2012)
\bibitem{EM0}
P.D. Nellist, B.C. McCallum \& J.M. Rodenburg, ``Resolution beyond the information limit in transmission electron microscopy, " {\em Nature} {\bf 374} (1995) 630-632. 

\bibitem{EM1}
P.D. Nellist and J.M. Rodenburg, `` Electron ptychography. I. Experimental demonstration beyond the conventional resolution limits," {\em Acta Cryst. A} {\bf 54} (1998), 49-60. 

\bibitem{Nugent}
K.A. Nugent, ``Coherent methods in the X-ray sciences, " {\em Adv. Phys.}{\bf 59} (2010) 1-99. 

\bibitem{Yang14}
X. Ou, G. Zheng and C. Yang, ``Embedded pupil function recovery for Fourier ptychographic microscopy,"
{\em Opt. Exp.} {\bf 22} (2014) 4960-4972. 

%\bibitem{ptycho-alg}
%J. Qian, C. Yang, A. Schirotzek, F. Maia, and S. Marchesini, Efficient algorithms for ptychographic
%phase retrieval, Inverse Probl. Appl., Contemp. Math, 615 (2014), pp. 261Ð280.


\bibitem{random-aperture}
X. Peng, G.J. Ruane, M.B. Quadrelli \& G.A. Swartzlander, `` Randomized apertures: high resolution imaging in far field," {\em Opt. Express} {\bf 25} (2017) 296187. 

\bibitem{Pfeiffer}
F. Pfeiffer, ``X-ray ptychography," {\em Nat. Photon.} {\bf 12} (2017) 9-17. 

%\bibitem{EM2}
%T. Plamann \& J.M. Rodenburg, ``Electron ptychography. II. Theory of three-dimensional propagation effects," {\em Acta Cryst.  A} {\bf 54} (1998), 61-73. 

\bibitem{Rod08}
J.M. Rodenburg, ``Ptychography and related diffractive imaging methods," {\em Adv. Imaging  Electron Phys.} {\bf 150} (2008) 87-184. 


\bibitem{PIE204}%
J.M.  Rodenburg and H.M.L.  Faulkner, ``A phase retrieval algorithm for shifting illumination". {\em Appl. Phys.  Lett.} {\bf 85} (2004), 4795.

%\bibitem{SA}
%A. Sakdinawat \& D. Attwood, `` Nanoscale X-ray imaging." {\em Nat. Photon.} {\bf 4} (2010) 840-848.

\bibitem{RCM}
M.H. Seaberg, A. d'Aspremont \& J.J. Turner, ``Coherent diffractive imaging using randomly coded masks," {\em Appl. Phys. Lett.} {\bf 107} (2015) 231103.

\bibitem{FROG}
P. Sidorenko, O. Lahav, Z. Avnat \& O. Cohen,
``Ptychographic reconstruction algorithm for frequency-resolved optical gating: super-resolution and supreme robustness," {\em Optica} {\bf 3} (2016) 1320-1330.

\bibitem{ptych-theory}
J.C. Silva and A. Menzel, 
``Elementary signals in ptychography," {\em Opt. Exp.} {\bf 23} (2015) 33812-33821. 




\bibitem{diffuser}
M. Stockmar, P. Cloetens, I. Zanette, B. Enders, M. Dierolf, F. Pfeiffer, and P. Thibault,``Near-field ptychography: phase retrieval for inline holography using a structured illumination," {\em Sci. Rep.} {\bf 3} (2013), 1927.


\bibitem{random-coding}
D. Sylman, V. Mic\'o, J. Garc'a \& Z. Zalevsky,
``Random angular coding for superresolved imaging,"
{\em Appl. Opt.}{\bf 49} (2010), 4874-4882. 

\bibitem{probe09}%
P. Thibault, M. Dierolf, O. Bunk, A. Menzel, F. Pfeiffer, 
``Probe retrieval in ptychographic coherent diffractive imaging,"
{\em Ultramicroscopy}
{\bf 109} (2009),  338-343.

\bibitem{DM08}%
P. Thibault, M. Dierolf, A. Menzel, O. Bunk, C. David, F. Pfeiffer, ``High-resolution scanning X-ray diffraction microscopy", {\em Science} {\bf 321} (2008),  379-382.

\bibitem{ML12}
P. Thibault and M. Guizar-Sicairos, `` Maximum-likelihood refinement for coherent diffractive imaging". {\em New J. Phys.} {\bf 14} (2012), 063004.

\bibitem{Tian15}
L. Tian, Z. Liu, L-H Yeh, M. Chen, J. Zhong, L. Waller, ``Computational illumination for high-speed in vitro Fourier ptychographic microscopy," {\em Optica} {\bf 2} (2015) 904-911. 

\bibitem{terahz}
L. Valzania, T. Feurer, P. Zolliker \& E. Hack,
``Terahertz ptychography," {\em Opt. Lett.} {\bf 43} (2018), 543-546. 

%\bibitem{meta2}%
% C. M. Watts, D. Shrekenhamer, J. Montoya, G. Lipworth, J. Hunt, T. Sleasman, S. Krishna, D. R. Smith, and W. J. Padilla, ``Terahertz compressive imaging with metamaterial spatial light modulators," {\em Nat. Photon.}  {\bf 8} (2014), 605-609.
 
 \bibitem{ADM}
Z. Wen, C. Yang, X. Liu and S. Marchesini,  ``Alternating direction methods for classical and ptychographic phase retrieval," {\em  Inverse Problems} {\bf 28} (2012),  115010.


\bibitem{Waller15}
L. Yeh, J. Dong, J. Zhong, L.Tian, M. Chen, G. Tang, M. Soltanolkotabi,  and L.  Waller,
``Experimental robustness of Fourier  ptychography phase retrieval algorithms," {\em Optics Express}
{\bf 23} (2015) 33214-33240.


\bibitem{Fucai2}
F. Zhang, B. Chen, G. R. Morrison, J. Vila-Comamala, M. Guizar-Sicairos
\& I. K. Robinson, ``Phase retrieval by coherent modulation imaging,"
{\em Nat. Comm.} {\bf 7} (2016):13367. 


\bibitem{Fucai1}
F. Zhang, G. Pedrini \& W. Osten,
``Phase retrieval of arbitrary complex-valued fields through aperture-plane modulation," {\em Phys. Rev. A} {\bf 75} (2007), 043805.  


%\bibitem{Spread}%
%X. Zhang, J. Jiang, B. Xiangli, G.R. Arce,
%``Spread spectrum phase modulation for coherent X-ray diffraction imaging,"
% {\em Optics Express} {\bf 23} (2015), 25034-25047.
 
\bibitem{noise2}
Y. Zhang, P. Song, Q. Dai, ``Fourier ptychographic microscopy using a generalized Anscombe transform approximation of the mixed Poisson-Gaussian likelihood,"
{\em Opt. Exp.} {\bf 25} (2017) 168-179. 

 \bibitem{FPM13}
 G. Zheng, R. Horstmeyer and C.Yang, ``Wide-field, high-resolution Fourier ptychographic microscopy," {\em Nature Photonics} {\bf 7} (2013), 739-745.
 
 \bibitem{adaptive}
C. Zuo, J. Sun and Q. Chen, ``Adaptive step-size strategy for noise-robust Fourier ptychographic microscopy," {\em Optics Express} {\bf 24} (2016), 20724-20744.
 
\end{thebibliography}
\end{document}